\def\bbR{\mathbb{R}}
\def\wce{\mathbf{wce}}
\def\wge{\mathbf{wge}}
\def\err{\mathbf{err}}
\def\wb{\mathbf{Weibull}}
\newtheorem{theorem}{Theorem}
\newtheorem{corollary}{Corollary}
\def\eqref#1{equation~\ref{#1}}
\def\1{\bm{1}}
\DeclareMathAlphabet{\mathsfit}{\encodingdefault}{\sfdefault}{m}{sl}
\SetMathAlphabet{\mathsfit}{bold}{\encodingdefault}{\sfdefault}{bx}{n}
\DeclareMathOperator*{\argmax}{arg\,max}
\DeclareMathOperator*{\argmin}{arg\,min}
\newtheorem{assumption}{Assumption}
\newtheorem{definition}{Definition}
\newtheorem{lemma}{Lemma}
\definecolor{mydarkblue}{rgb}{0,0.3,0.5}
\title{\textbf{Why does Throwing Away Data Improve Worst-Group Error?}}
\def\andd{%
        \end{tabular}\hfil\linebreak[4]\hfil%
        \begin{tabular}[t]{c}\ignorespaces%
}
\date{}
\author{
  Kamalika Chaudhuri\thanks{Equal contribution} \\
  FAIR (Meta AI) and UC San Diego\\
  \and
  Kartik Ahuja\footnotemark[1]\\
  FAIR (Meta AI)\\
  \andd
  Martin Arjovsky\\
  Inria\\
  \and
  David Lopez-Paz\\
  FAIR (Meta AI)\\
}
\begin{document}
\maketitle

\begin{abstract}
  When facing data with imbalanced classes or groups, practitioners follow an intriguing strategy to achieve best results. They throw away examples until the classes or groups are balanced in size, and then perform empirical risk minimization on the reduced training set.
  This opposes common wisdom in learning theory, where the expected error is supposed to decrease as the dataset grows in size.
  In this work, we leverage extreme value theory to address this apparent contradiction.
  Our results show that the tails of the data distribution play an important role in determining the worst-group-accuracy of linear classifiers.
  When learning on data with heavy tails, throwing away data restores the geometric symmetry of the resulting classifier, and therefore improves its worst-group generalization. 
\end{abstract}

\section{Introduction}

Imbalances are ubiquitous in real-world data.
On the one hand, class imbalance is common in rare-event data such as medical diagnosis, intrusion detection, spam classification, or credit fraud \citep{johnson2019survey}.
On the other hand, imbalances may also exist within the classes of our problem, if each of these comprises hidden groups with different proportions.
For instance, in an image classification dataset with balanced classes, most pictures for each class are commonly taken in wealthy countries~\citep{rojas2022the}.
In all of these situations, simply minimizing average training error may result in classifiers that perform very well on majority groups, while having poor  performance on minority groups~\citep{buolamwini2018gender}.
For example, the $1\%$ test error of a classifier could mean that \emph{all} examples from a particular minority group are misclassified.
As such, this work is concerned with the `worst-group-error' of classification rules on imbalanced data.

While many sophisticated methods have been proposed to address imbalanced classification problems~\citep{sagawa2019distributionally}, none of these offer a clear advantage to simple subsampling~\citep{IdrissiSimple2022}.
In subsampling, we throw away data from large groups until they match the smallest group in size.
Then, we perform empirical risk minimization on the (often drastically) reduced data.
This is a surprising finding, as classical wisdom in learning theory tells us that the expected error of classifiers should decrease as training data grows in size.
As it stands, there is no theoretical explanation as to why the popular strategy of subsampling works so well when addressing imbalanced classification problems.
In particular, commonplace PAC-learning results~\citep{haussler1990probably} upper-bound the error of the worst of the classifiers correctly classifying all of the training data. While this analysis can be extended to other metrics such as worst-group error, since throwing away examples increases the amount of such compatible classifiers, the error bound can only worsen, falling short in explaining the empirical benefits of data subsampling.

This work is an initial effort to resolve this apparent tension between theory and practice.
More specifically, we focus our analysis on linear maximum-margin binary classifiers~\citep{steinwart2008support}.
Under this setup, it is well known~\citep{BB00} that maximum-margin classifiers are equidistant from the convex hulls delineated by each of the two classes.
In turn, the shape of these convex hulls is determined by the extremal properties of the probability distributions of each class.
These extremal properties are the subject of study of a branch of mathematics called extreme value theory (EVT, \citet{de2007extreme}).
By borrowing results from EVT, we show that the location of the maximum-margin classifier depends on the tail properties of the data distribution.
In particular, when facing data distributions with heavy tails, we observe geometric imbalances when groups differ in sample size.
These imbalances skew the maximum-margin classifier, leading to suboptimal worst-group-error. This ends up being the reason why subsampling works, as balancing groups in size  restores geometric symmetry across groups.

\begin{figure}
     \centering
     \includegraphics[trim=0 1em 0 0, width=\textwidth]{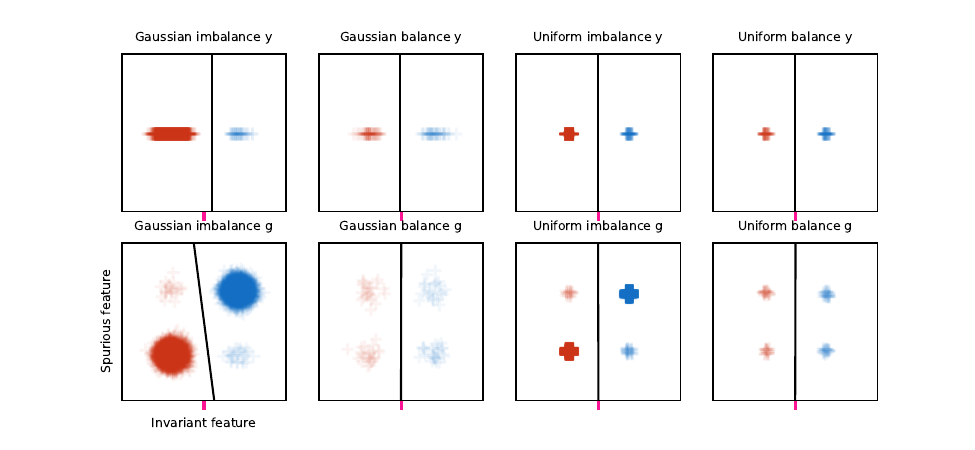}
     \caption{%
      \small{ Illustration of the phenomena studied in this paper.
       The tails of the data distribution can bias the maximum-margin classifier, and throwing away is a tool to restore balance and achieve the optimal solution.
       The top four plots illustrate a classification problem with two imbalanced classes.
       In the Gaussian case, throwing away data to balance class sizes leads to the optimal maximum-margin classifier, which is aligned with magenta checkmark.
       In the Uniform case, throwing away data does not make any difference.
       The bottom four plots illustrate a binary classification problem, where each class is balanced but contains two groups of different proportions.
       Once again, in the Gaussian case, throwing away data to balance group sizes leads to the optimal maximum-margin classifier.
       Subsampling does not provide any benefit in the Uniform case.}
     }
     \label{fig:main_figure}
\end{figure}

We illustrate these phenomena in Figure~\ref{fig:main_figure}.
When dealing with imbalanced classes, we find that the bias term in the maximum-margin classifier is shifted towards the small class.
When reducing the data with subsampling to balance the two classes, the maximum-margin classifier converges to the  unbiased one with optimal worst-group-error. 
When dealing with imbalanced groups, the direction of the maximum-margin classifier is biased towards the smaller group, increasing its error in test examples from the tails of the distribution.
Figure~\ref{fig:main_figure} shows that, while these phenomena happen when dealing with data distributions with Gaussian tails, they are inexisting when dealing with Uniform data distributions, with no tails.

\paragraph{Contributions}
This work proposes a novel theoretical analysis to understand imbalanced classification, as well as to justify the popularity of data subsampling in this regime (Section~\ref{sec:preliminaries}).
To this end, we introduce the use of extreme value theory to construct a new type of generalization analysis that focusses on distributional tails (Section~\ref{sec:evt}).
These results allow us to characterize the impact of distributional tails on the worst-group error for both ERM and subsampling strategies.
We conduct separate analyses to understand the case of imbalanced classes (Section~\ref{sec:classes}) and groups (Section~\ref{sec:groups}).
In particular,
\begin{itemize}
  \item Subsampling outperforms ERM in worst-group-error when learning from imbalanced classes with tails (such as Gaussians, Theorem \ref{thm:gaussians}), while it makes no difference when learning from distributions without tails (such as Uniforms, Theorem \ref{thm:weibull1d}).
  \item Similar results follow for balanced classes but imbalanced groups (Theorem  \ref{thm: spurious_gumbel} for groups with tails, Theorem  \ref{thm: invariant_weibull} for groups without tails).
  \item We extend these results to the high-dimensional case where there exist a multitude of ``noise'' dimensions polluting the data (Theorems \ref{thm: imb_class_highd},  \ref{thm: spurious_d}, \ref{thm: invariant_d}) and provide empirical support for our theories using Waterbirds and CelebA, the two most common datasets to benchmark worst-group-error (Section \ref{sec: expmts}).
\end{itemize}

\section{Our learning setup}
\label{sec:preliminaries}

We consider the binary classification of examples $(x, y)$, where instances $x \in \bbR^d$ and labels $y \in \{ -1, +1\}$. We assume that instances with label $y$ are drawn independently from a class-conditional distribution that we denote by $D^y$. 
We look at two settings---imbalanced classes and imbalanced groups. In the setting of imbalanced classes, there is a majority class with $y = -1$ and a minority class with $y = +1$. Training data $\mathcal{S}$ consists of $p$ samples drawn independently and identically distributed (iid) from the majority class-conditional distribution, and $m$ from the minority class-conditional distribution. We assume $p$ is much larger than $m$.
For imbalanced groups, we extend each example to be a triplet $(x, y, a)$, where $a \in \{-1, 1\}$ is a binary attribute. Label-attribute combinations induce four groups $g = (y, a)$. Then, the data distribution consists of $p$ points drawn iid from the majority groups $g = (1, 1)$ and $(-1, -1)$, and $m$ points drawn from the minority groups $g = (-1, 1)$ and $(1, -1)$. Once again, $p$ is much larger than $m$.
While describing data distributions, we use the notation $D(\rho)$ to denote a distribution with mean $\rho$.

Commonly, in machine learning we are interested in the generalization error of a classifier $\theta$ on some data distribution $D$,  $\err(\theta) = \Pr_D(\theta(x) \neq y)$, where we drop the subscript $D$ when this causes no confusion.
For problems with imbalanced classes, we are chiefly interested in measuring \emph{worst-class error}:  $\wce(\theta) = \max_{\tilde{y}}\, \Pr(\theta(x) \neq y \mid y = \tilde{y}).$ For imbalanced groups, the metric of interest is the {\em{worst-group error}}: $  \wge(\theta) = \max_{\tilde{g}}\, \Pr(\theta(x) \neq y \mid g = \tilde{g}).$

To enable a fine-grained analysis, we focus on a linear classifier, namely $\theta(x) = w^\top x + b$, where $w$ is the weight and $b$ is the bias.  We assume that the training data is linearly separable with high probability---even though the entire distribution may not be. This mirrors what happens in deep learning, where zero-training-error is easy to achieve, while zero-test-error may be elusive. 
The usual way to train a classifier is to follow the Empirical Risk Minimization (ERM) principle to learn the parameters $(w, b)$ separating the training data $\mathcal{S}$.
This amounts to finding $(w, b)$ that minimizes 
\begin{equation}
  L(w, b, \mathcal{S}) = \frac{1}{|\mathcal{S}|} \sum_{i=1}^{|\mathcal{S}|} \ell\left((w^\top x_i + b) y_i\right),
  \label{eq:erm}
\end{equation}
where $\ell : \mathbb{R} \to [0, \infty]$ is a loss function that penalizes classification errors. The most popular loss for classification in deep learning is the logistic loss $\ell(u) = \log(1 + \exp(-u))$.

\paragraph{Support vector machines}
When our training data is linearly separable,~\cite{soudry} has shown that the logistic loss minimizer converges to the well-known maximum-margin classifier also referred to as the linear hard-margin Support Vector Machine~\citep[SVM]{vapnik1999nature}. The sequel therefore analyzes the properties of the SVM separator directly. The SVM separator characterized by $w^*$ and $b^*$ is known to be equidistant from the convex hulls of the positive and negative classes~\citep{BB00}; it can be obtained by solving:
\begin{equation}
\begin{split}
&     w^{*} = \argmax_{\|w\|=1}  \Big(\inf_{x  \in B} w^{\top}x - \sup_{x  \in A} w^{\top}x  \Big), \\ 
& b^{*} = -\frac{1}{2}\Big(\inf_{x  \in B} w^{\top}x + \sup_{x  \in A} w^{\top}x  \Big), 
\end{split}
\end{equation}
where $B$ is set of positively labeled points and $A$ is the set of negatively labeled points. Our analysis connects the optimal $w^*$ and $b^*$ to the tail properties of the data distribution.

\paragraph{ERM versus subsampling}
We will compare two algorithms.
On the one hand, given a training dataset with imbalanced classes or groups, the ERM algorithm directly solves the SVM optimization problem on the entire dataset to get the classifier $\theta_{\text{erm}}$.
On the other hand, the subsampling algorithm solves the SVM problem on a {\em{subsample}} of the training data that balances out classes or groups. Specifically, subsampled training sets consists of the entire minority class or group, and a randomly drawn sample of size $m$ from the majority class or group. Solving the SVM on this reduced dataset gives us the classifier $\theta_{\text{ss}}$.

\paragraph{Why are PAC bounds not enough?}
We will show that subsampling leads in certain cases to a strictly better worst-class or worst-group error compared with plain ERM on the entire training data.
In other words, throwing away data strictly helps! 
This kind of result cannot be directly obtained through standard PAC-style analysis in learning theory. Basic PAC-style analysis in the realizable case provides an upper bound on the worst-case error of any classifier in the {\em{version space}}---which is the set of all classifiers that perfectly classify the training points. While this analysis can be adapted to other metrics (such as wce and wge), throwing away data expands the version space, and hence the worst-case error of this expanding set must also increase.

To address this apparent contradiction, the sequel relies on the geometric properties of the maximum-margin classifier. We will use the fact that the maximum-margin separator is equidistant from the convex hulls of the two classes~\cite{BB00}, and geometrically analyze properties of these convex hulls. This analysis leverages the fact that properties of the convex hull of a set of random points are related to extreme value statistics~\cite{BB00} of the distribution. To this end, our analysis makes use of extreme value theory, a branch of probability theory concerned with maxima and minima of distributions. 

\section{Basics of extreme value theory}
\label{sec:evt}

The branch of mathematics studying extreme deviations is known as Extreme Value Theory~\citep[EVT]{extremalbook}. We borrow tools from EVT to analyze the worst-group error in SVMs, which is a first in the research literature.
Specifically, we will make use of a central result from EVT: the Fisher-Tippett-Gnedenko  theorem~\cite{extremalbook}. Suppose we have $n$ iid examples $X_1, \ldots, X_n$ drawn from an fixed distribution with CDF $F$; the Fishet-Tippett-Gnedenko theorem  characterizes the maximum value $M_n$ of $X_1, \ldots, X_n$ provided the distribution $F$ belongs to one of the following types. 

\begin{definition} 
  Let $x_F = \sup_x \{x \mid F(x) < 1 \}$ be the largest value not saturating $F$. Then $F$ is of the family:
	\begin{itemize}
    \item \emph{Weibull}, if $x_F < \infty$ and $\lim_{h \rightarrow 0} \frac{1 - F(x_F - xh)}{1 - F(x_F - h)} = x^{\alpha}$, for $\alpha > 0$ and $x > 0$.
    \item \emph{Gumbel}, if 
		$\lim_{t \rightarrow x_F} \frac{1 - F(t + x g(t))}{1 - F(t)} = e^{-x}$, for all real $x$ where $g(t) = \frac{ \int_{t}^{x_F} (1 - F(u)) du }{1 - F(t)}$ for $t < x_F$. 
	\end{itemize}
  \label{def:threetails}
\end{definition}

Fisher-Tippett-Gnedenko theorem also provides the characterization for a third type of distribution referred to as the Frechet  distributions; in this work we only study Gumbel and Weibull distributions. Gumbel-type distributions have light tails, and include Gaussians and Exponentials. Weibull-type distributions have finite maximum points, and include Uniforms. Frechet-type distributions have heavy tails, including the Pareto and Frechet distributions. Some distributions, such as the Bernoulli, do not belong to any of these types.  If the distribution $F$ is of either of these three types, then the Fisher-Tippett-Gnedenko theorem shows that there exist sequence $a_n$ and $b_n$ such that the CDF $F'$ of $(M_n - b_n)/a_n$ converges to a limit distribution $G$.
\begin{equation*}
F'\bigg(\frac{M_n - b_n}{a_n}\bigg) \rightarrow G,
\end{equation*}
where $a_n$, $b_n$ and $G$ take values that depend on the specific distribution $F$. Finally, we define the ``tail function'' of a distribution with CDF $F$ to measure the spread of its tail.

\begin{definition}[Tail Function]
The tail function $U$ of a distribution with CDF $F$ is $U(t) = F^{-1}(1 - 1/t)$. 
\end{definition}

Observe that the tail function $U(t)$ is an increasing function of $t$; in addition, by definition, we have $F(U(t)) = 1 - 1/t$. We now have all the necessary tools to introduce the Fisher-Tippet-Gnedenko theorem formally.

\begin{theorem}[Fisher-Tippett-Gnedenko Theorem]

	\begin{enumerate}
		\item If $F$ is of the Frechet type, then $G(x)$ is the Frechet distribution with the following CDF:
			\begin{eqnarray*}
				G(x) = & \exp(-x^{-\alpha}), \quad & x \geq 0 \\
				= & 0, \quad & x < 0.
			\end{eqnarray*}
			Additionally, $a_n = U(n)$ and $b_n = 0$. 
		\item If $F$ is of the Weibull type, then $G(x)$ is the reverse Weibull distribution with the following CDF:
			\begin{eqnarray*}
				G(x) =& 1, & x \geq 0 \\
        = & \exp(-(-x)^{\alpha}), & x < 0.
			\end{eqnarray*}
			Additionally, $a_n = x_F - U(n)$ and $b_n = x_F$.
		\item If $F$ is of the Gumbel type, then $G(x)$ is the Gumbel distribution with the following CDF:
		\[ G(x) = \exp(-e^{-x}), \quad x \in [-\infty, \infty]. \]
      Additionally, $a_n = g(U(n))$ and $b_n = U(n)$. 
	\end{enumerate}
  \label{thm:ftg}
\end{theorem}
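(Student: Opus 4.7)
}

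The unifying idea is to rewrite $\Pr(M_n \leq a_n x + b_n) = F(a_n x + b_n)^n$ and express it in the form $(1 - s_n(x))^n$ where $s_n(x) = 1 - F(a_n x + b_n)$. If I can show that $n\, s_n(x) \to -\log G(x)$ pointwise on the support of $G$, then the elementary bound $(1 - s_n)^n = \exp(n \log(1 - s_n)) = \exp(-n s_n(1 + o(1)))$ yields $F^n(a_n x + b_n) \to G(x)$. So the proof reduces, in each of the three cases, to evaluating $\lim_{n \to \infty} n(1 - F(a_n x + b_n))$ using the tail conditions from Definition~\ref{def:threetails}.

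\paragraph{Frechet case.} Here $b_n = 0$ and $a_n = U(n) = F^{-1}(1 - 1/n)$, so by definition $1 - F(a_n) = 1/n$ and $a_n \to x_F = \infty$. I would write
\begin{equation*}
n(1 - F(a_n x)) = \frac{1 - F(a_n x)}{1 - F(a_n)}
\end{equation*}
and apply the Frechet property with $t = a_n \to \infty$ to get the pointwise limit $x^{-\alpha}$ for $x > 0$. For $x < 0$ monotonicity forces $1 - F(a_n x) \to 1$, so $F^n(a_n x) \to 0$. Composing with $\exp(-\,\cdot\,)$ yields the Frechet CDF.

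\paragraph{Weibull case.} Here $a_n = x_F - U(n)$ and $b_n = x_F$, so $a_n x + b_n = x_F - (-x) a_n$. Setting $h_n = a_n = x_F - U(n) \to 0^+$, for $x < 0$ I would write
\begin{equation*}
n(1 - F(x_F - (-x) h_n)) = \frac{1 - F(x_F - (-x) h_n)}{1 - F(x_F - h_n)}
\end{equation*}
since $1 - F(x_F - h_n) = 1 - F(U(n)) = 1/n$. The Weibull tail condition with $h = h_n \to 0^+$ then gives the limit $(-x)^\alpha$, producing the reverse Weibull CDF. For $x \geq 0$ one has $a_n x + b_n \geq x_F$, so $F^n = 1$ trivially.

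\paragraph{Gumbell case.} This is the subtle one and I expect the real work to live here. With $b_n = U(n)$ and $a_n = g(U(n))$, let $t_n = U(n) \to x_F$. Then
\begin{equation*}
n(1 - F(t_n + x g(t_n))) = \frac{1 - F(t_n + x g(t_n))}{1 - F(t_n)},
\end{equation*}
and the Gumbell condition (applied with $t = t_n$) produces the limit $e^{-x}$ for each fixed real $x$, yielding $G(x) = \exp(-e^{-x})$.

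\paragraph{Main obstacle.} The routine algebra above hides two technical points. First, passing from the pointwise convergence of $n(1-F(a_n x + b_n))$ to the convergence of $(1 - s_n(x))^n$ requires control of the error term in $\log(1-s_n) = -s_n(1+o(1))$, which is standard provided $s_n(x) \to 0$; I would verify this separately in each case from $a_n, b_n$. Second, and more seriously, the Gumbell step invokes Definition~\ref{def:threetails} with a moving argument $t = t_n$, so what is really needed is \emph{uniform} convergence in $x$ on compacts together with the fact that $g$ is self-neglecting (i.e.\ $g(t + x g(t))/g(t) \to 1$). Establishing this uniformity, via the Pickands--Balkema--de Haan machinery or equivalently by integrating the von Mises relation $g(t) = \int_t^{x_F} (1-F(u))\,du / (1-F(t))$, is the delicate part; once it is in hand, the three cases assemble into the theorem exactly as above.
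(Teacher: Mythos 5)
The paper does not prove this theorem: it states it as a classical result from extreme value theory and cites \cite{extremalbook}. So there is no paper proof against which to compare your sketch; what you have written is the standard Gnedenko-style derivation, and it is essentially sound.

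Two remarks on the sketch itself. First, you describe the self-neglecting property of $g$ and uniform convergence in $x$ on compacts as the ``delicate part'' of the Gumbell case. In fact, for the statement as given in Theorem~\ref{thm:ftg} you do not need uniformity. The Gumbell tail condition asserts, \emph{for each fixed real $x$}, that $\frac{1 - F(t + x g(t))}{1 - F(t)} \to e^{-x}$ as $t \to x_F$. Specializing to the sequence $t_n = U(n) \to x_F$ immediately yields $n(1 - F(b_n + a_n x)) \to e^{-x}$ pointwise in $x$, which is exactly what your decomposition requires. Pointwise convergence of $F^n(a_n x + b_n)$ to $G(x)$ at every $x$ already gives weak convergence since $G$ is continuous; the self-neglecting property of $g$ is needed for deeper results (e.g.\ the converse direction, or local uniformity of the approximation), not for the limit distribution itself.

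Second, your argument in the Frechet and Weibull cases quietly uses the identity $1 - F(U(n)) = 1/n$. With $U(t) = F^{-1}(1 - 1/t)$ interpreted as the generalized inverse, this holds only when $F$ is continuous at $U(n)$; in general one has $1 - F(U(n)) \leq 1/n$, and showing $n(1 - F(U(n))) \to 1$ takes a short extra argument using the regular-variation hypotheses. For every distribution actually used in the paper (Uniform, Gaussian, Laplace, two-sided Frechet) $F$ is continuous, so the identity is exact and this gap closes trivially; but if you intend the sketch to cover the theorem at the stated level of generality, you should flag it.
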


\section{Analysis of imbalanced classes}
\label{sec:classes}
We first look at data from imbalanced classes. The basic setup is as follows. To keep the main message of our analysis simple, we assume that the positive (minority) class and the negative (majority) class are both distributed according to a distribution $D(\cdot)$, but with shifted means. Specifically, the positive class is distributed according to $D(\mu)$, and the negative class according to $D(-\mu)$, where $\|\mu\|>0$. Additionally, we assume that $D(\mu)$ is symmetric about its mean $\mu$ -- although this symmetry is not strictly needed for our results to hold. Finally, recall that we have $p$ points from the majority class and $m$ from the minority with $p \gg m$.  To build intuition, we first look at a simple one-dimensional setting where the feature $x_i$'s are scalars. In this case,  weight $w$ is set to one, we use $\theta$ to refer to bias $b$. This case has two interesting properties that makes the analysis intuitive.  First, due to symmetry of the class-conditional distributions, the classifier that minimizes worst-class error has a bias of zero.   Second, the bias of the maximum-margin classifier $\theta_{\text{erm}}$ is the mean of the maximum training point with a negative label and the minimum training point with a positive label. 

How does ERM behave under these conditions? Geometry suggests that if, due to class imbalance, training data from the positive majority class is {\em{spread out}} enough to push the bias of $\theta_{\text{erm}}$ away from zero, then ERM will have poor worst-class accuracy, and subsampling will help. In contrast, if  training data from both classes are equally spread-out, then ERM will retain its symmetry. 
We formalize this notion of {\em{spreading out}} through a Concentration Condition below.
\begin{assumption}[Concentration Condition]
\label{assm: concentration interval}
Suppose $x_1, \ldots, x_n$ are scalars drawn i.i.d from $D(0)$. There exist  maps $X_{\max}:\mathbb{Z}_+\times [0,1]\rightarrow \mathbb{R}$ , $c:\mathbb{Z}_+\times [0,1]\rightarrow \mathbb{R}$, and $C:\mathbb{Z}_+\times [0,1]\rightarrow \mathbb{R}$ such that for all $n\geq n_0$,  for every $\delta \in (0,1)$, with probability $\geq 1 - \delta$,   
\[ \max_{i\in \{1\cdots,n\}} x_i \in \big[X_{\max}(n, \delta) - c(n, \delta), X_{\max}(n, \delta) + C(n, \delta)\big] \]    
Also, $\lim_{n\rightarrow \infty }C(n,\delta) =0$, $\lim_{n\rightarrow \infty}c(n,\delta) =0$. 
\end{assumption}

The spread quantities $X_{\max}(n, \delta)$, $c(n, \delta)$ and $C(n, \delta)$ are distribution specific.

For example, for standard Normals, $X_{\max}(n, \delta) = \sqrt{2 \log n} - \frac{\log\log n + \log (4 \pi)}{\sqrt{2 \log n}}$, $c(n, \delta) = \frac{\log \log (6/\delta)}{\sqrt{2 \log n}}$ and $C(n, \delta) = \frac{\log (6/\delta)}{\sqrt{2 \log n}}$. For standard uniforms, $X_{\max}(n, \delta) = 1$, $C(n, \delta) = 0$ and $c(n, \delta) = \frac{\log(1/\delta)}{n}$. 

Using these tools, we now characterize two kinds of classification behavior below.
These correspond to the Gumbel and Weibull distribution families, as described in the Fisher-Tippett-Gnedenko theorem.

\subsection{Low dimensional Gumbel type} 

For the Gumbel type, the maximum $M_n = \max(x_1, \ldots, x_n)$ of $n$ iid examples converges to $a_n Z + b_n$, where $a_n$ and $b_n$ are distribution-specific quantities and $Z$ is Gumbel-distributed.
For these distributions, $X_{\max}(p, \delta)$ is typically higher than $X_{\max}(m, \delta)$ when $p \gg m$, producing a lack of symmetry in the maximum-margin classifier, and its increasing worst-class error. In this case, throwing away data to balance classes restores the symmetry and helps recover the lost worst-class error. This is formalized below.

\begin{restatable}{theorem}{ldgc}[General Gumbel Distributions]
\label{thm:gumbel1d}
Let $D$ be a distribution of the Gumbel type with cumulative density function $F$ and tail function $U(\cdot)$ and constants $a_n$ and $b_n$ in the Fisher-Tippett-Gnedenko theorem. Let $\lambda = \frac{\max(a_m, a_p) \log (3/\delta)}{U(p) - U(m)}$, and let $\mu = U(p) + a_p \log(3/\delta)$. Then, for large enough $m$ and $n$, with probability $\geq 1 - 2\delta$ over the training samples, we have: 
	\begin{eqnarray*}
		\theta_{\text{erm}} & \geq &  \frac{1}{2} (U(p) - U(m))(1 - \lambda), \\
		|\theta_{\text{ss}}| & \leq & \frac{1}{2} \lambda (U(p) - U(m)) 
	\end{eqnarray*}
In addition, the worst-class errors satisfy:
	\begin{eqnarray*}
		\wce(\theta_{\text{erm}}) & \geq &  1 - F\bigg(
  \frac{U(p) (1 + 3\lambda) + U(m) (1 - 3\lambda)}{2}\bigg)\\
		\wce(\theta_{\text{ss}}) & \leq &  1 - F(U(p)(1 - \lambda)).
	\end{eqnarray*}
\end{restatable}


%
A few remarks are in order.  First, ensuring that the training data is linearly separable requires $\mu$ to grow with $p$. 
Second, observe that usually we expect $\lambda \leq 1$, and it may even be $o(1)$ for certain growth rates of $m$ and $p$. When this is the case, the theorem implies that $|\theta_{\text{ss}}|$, which is of the order of $\lambda (U(p) - U(m))$ is an order of magnitude closer to the origin than $\theta_{\text{erm}}$, which is $\approx -\frac{1}{2}(U(p) - U(m))$. 
This, in turn, contributes to $\theta_{\text{ss}}$ smaller worst-class-error. If $\lambda = o(1)$, this worst-class-error would be $\approx 1 - F(U(p)) \approx \frac{1}{p}$. In contrast, the worst-class-error of $\theta_{\text{erm}}$ would approach $\approx 1 - F ( \frac{U(p) + U(m)}{2})$, which is somewhere between $\frac{1}{m}$ and $\frac{1}{p}$, depending on the exact form of $F$. 

Third, observe that both worst-class error are tighter than a standard PAC-style analysis that would give a bound of $\approx 1/m$ on the worst-class error. Next, we present a corollary to tighten the previous result for the important Gaussian case.

\begin{restatable}{theorem}{gausclass}
    \label{thm:gaussians}
	Let $0 < \epsilon, \delta, \gamma < 1$ be constants and suppose $m = \beta p$. There exists an $p_0$ such that the following holds. If $p \geq p_0$ and $\beta \geq 1/p^{3/4}$, then with probability greater or equal than $1 - 2 \epsilon - 2 \delta - 3 \gamma$:
\begin{align*}
  | \theta_{\text{erm}}| &\geq \frac{1}{2 \sqrt{2 \log (\beta p)}} \Big{(} \frac{2}{3} \log(1/\beta) - 2 \log(1/\gamma) \Big{)},\\
  | \theta_{\text{ss}} | &\leq \frac{\log (1/\gamma)}{2 \sqrt{2 \log (\beta p)}}.
\end{align*}
When $p \beta^2 \geq \epsilon$, this implies: 
\begin{equation*}
  \wce(\theta_{\text{ss}}) \leq \frac{2\epsilon}{\gamma p}, \quad \wce(\theta_{\text{erm}}) \geq \frac{\epsilon \gamma^{1/4}}{2 p \beta^{1/12} }.
\end{equation*}
\end{restatable}


For Gaussians, if $\beta \rightarrow 0$ with $\beta p \rightarrow \infty$, the relative gap between $\theta_{\text{ss}}$ and $\theta_{\text{erm}}$ widens -- $\theta_{\text{ss}}$ lies in an interval of length $\approx \frac{1}{2 \sqrt{2 \log (\beta p)}}$ around the origin, while $\theta_{\text{erm}}$ lies $\approx \frac{\log(1/\beta)}{2 \sqrt{2 \log (\beta p)}}$ away.  This leads to a widening of the worst-class error between the two SVM solutions, with $\theta_{\text{ss}}$ having considerably lower error than $\theta_{\text{erm}}$.

\subsection{Low dimensional Weibull type} 

Recall that for these distributions, the extremal point of the distribution is finite, and the maximum $M_n$ converges to $a_n Z + b_n$, where $a_n$ and $b_n$ are distribution-specific quantities and $Z$ is a reverse Weibull random variable with parameter $\alpha$. For these distributions, $X_{\max}(p, \delta) \approx X_{\max}(m, \delta)$ even when $p \gg m$, and hence the maximum-margin classifier remains symmetric even when the majority class size $p \gg m$. This means that ERM and subsampling perform equally well. The following theorem formalizes the result. 

\begin{restatable}{theorem}{weibullclass}[General Weibull Distributions]
\label{thm:weibull1d}
Suppose $D$ is a distribution of the Weibull-type with parameter $\alpha$ and extremal point $x_F$. Let $\mu = x_F$, and let $m, p \rightarrow \infty$. Then, for any $0 < \delta \leq 1/4$, with probability $\geq \frac{1}{4 \cdot 2^{2^\alpha}} - \delta$, 
	\begin{eqnarray*} 
		|\theta_{\text{ss}}| & \geq &  \frac{1}{2}(x_F - U(m))(\ln 2)^{1/\alpha}  \\
		|\theta_{\text{erm}}| & \leq & \frac{1}{2}(x_F - U(m))(\ln 2)^{1/\alpha} . 
	\end{eqnarray*}
\end{restatable}

Once again, we pause for some remarks. First, we observe that the extremal value of the Weibull distribution is finite, unlike Gumbel, ensuring that training data is linearly separable only requires $\mu = x_F$; the distributions themselves therefore do not change with $p$. Second, observe that our theorem states that with constant probability, the $|\theta_{\text{erm}}|$ is lower than $|\theta_{\text{ss}}|$ and so is the worst-class error. This implies that in the Weibull case, we cannot hope to get a high probability theorem such as Theorem~\ref{thm:gumbel1d}. A final remark is the dependence of the lower bound on $|\theta_{\text{ss}}|$ on the parameter $\alpha$ of the Weibull distribution; unfortunately, this dependence is inevitable, since the concentration properties of the difference between two Weibull random variables depend on $\alpha$. The following corollary makes the result concrete for uniform distributions. 

\begin{corollary}\label{cor:uniform}
Suppose $D(0)$ is the uniform distribution on $[-1/2, 1/2]$. Let $\mu = 1/2$, and let $m, p \rightarrow \infty$. Then, for any $0 < \delta \leq 1/4$, we have that for $m$ and $p$ large enough, with probability $\geq \frac{1}{16} - \delta$, 
\[ |\theta_{\text{ss}}| \geq  \frac{\ln 2}{2m} , |\theta_{\text{erm}}| \leq \frac{\ln 2}{2m}. \]
This implies that with probability $\geq \frac{1}{16} - \delta$,
\[ \wce(\theta_{\text{ss}}) \geq \frac{\ln 2}{m} \geq \wce(\theta_{\text{erm}}). \]
\end{corollary}

\subsection{High dimensional case}

We next look at a higher dimensional case where the feature vector $x \in \bbR^d$. Our basic setup is as follows. As in the previous section, we assume that the class conditional distribution for each class is spherically symmetric around its mean $\mu$; points $x$ from class $y$ follow $D(y \mu)$. The classifier that minimizes worst-class error for this setting is $\theta^*(x) = \mu^{\top} x$. Additionally, symmetry of the classes ensures that any linear classifier that passes through the origin will have equal error on each class. Thus showing $\theta_{\text{erm}}$ has high worst-class error involves showing that it has a non-zero bias term. 
This, in turn, will happen when the tails of the class-conditional distributions ``spread out'' with increasing sample size. This is formalized by the following high-dimensional concentration condition similar to the low dimensional cases. Notice that the difference here is that the concentration applies to all directions, and that the terms depend on the dimension $d$ in addition to $n$ and $\delta$. 

\begin{assumption}[Concentration Condition]
\label{assm: concentration_high_dim}
	Suppose $x_1, \ldots, x_n$ are $d$ dimensional random variables drawn i.i.d from $D(0)$. There exist  maps $X_{\max}:\mathbb{Z}_+ \times [0,1] \times \mathbb{Z}_+\rightarrow \mathbb{R}$, $c:\mathbb{Z}_+ \times [0,1] \times \mathbb{Z}_+ \rightarrow \mathbb{R}$, and $C:\mathbb{Z}_+\times [0,1] \times \mathbb{Z}_+ \rightarrow \mathbb{R}$ such that for all $n\geq n_0$,  for every $\delta \in (0,1)$, and for all directions $v\in \mathbb{R}^{d}$, with probability $\geq 1 - \delta$,   
	\begin{eqnarray*}
		\max_{i\in \{1\cdots,n\}} \{ v^{\top}x_i\} \in \big[X_{\max}(n, \delta, d) - c(n, d, \delta), \\
		X_{\max}(n, \delta, d) + C(n, \delta,d)\big].
	\end{eqnarray*}
 Also, $\lim_{n\rightarrow \infty \infty}C(n,\delta)=0$ and $\lim_{n\rightarrow \infty} c(n,\delta)=0$.  
\end{assumption}

Unlike low dimensions, our high dimensional analysis requires one more technical condition to bound the $s$-th order statistic from the distribution.
\begin{assumption}
\label{assm: concentration_qth_order_main}
    Let $\zeta: \mathbb{Z}_{+} \times \mathbb{R} \times \mathbb{Z}_{+} \times  \mathbb{Z}_{+} \rightarrow \mathbb{R}$ be a map such that for all $n\geq n_0$ with probability at least $1-\delta$ and 
$$ \hat{\mu}^{\top} x^{(s)} \geq \zeta(n,\delta, d, s) $$
where $\hat{\mu}$ is the unit vector along $\mu$, $\hat{\mu}^{\top} x^{(s)}$ is $s^{th}$ largest value among $n$ iid values of $\hat{\mu}^{\top} x$, where $x\sim D(0)$. 
\end{assumption}
Define $q=d \log(\log p \max_{x_i\in A}\|x_i\|) + \log(1/\delta)$. We are now ready to state the main result. 

\begin{restatable}{theorem}{imbclassd}
\label{thm: imb_class_highd}
$D(0)$ satisfies the concentration conditions in Assumption \ref{assm: concentration_high_dim} and \ref{assm: concentration_qth_order_main}. Suppose $\|\mu\| > X_{\max}(p,\delta,d)$ and $\zeta(p, \delta, d, q)  > 4X_{\max}(m,\delta,d)$.  If $p$ and $m$ are sufficiently large, then with probability at least $1-8\delta$, the worst class error rate achieved by ERM is worse than the worst class error achieved by subsampling the classes. 
\end{restatable}


We pause for a few remarks. We require $\|\mu\| > X_{\max}(p,\delta,d)$ and $\zeta(p, \delta, d, q)  > 4X_{\max}(m,\delta,d)$  to ensure that the direction of the classifier learned by ERM is sufficiently aligned with the direction ($\hat{\mu}$) of the classifier that achieves optimal worst class-error. As a result, we only need to compare the bias term between the subsampling and ERM.  Since $\zeta(p, \delta, d, q)  > 4X_{\max}(m,\delta,d)$ it ensures that $p$ is sufficiently larger than $m$, which causes the bias under ERM to be large. Also, $\|\mu\| > X_{\max}(p,\delta,d)$  ensures that the two classes are separable. We now illustrate the above theorem for Gaussians and uniform distribution.

\begin{restatable}{corollary}{imbclasscor}\label{cor imb_class_highd}
       Let $D(0)$ be a symmetric Gaussian in $\mathbb{R}^2$. The bias for the ERM classifier $\theta_{\text{erm}}$ lies in an arbitrarily small interval centered at $\sqrt{2\log(p/\delta)} - \sqrt{2\log(m/\delta)}$. In contrast, the bias for the classifier under subsampling $\theta_{\text{ss}}$ lies in an arbitrarily small interval centerd at zero. If $m 
 =\log p$ and $\|\mu\|>\sqrt{2\log(p/\delta)}$, then with probability $1-8\delta$, ERM has a worse worst class error than subsampling. 
Let $D(0)$ be a symmetric uniform in $\mathbb{R}^2$. The bias term for both the ERM classifier and the subsampling classifier lies in an arbitrarily small interval centered at zero. 
\end{restatable}

 

The above corollary shows how the bias for Gaussian is much larger than in the uniform distribution. As a result, subsampling is guaranteed to help the Gaussians but does not help uniform distributions as shown in Figure~\ref{fig:main_figure}.

\section{Analysis of imbalanced groups}
\label{sec:groups}

We now look at data from imbalanced groups. Recall  our basic setup, where label $y$ and the attribute $a$ induce four groups $g = (y, a)$ that a data point belongs to. To keep our analysis simple, we assume that each of the four groups have the same distribution $D(\cdot)$, but with shifted means, and that $D(\cdot)$ is spherically symmetric about its mean. Specifically, this means that for a group $g = (y, a)$, the group conditional distribution is $D( y \mu + a \psi)$, where $\mu$ and $\psi$ are vectors in $\bbR^d$ with a non-zero norm. 
Also, recall  that we have $p$ points from each of the majority groups $(1, 1)$ and $(-1, -1)$ and $m$ from the minority groups $(1, -1)$ and $(-1, 1)$ with $p \gg m$.

We start with a simple two-dimensional case where each feature vector $x_i \in \bbR^2$.
We set the parameter vector $\mu =\|\mu\| (1, 0)^{\top}$ and $\psi = \|\psi\|(0, 1)^{\top}$.	
As a result, the first coordinate of $x$ is aligned with the label $y$: $\mathbb{E}[x_1|y] = y\|\mu\|$. Following~\citet{nagarajan20}, we call this the {\em{invariant feature}}. The corresponding classifier $\theta^*_{\text{inv}} = (w^{*}_{\text{inv}} = (1, 0)^{\top}, b^{*}_{\text{inv}} = 0)$ is the ideal linear classifier that achieves the best worst-group accuracy (\Cref{thm:wge}). We call this the {\em{invariant classifier}}.
The second coordinate of $x$ is aligned with the attribute $a$ and not the label $y$:  $\mathbb{E}[x_2|a] = a\|\psi\|$.  The imbalance in sampling rates of the different groups can cause the SVM classifier to have a component along this coordinate, even though it is trained to target the label $y$. We call this coordinate the {\em{spurious feature}}. If the minority group were entirely absent, the ideal classifier would involve the spurious feature, and would have the weight vector $w^{*}_{\text{spu}} = \frac{\|\mu\|}{\sqrt{\|\mu\|^2+\|\psi\|^2}} (1, 0)^{\top} + \frac{\|\psi\|}{\sqrt{\|\mu\|^2+\|\psi\|^2}} (0, 1)^{\top}$ and bias $0$. We call this the {\em{spurious classifier}}. Next we show  that the max-margin classifier converges to either the invariant or spurious classifier depending on the tail properties 
of the group conditional distributions.

\subsection{Low dimensional Gumbel type}

Finally, we are ready to state our main results. First, we look at the case analogous to the Gumbel types in Section~\ref{sec:classes}, where the distribution tails ``spread out'' with more data. 

\begin{restatable}{theorem}{spugumbgroup}\label{thm: spurious_gumbel}
	Suppose $D(0)$ satisfies the concentration condition in Assumption \ref{assm: concentration_high_dim} and   $X_{\max}(p, \delta, 2) -X_{\max}(m, \delta, 2) \geq  2\|\psi\| + c(p, \delta, 2) +   C(m, \delta, 2)$. If $p\rightarrow \infty$, then with probability at least $ 1 - 4 \delta$, the ERM solution converges to the spurious solution $w_{\text{spu}}^{*}$.
	In addition, with probability at least $ 1 - 12 \delta$, $ \wge(\theta_{\text{ss}}) <  \wge(\theta_{\text{erm}})$.  
\end{restatable}


First, observe that bounding $X_{\max}(p, \delta, 2) -X_{\max}(m, \delta, 2)$ from below essentially means that the maximum of $p$ samples is considerably larger than the maximum of $m$ samples when $p \gg m$; this is the {\em{spreading out}} condition in Figure~\ref{fig:main_figure}. The first part of the theorem thus says that ERM has poor worst-group error in this case, while the second part shows that throwing away data through subsampling helps. Second, observe that unlike Section~\ref{sec:classes}, where the bias term in the max-margin solution changes with the tail properties, for imbalanced groups, it is the direction of the max-margin classifier that changes (as illustrated in Figure~\ref{fig:main_figure}). 

We now make this result concrete when the group-conditional distributions are symmetric Gaussians centered at zero in $\mathbb{R}^2$. Let $\|\mu\| = \sqrt{3\log \frac{p}{\delta}}$ and 
$\|\psi\| = \sqrt{\kappa/4  \log \frac{p}{\delta}}$. The ratio of the weight associated with the spurious feature to the invariant feature in $w_{\text{spu}}^{*}$ is $\sqrt{\frac{\kappa}{12}}$. Let $m=p^{\tau}$, where $\tau<1$. In this case, since $\|\mu\|>X_{\max}(p, \delta, 2)$, the two classes are linearly separable. If $\kappa<2\big(1+\tau -2\sqrt{\tau}\big)$, then the condition in the above theorem is satisfied and thus we can conclude that for this family of Gaussians the max-margin solution converges to the spurious solution. Let us contrast this with uniform $D(0)$, which has no tails. Since $X_{\max}(p, \delta, 2) =X_{\max}(m, \delta, 2)=1$ the condition in the above theorem is not satisfied for uniform distribution.

\subsection{Low dimensional Weibull type}

We next look at our analogue of the Weibull case in Section~\ref{sec:classes}, where the tails of the group-conditional distributions grow slowly with more data. In this case, we show that the max-margin solution converges to the invariant classifier that achieves the optimal worst-group error.  

\begin{restatable}{theorem}{weibulgroup}
    \label{thm: invariant_weibull}
Suppose $D(0)$ satisfies the concentration condition in Assumption \ref{assm: concentration_high_dim} and as $p$ and $m$ approach $\infty$
	\begin{eqnarray*}
		\frac{X_{\max}(p, \delta, 2) - X_{\max}(m, \delta, 2)}{2\|\psi\|} \rightarrow 0.\end{eqnarray*}
  If $m,p \rightarrow \infty$,	then with probability at least $1 - 4 \delta$, the ERM solution converges to the invariant solution $w_{\text{inv}}^{*}$.
\end{restatable}


Some remarks are in order. Observe that this theorem requires that the difference between the tails $X_{\max}(p, \delta, 2) -X_{\max}(m, \delta, 2)$ shrinks as $p$ and $m$ go to infinity. 
A concrete example where Theorem~\ref{thm: invariant_weibull} applies is symmetric uniform $D(0)$, where $X_{\max}(p, \delta, 2) =X_{\max}(m, \delta, 2)=1$.

\subsection{Higher dimensional case}

Moving on to higher dimensions, we again consider a setup where the feature vectors $x \in \bbR^d$. We assume that the group-conditional distributions $D$ have the same form but with shifted means, and are spherically symmetric around their means. We select the first unit vector $e_1 = (1, 0, \ldots)$ as the invariant feature, and the second one $e_2 = (0, 1, 0, \ldots)$ as the spurious feature. Thus the group-conditional distribution for points in group $g = (y, a)$ is $D(y \|\mu\| e_1 + a \|\psi\| e_2)$. As earlier in the Section~\ref{sec:groups}, we can similarly define the invariant classifier  and the spurious classifiers.  The main results here, which we state below,  mirror the theorems for the low dimensional cases.

\begin{restatable}{theorem}{gumbelgroupd}[Gumbel type]\label{thm: spurious_d}
    	Suppose $D(0)$ satisfies the concentration condition in Assumption \ref{assm: concentration_high_dim} and    $X_{\max}(p, \delta, d) -X_{\max}(m, \delta, d) \geq $$ $$ 2\|\psi\| + c(p, \delta, d) +   C(m, \delta, d). $ If $p\rightarrow \infty$, then with probability at least $ 1 - 4 \delta$, the ERM solution converges to the spurious solution $w_{\text{spu}}^{*}$.	In addition, with probability at least $ 1 - 12 \delta$, $ \wge(\theta_{\text{ss}}) <  \wge(\theta_{\text{erm}})$.  
\end{restatable}


\begin{restatable}{theorem}{weibullgroupd}[Weibull type]\label{thm: invariant_d}
    Suppose $D(0)$ satisfies the concentration condition in Assumption \ref{assm: concentration_high_dim} and as $p$ and $m \rightarrow \infty$ $$\frac{X_{\max}(p, \delta, d) - X_{\max}(m, \delta, d) }{2\|\psi\|} \rightarrow 0,$$ 
 $$\frac{X_{\max}(p, \delta, d) - X_{\max}(m, \delta, d) }{2\|\mu\|} \rightarrow 0,$$ 
	and $\frac{C(m,\delta,d) + c(m,\delta,d) +  \frac{1}{2}\sqrt{2 C(m, \delta, d) + c(m, \delta, d)\|\psi\|}}{\|\mu\|} \rightarrow 0$.  If $m,p \rightarrow \infty$,	then with probability at least $1 - 4 \delta$, the ERM solution converges to the invariant solution $w_{\text{inv}}^{*}$.
\end{restatable}


\section{Empirical Implications}
\label{sec: expmts}

We next investigate the empirical implications of the proposed theory. Specifically, we ask:

	$\bullet$ Our theory is most applicable in low to moderate dimensions. Does throwing away data improve the worst group error in real data when applied to the top few features?
 
	$\bullet$ What do the tails of the top feature distributions look like?

\begin{table}

	\caption{Performance of ERM on imbalanced and balanced datasets on high and low-dimensional versions of Waterbirds and CelebA. Worst group accuracy of ERM trained on balanced data substantially improves.}
	\label{table1}
	\centering
 \renewcommand{\arraystretch}{1}
	\begin{tabular}{llll}
		\toprule
		 Dataset & Method   & Avg. Accuracy    & WG accuracy              \\  \hline
		Waterbirds  & ERM  & $0.89 \pm 0.00$ &  $0.63 \pm 0.01$ \\
      Waterbirds & SS  & $0.93 \pm 0.00$ & $0.88 \pm 0.01$     \\  
    Waterbirds & ERM-PCA &  $0.88 \pm 0.01$ & $0.66 \pm 0.03$ \\ 
      Waterbirds & SS-PCA  & $0.94 \pm 0.01$ & $0.89 \pm 0.01$     \\   \hline
		CelebA     & ERM & $0.95 \pm 0.00$ & $0.36 \pm 0.05$ \\   
  		CelebA     & SS & $0.91 \pm 0.00$ & $0.83 \pm 0.01$\\ 
  CelebA  & ERM-PCA & $0.95 \pm 0.00$ & $0.40 \pm 0.01$  \\ 
 CelebA     & SS-PCA & $0.90 \pm 0.00$ & $0.83 \pm 0.01$\\  
		\bottomrule
	\end{tabular}
\end{table}

\paragraph{Datasets \& Baselines.} These questions are considered in the context of Waterbirds \citep{sagawa2019distributionally} and CelebA \citep{liu2015deep}, the two most commonly used datasets for studying group imbalance. The Waterbirds data consists of two target classes -- Waterbirds and Landbirds, and two background types -- Water and Land. Most waterbirds appear on water, and most landbirds appear on land. In CelebA data, the target is to predict hair type -- Blond or Non-Blond, where the frequency of blond women is much higher than blond men.  Our ERM baseline consists of a ImageNet-pretrained ResNet-50 model  that is finetuned on Waterbirds (4795 data points) and CelebA datasets (162770 data points) respectively.  To understand the impact of dimensionality reduction, we compare this with the ERM-PCA baseline, that takes the PCA of the last layer ($2048$ dimension) of ResNet-50 model and trains a linear classifier on the first four principal components that explain $\approx 99 \%$ of the variance in the data. \citet{kirichenko2022last} showed that if we freeze the fine-tuned representations and retrain just the last layer on balanced data obtained by subsampling that suffices to improve the worst group error. We call this method SS.  We compare it with SS-PCA that trains a linear layer on balanced four-dimensional data (top four PCA components). 
 
\paragraph{Results.} Table \ref{table1} shows the results. We see that as expected for the high dimensional data, ERM performs much worse in terms of worst-group accuracy (WG accuracy) than SS. This confirms the findings of prior work~\cite{kirichenko2022last, IdrissiSimple2022}. We also see that the same pattern holds for ERM-PCA and SS-PCA. This confirms that throwing away data improve the worst group error in real data when applied to the top few features. In Figure~\ref{fig:tails_wb}, we visualize the tails of the top PCA feature for Waterbirds. Specifically, we plot a histogram of data from each group projected along the feature with the highest PCA value. The results on other features and CelebA are plotted in the Appendix. The results show that the groups are indeed long-tailed, in the sense that they do not look like the uniform distribution. This suggests that the theoretical phenomenon that we describe in this paper might contribute to the success of subsampling.

\begin{figure}
\centering
\includegraphics[width=3.5in]{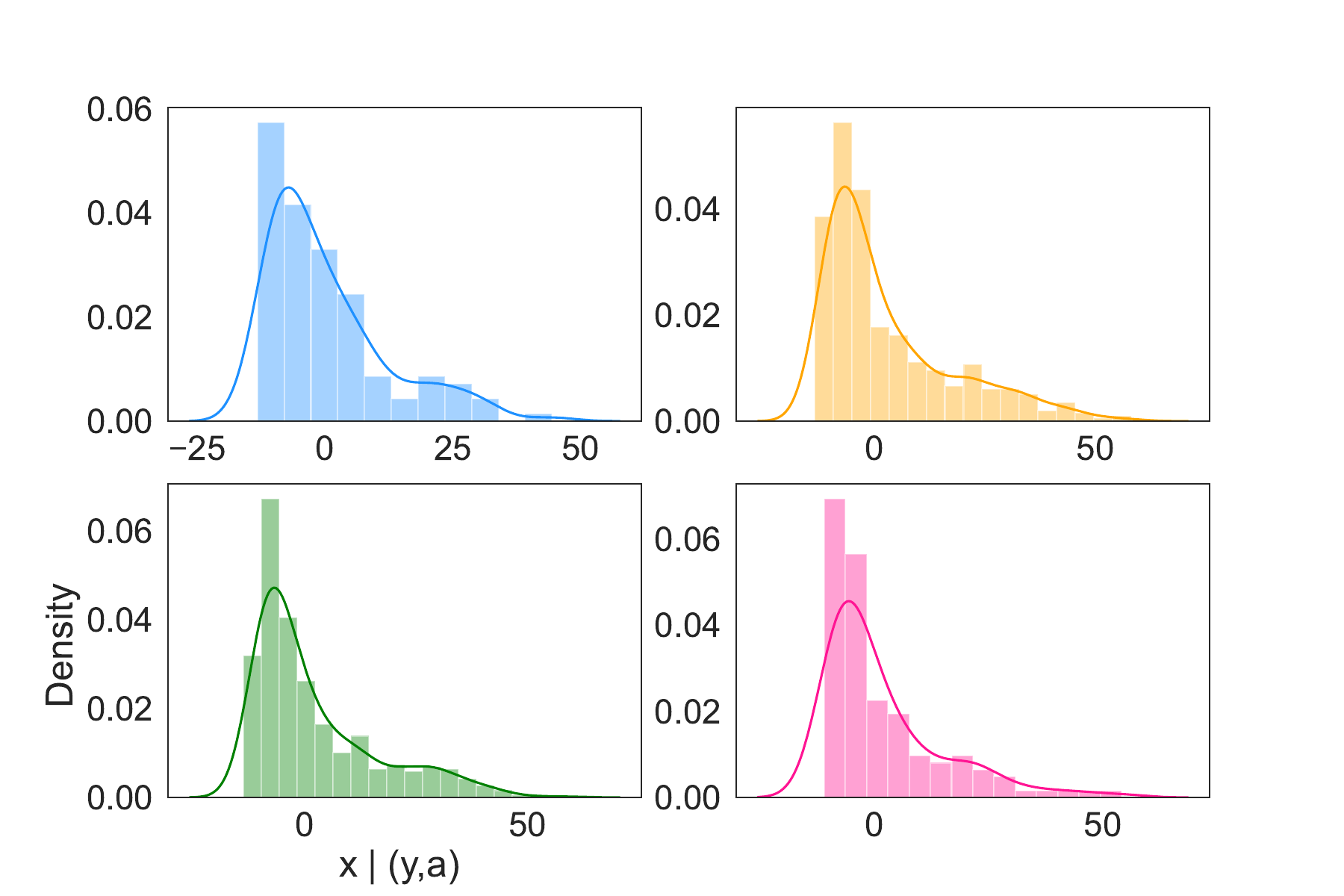}
          \caption{Waterbirds: Distribution of the top features.}
          \label{fig:tails_wb}
\end{figure}

\section{Discussion}

The vast majority of learning theory literature including works carried out in the context of multi group analysis have focussed on measuring concentrations of bounded functions \citep{rothblum2021multi,tosh2022simple,haghtalab2022demand}, and tail properties of distributions feature rarely. A handful of papers have looked at designing algorithms with performance guarantees for tasks carried out on heavy-tailed distributions. For example,~\cite{hsu2016loss} proposes a regression algorithm based on the median-of-means estimator that concentrates well under heavy-tailed distributions. In contrast, we analyze algorithms under distributions with different kinds of tail properties.
Other works on theory of imbalanced classification have focused on metrics other than accuracy~\cite{menon2013statistical, natarajan2017cost}, such as precision and recall~\cite{diochnos2021learning}. An example is \cite{narasimhan2015consistent}, which proposes a new Bayes Optimal classifier for different functions of the confusion matrix when there are imbalanced classes, together with consistency guarantees. However, the bounds in these works are coarser than ours, as they do not reflect changing behavior depending on the tail distributions of the classes.

Our work is also related to \citet{nagarajan2020understanding}, which analyzes max-margin classifiers to explain the failure of of ERM under group imbalance. Our work complements their findings. While the authors derive a lower bound on the weight associated with the spurious feature, they do not specify conditions on the distribution under which this bound is positive, which is crucial towards explaining when the model relies on spurious features. We fill this gap as we provide a characterization of the max-margin classifiers in terms of the tails of the distribution. To the best of our knowledge, this is a first characterization of the SVM classifiers as a function of the tails of the distribution. Therefore, we believe the proof techniques developed here can be of independent interest.   Looking forward, we believe that while data balancing is powerful it still requires access to the knowledge of spurious attributes. Therefore, it is important to formalize what is achievable in the absence of such knowledge.

\clearpage
\bibliographystyle{apalike}
\bibliography{main}

\clearpage
\appendix
\onecolumn
\hrule
\begin{center}
  \textbf{\Large Why Throwing Away Data Improves Worst-Group Error?}
  \vskip 0.25cm
  {\large Appendices}
\end{center}
\hrule

\vskip 0.25 cm
We organize this section as follows. 
\begin{itemize}
\item In \Cref{sec:imb_class_proofs}, we derive the results for imbalanced classification.
\begin{itemize}
\item In \Cref{sec:oned_app}, we derive results for imbalanced classification in the one-dimensional setting. 
\item In \Cref{sec:imb_class_high_d_append}, we derive results for imbalanced classification in higher dimensions. 
\end{itemize}
\item In \Cref{sec:proof_groups}, we derive the results for classification with imbalanced groups. 
\begin{itemize}
\item In \Cref{sec:proof_groups_2d}, we derive the results for imbalanced groups in two-dimensional setting.
\item In \Cref{sec:proof_groups_high_d}, we derive the results for imbalanced groups in higher-dimensional setting.
\end{itemize}
\item In \Cref{sec:supp_exp}, we present the supplementary materials for the empirical findings. 
\end{itemize} 

\section{Proofs for imbalanced classes}
\label{sec:imb_class_proofs}

\subsection{One-dimensional case}
\label{sec:oned_app}

\ldgc*


\begin{proof}
Let $M_m$ denote the maximum of $m$ random variables drawn from $D(0)$. We observe that 
\[ \theta_{\text{ss}} = \frac{1}{2} (M_m - M'_m) ,\]
From the Gumbel concentration lemma (Lemma \ref{lem:1dgaussianmax}), with probability $\geq 1 - \delta$, $b_m + a_m \log \log (3/\delta) \leq M_m \leq b_m + a_m \log (3/\delta)$. Therefore,
\[ |\theta_{\text{ss}}| \leq \frac{a_m}{2} \log (3/\delta) \leq \lambda (U(p) - U(m)) ,\]
where the second step follows from the definition of $\lambda$.

Similarly, we can show that:
\[ \theta_{\text{erm}} = \frac{1}{2} (M_p - M_m), \]
where $M_p$ and $M_m$ are the maxima of $p$ and $m$ independent draws from $D(0)$. From the Fisher-Tippett-Gnedenko theorem, $M_n = b_n + a_n Z$ where $Z$ is an unit Gumbel random variable, and $b_n = U(n)$. Therefore,
\[ \theta_{\text{erm}} = \frac{1}{2} (U(p) - U(m) + a_p Z - a_m Z'),\]
where $Z$ and $Z'$ are independent Gumbel variables. This is at least:
\[ \theta_{\text{erm}} \geq \frac{1}{2} (U(p) - U(m) - \max\{a_m, a_p\} \log (3/\delta))  \geq  \frac{1}{2} (U(p) - U(m)) (1 - \lambda) ,\]
where the last step follows from the definition of $\lambda$. The first part of the lemma follows. 

For the second part of the lemma, let $\Phi(x) = 1 - F(x)$. Observe that
\begin{eqnarray*}
\wce(\theta_{\text{ss}}) & \leq &  \Phi(\mu - |\theta_{\text{ss}}|) \\
& \leq &  \Phi(U(p) + a_p \log (3/\delta) - \frac{a_m}{2} \log (3/\delta)) \\
& \leq & \Phi(U(p) ( 1 - \lambda))
\end{eqnarray*}
where the first step follows by definition of $\wce$, and the second step follows by plugging in the values of $\mu_n$ and an upper bound on $|\theta_{\text{ss}}|$ and using the fact that $\Phi(x)$ is a decreasing function of $x$.  The third step follows because $\frac{a_m}{2} \log (3/\delta) - a_p \log (3/\delta) \leq \lambda U(p)$, and because $\Phi(x)$ is a decreasing function of $x$. 

In contrast,
\begin{eqnarray*}
    \wce(\theta_{\text{erm}}) & \geq & \max(\Phi( \mu - \theta_{\text{erm}}), \Phi(\mu + \theta_{\text{erm}})) \\
    & \geq & \Phi(U(p) + a_n \log (3/\delta) - \frac{1}{2} (U(p) - U(m))(1 - \lambda))  \\
    & \geq & \Phi \left( \frac{U(p) ( 1 + 3\lambda) + U(m) ( 1 - 3\lambda)}{2} \right),
\end{eqnarray*}
where the first step follows from the fact that $\Phi$ is a decreasing function, the second step by plugging in the value of $\mu$, and the third step from the observation that $\mu_n \leq U(p) (1 + \lambda)$. 
The second part of the theorem follows.
\end{proof}

\begin{lemma} \label{lem:gumbeldiff}
	Let $Z_3$ and $Z_4$ be two independent standard Gumbel variables. Then with probability $\geq 1 - \frac{2}{1 + e^{\tau}}$, $|Z_3 - Z_4| \leq \tau$. 
\end{lemma}

\begin{proof}
Since $Z_3$ and $Z_4$ are independent standard Gumbel variables, $Z_3 - Z_4$ is distributed according to a logistic distribution with location parameter $0$ and scale parameter $1$. This means that $Z_3 - Z_4$ is symmetric about $0$, and also that for any $\tau$,
	\begin{equation}\label{eqn:log}
		\Pr(Z_3 - Z_4 \in [-\tau, \tau]) = 1 - \frac{2}{1 + e^{\tau}} 
	\end{equation}
The lemma follows.
\end{proof}




\begin{lemma}\label{lem:gausstailtight}~\cite{feller68}
	Let $X \sim N(0, 1)$. Then, for any $t$,
	\begin{eqnarray*}
		\frac{1}{\sqrt{2 \pi}} e^{-t^2/2} \Big{(} \frac{1}{t} - \frac{1}{t^3}\Big{)} \leq \Pr(X \geq t) \leq \frac{1}{\sqrt{2 \pi}} \frac{e^{-t^2/2}}{t} 
	\end{eqnarray*}
\end{lemma}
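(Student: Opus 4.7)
The plan is to prove both inequalities by the classical Mills-ratio style argument: write the tail as $\Pr(X \geq t) = \frac{1}{\sqrt{2\pi}}\int_t^\infty e^{-x^2/2}\,dx$ and apply integration by parts using the identity $\frac{d}{dx} e^{-x^2/2} = -x e^{-x^2/2}$. The trick on each step is to artificially insert a factor of $1/x^k$ (legitimate since $t > 0$ so $x \geq t > 0$ throughout the integration range), splitting the integrand as $\frac{1}{x^k} \cdot (x e^{-x^2/2})$ and integrating the second factor exactly. Note that the bounds as stated presuppose $t > 0$ (otherwise $1/t$ and $1/t^3$ are not even well-defined); this is the regime in which the lemma is invoked in Theorem~\ref{thm:gaussians}.

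For the upper bound, I would take $u = 1/x$ and $dv = x e^{-x^2/2}\,dx$, so that $v = -e^{-x^2/2}$ and boundary terms at $\infty$ vanish by the super-polynomial decay of $e^{-x^2/2}$. This gives
\begin{equation*}
\int_t^\infty e^{-x^2/2}\,dx = \frac{e^{-t^2/2}}{t} - \int_t^\infty \frac{e^{-x^2/2}}{x^2}\,dx.
\end{equation*}
Dropping the nonnegative correction on the right and dividing by $\sqrt{2\pi}$ yields the required upper bound.

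For the lower bound, I would iterate the same manipulation on the correction integral, now with $u = 1/x^3$ and $dv = x e^{-x^2/2}\,dx$, producing
\begin{equation*}
\int_t^\infty \frac{e^{-x^2/2}}{x^2}\,dx = \frac{e^{-t^2/2}}{t^3} - 3\int_t^\infty \frac{e^{-x^2/2}}{x^4}\,dx \;\leq\; \frac{e^{-t^2/2}}{t^3}.
\end{equation*}
Substituting this upper estimate for the correction back into the first identity gives $\int_t^\infty e^{-x^2/2}\,dx \geq e^{-t^2/2}\bigl(\tfrac{1}{t} - \tfrac{1}{t^3}\bigr)$, and dividing by $\sqrt{2\pi}$ delivers the claimed lower bound.

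There is no real obstacle here: both bounds fall out of the same integration-by-parts step applied once and then twice, with all boundary terms at $\infty$ vanishing. The only thing to watch is the sign bookkeeping in the iteration (the correction alternates sign, which is exactly why truncating after one step gives an upper bound on $\Pr(X \geq t)$ and truncating after two steps gives a lower bound). The result is classical, usually credited to Mills and tabulated as stated in Feller.
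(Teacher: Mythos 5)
Your proof is correct, and since the paper simply cites \cite{feller68} without reproducing a proof, your integration-by-parts (Mills-ratio) derivation is exactly the canonical argument the citation refers to. You also rightly flag that the stated ``for any $t$'' should really be $t>0$, which is the regime in which the lemma is actually applied in Theorem~\ref{thm:gaussians}.
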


\gausclass*

\begin{proof} 

	Fix $\epsilon$, $\gamma$ and $\delta$. We first show that with the given value of $\mu_n$, the probability that the training samples are realizable is at least $1 - 2 \epsilon$. To see this, observe that from Lemma~\ref{lem:gausstailtight}, the probability a sample $z$ from the positive class has value $<0$ is at most:
	\[ \Pr(X \geq \sqrt{2 \log (n/\epsilon)} | X \sim N(0, 1)) \leq \frac{e^{-\mu_n^2/2}}{\sqrt{2 \pi}\mu_n}  \leq \epsilon/n \]
	An union bound over all $n$ samples establishes that the probability that any of the $n$ positive samples lie below zero is at most $\epsilon$. A similar argument can also be applies to the negative class to show that with probability $\geq 1 - 2 \epsilon$, the training data is linearly separable.

	To establish the first part of the theorem, we will separately look at $\theta_{\text{ss}}$ and $\theta_{\text{erm}}$. For $p$ large enough, we can write
	\begin{eqnarray*}
		\Pr\left(|\theta_{\text{ss}}| \geq \frac{1}{2} \cdot \frac{\log (1/\gamma)}{\sqrt{2 \log (\beta p)}}\right) \leq \Pr\left(|Z_3 - Z_4| \geq  \log(1/\gamma) \right) + \delta 
	\end{eqnarray*}
	From Lemma~\ref{lem:gumbeldiff}, this probability is at most $\delta + \frac{2}{1 + (1/\gamma)} \leq \delta + 2 \gamma$.

	Similarly, for $p$ large enough, we have that:
	\begin{eqnarray*}
		\theta_{\text{erm}} \rightarrow \frac{1}{2} (b_{\beta p} - b_p) + \frac{1}{2}a_{\beta p} (Z_1 - Z_2)  + \frac{1}{2}(a_p - a_{\beta p}) Z_2
	\end{eqnarray*}
	where $Z_1$ and $Z_2$ are standard Gumbel random variables. This means that for $p$ large enough, and for any threshold $\tau$, we have that:
	\begin{eqnarray} \label{eqn:gaussplusdelta}
		\Pr(\theta_{\text{erm}} \leq \tau) & \leq & \Pr(\frac{1}{2} (b_{\beta p} - b_p) + \frac{1}{2}a_{\beta p} (Z_1 - Z_2) + \frac{1}{2}(a_p - a_{\beta p}) Z_2 \leq \tau) + \delta
	\end{eqnarray}
	Now, observe that for Gaussians:
	\begin{eqnarray}\label{eqn:gaussd1}
		b_{\beta p} - b_p & = & \sqrt{2 \log \beta p} - \frac{ \log\log(\beta p) + \log 4 \pi}{\sqrt{2 \log \beta p}} \nonumber - \sqrt{2 \log p} + \frac{ \log \log p + \log 4 \pi}{\sqrt{2 \log p}}  \nonumber \\
		& \leq & \sqrt{2 \log \beta p} - \sqrt{2 \log p}  \nonumber\\
		& = & \sqrt{2 \log \beta p} (1 - \sqrt{ \frac{ \log p}{\log \beta p}}) \nonumber \\
		& = & \sqrt{2 \log \beta p} (1 - (1 + \frac{\log (1/\beta)}{\log \beta p})^{1/2}) \nonumber \\
		& \leq & \sqrt{2 \log \beta p}(- \frac{\log(1/\beta)}{3 \log \beta p}) \nonumber \\
		& \leq & - \frac{2 \log (1/\beta)}{3 \sqrt{ 2 \log \beta p}}
	\end{eqnarray}
	Here the first step follows as $\frac{ \log \log p + \log (4\pi)}{\sqrt{\log p}}$ is a decreasing function of $p$, and the second and third steps follow from algebra. The fourth step follows from the fact that $(1 + x)^{1/2} \geq 1 + x/3$ when $x \leq 3$; as $p^3 \geq \beta^4$, $\frac{\log(1/\beta)}{\log \beta p} \leq 3$. The final step then follows from algebra.
	Additionally, from Lemma~\ref{lem:gumbeldiff}, 
	\begin{eqnarray} \label{eqn:gaussd2}
		\Pr(a_{\beta p} (Z_1 - Z_2) \geq \frac{\log(1/\gamma)}{\sqrt{2\log \beta p}}) \leq \Pr(Z_1 - Z_2 \geq \log(1/\gamma)) \nonumber \leq \frac{1}{1 + 1/\gamma} \leq \gamma
	\end{eqnarray}
	Finally, observe that
	\begin{eqnarray}\label{eqn:gaussdeltaa}
		a_{\beta p} - a_n & = & \frac{1}{\sqrt{2 \log \beta p}} - \frac{1}{\sqrt{2 \log p}} \nonumber \\
		& = & \frac{1}{\sqrt{2 \log \beta p}}(1 - \sqrt{\frac{\log \beta p}{\log p}}) \nonumber\\
		& = & \frac{1}{\sqrt{2 \log \beta p}} (1 - (1 - \frac{\log(1/\beta)}{\log p})^{1/2})\nonumber \\
		& \leq & \frac{1}{\sqrt{2 \log \beta p}} \cdot \frac{\log(1/\beta)}{\log p}
	\end{eqnarray}
	where the last step follows because for $0< x < 1$, $\sqrt{1 - x} \geq 1 - x$. Therefore, 
	\begin{eqnarray} \label{eqn:gaussd3}
		\Pr((a_{\beta p}  - a_p) Z_2 \geq \frac{\log (1/\gamma)}{\sqrt{2\log \beta p}}) & \leq & \Pr(Z_2 \geq \log (1/\gamma) \cdot \frac{\log p}{\log (1/\beta)}) \nonumber \\
		& \leq & \Pr(Z_2 \geq \log (1/\gamma)) \nonumber \\
		& \leq & 1 - e^{-\gamma} \leq \gamma
	\end{eqnarray}
	where the first step follows from Equation~\ref{eqn:gaussdeltaa}, the second step from the fact that $\log (1/\beta) \leq \log n$, and the final step from the standard Gumbel cdf and the fact that $1 - e^{-x} \leq x$. The first part of the theorem follows from combining Equations~\ref{eqn:gaussd1},~\ref{eqn:gaussd2} and~\ref{eqn:gaussd3}.

	To prove the third part of the theorem, we use Lemma~\ref{lem:gausstailtight}. From the second part of the theorem and Lemma~\ref{lem:gausstailtight}, observe that 

	\[ \err(\theta_{\text{ss}}) \leq \Phi\Big{(}\mu - \frac{ \log(1/\gamma)}{2 \sqrt{2 \log \beta p}}\Big{)} \]
	From Lemma~\ref{lem:gausstailtight}, the right hand side is, in turn, at most:
	\begin{eqnarray*}
		& \leq & \frac{1}{\sqrt{2 \pi} (\mu -\frac{ \log(1/\gamma)}{2 \sqrt{2 \log \beta p}}) } \cdot \exp\Big{(}-\frac{1}{2}\left(\mu - \frac{ \log(1/\gamma)}{2 \sqrt{2 \log \beta p}}\right)^2\Big{)} \\
		& \leq & \frac{2}{\sqrt{2 \pi} \mu_n} \exp\Big{(}-\frac{1}{2} \mu^2 + \frac{ \mu \log(1/\gamma)}{2 \sqrt{2 \log \beta n}}\Big{)} \\
		& \leq & \frac{2 \epsilon}{n} \cdot \exp\Big{(} \frac{ \mu \log(1/\gamma)}{2 \sqrt{2 \log \beta p}}\Big{)}\\
	\end{eqnarray*}
	Here the first step follows because for $p$ large enough, $\mu - \frac{ \log(1/\gamma)}{2 \sqrt{2 \log \beta p}} \geq \frac{1}{2} \mu$; this is because $\mu$ is an increasing function of $p$. The second step follows because by design $\frac{1}{\sqrt{2 \pi} \mu} e^{-\mu^2/2} = \epsilon/p$ and the third step follows from simple algebra. 

We observe that $\mu \leq \sqrt{2 \log (p/\epsilon)}$ -- this is because $e^{-\sqrt{2 \log (p/\epsilon)}^2/2} / \sqrt{4 \pi \log (p/\epsilon)} < \epsilon/p$. This implies that:
	\begin{eqnarray*}
		\frac{ \mu_n}{\sqrt{2\log \beta p}} & \leq & \Big{(}1 + \frac{\log(1/\epsilon \beta)}{\log \beta p}\Big{)}^{1/2} \leq 2, 
	\end{eqnarray*}
	provided $\frac{1}{\epsilon} \leq \beta^2 p$. Therefore, 
	\[ \err(\theta_{\text{ss}}) \leq \frac{2 \epsilon}{\gamma p}  \]
	In contrast, from Lemma~\ref{lem:gausstailtight},
	\begin{eqnarray*}
		\err(\theta_{\text{erm}}) & \geq &  \Phi\Big{(}\mu - \frac{ \frac{2}{3} \log(1/\beta) - 2 \log(1/\gamma)}{ 2 \sqrt{2 \log \beta p}}\Big{)} \\
	& = & \Phi\Big{(}\mu - \frac{ \log(\gamma^2/\beta^{2/3})}{ 2 \sqrt{2 \log \beta p}}\Big{)}	
	\end{eqnarray*}
	From Lemma~\ref{lem:gausstailtight}, this is at least:
	\begin{eqnarray*}
		&& \frac{1}{\sqrt{2 \pi}} \Big{(} \frac{1}{(\mu - \frac{ \log(\gamma^2/\beta^{2/3})}{ 2 \sqrt{2 \log \beta p}})} - \frac{1}{(\mu - \frac{ \log(\gamma^2/\beta^{2/3})}{ 2 \sqrt{2 \log \beta p}})^3} \Big{)} \cdot \exp( -\frac{1}{2}\Big{(} (\mu - \frac{ \log(\gamma^2/\beta^{2/3})}{ 2 \sqrt{2 \log \beta p}})^2) \Big{)} 
	\end{eqnarray*}
	For $p$ large enough, $\frac{1}{(\mu - \frac{ \log(\gamma^2/\beta^{2/3})}{ 2 \sqrt{2 \log \beta p}})^3} \leq \frac{1}{2} \frac{1}{(\mu - \frac{ \log(\gamma^2/\beta^{2/3})}{ 2 \sqrt{2 \log \beta p}})}$; also $\frac{1}{(\mu_n - \frac{ \log(\gamma^2/\beta^{2/3})}{ 2 \sqrt{2 \log \beta p}})} \geq \frac{1}{\mu}$. This implies that the right hand side is at least:
\begin{eqnarray*}
	\frac{1}{\sqrt{2 \pi}} \cdot \frac{1}{2 \mu} \cdot \exp( -\frac{1}{2}\Big{(} (\mu - \frac{ \log(\gamma^2/\beta^{2/3})}{ 2 \sqrt{2 \log \beta p}})^2) \Big{)}
\end{eqnarray*}
Observe that:
\begin{eqnarray*}
	&& \frac{1}{2 \sqrt{2 \pi} \mu } \cdot \exp( -\frac{1}{2}\Big{(} (\mu - \frac{ \log(\gamma^2/\beta^{2/3})}{ 2 \sqrt{2 \log \beta p}})^2) \Big{)} \\
	& \geq & \frac{1}{2 \sqrt{2 \pi} \mu } \cdot \exp(-\frac{1}{2}\mu^2) \cdot \exp( \frac{ \mu \log(\gamma^2/\beta^{2/3})}{ 2 \sqrt{2 \log \beta p}} - (\frac{ \log(\gamma^2/\beta^{2/3})}{ 2 \sqrt{2 \log \beta p}})^2) \\
	& \geq & \frac{1}{2} \cdot \frac{\epsilon}{p} \cdot \exp(\frac{1}{2} \frac{\mu \log(\gamma^2/\beta^{2/3})}{ 2 \sqrt{2 \log \beta p}})
\end{eqnarray*}
where the first step follows since $\frac{1}{\sqrt{2 \pi} \mu} \cdot e^{-\mu^2/2} = \epsilon/p$ and the second step follows since for large enough $p$, $\mu/2 \geq \frac{ \log(\gamma^2/\beta^{2/3})}{ 2 \sqrt{2 \log \beta p}}$.  

Also: observe that for $p$ large enough $\frac{\mu}{\sqrt{2 \log \beta p}} \geq \frac{1}{2}$ -- since $\mu \geq \frac{1}{2}\sqrt{2 \log (p/\epsilon)}$. Putting these all together, the entire error is at least:
\[ \frac{\epsilon}{2p} \exp( \frac{1}{8} \log(\gamma^2/\beta^{2/3})) \geq \frac{\epsilon \gamma^{1/4}}{2 p \beta^{1/12}} \]
The theorem now follows.
\end{proof}

\weibullclass*
\begin{proof}

Let $M_m$ denote the maximum of $m$ random variables drawn from $D(0)$. We observe that 
\[ \theta_{\text{ss}} = \frac{1}{2} (M_m - M'_m) ,\]
which for the Weibull case converges to $\frac{a_m}{2} (W - W') = \frac{a_m}{2}(Z' - Z)$; here, $W$ and $W'$ are reverse Weibull random variables with parameters $\alpha$ and $1$, which makes $Z \sim \wb(\alpha, 1)$ and $Z' \sim \wb(\alpha, 1)$ are independent Weibull variables, and $a_m = x_F - U(m)$. Combining this with Lemma~\ref{lem:lbweibull}, and accounting for the distributional convergence, we get that for $m$ large enough,
\[ \Pr(|\theta_{\text{ss}}| \geq \frac{(x_F - U(m))}{2} (\ln 2)^{1/\alpha}) \geq \frac{1}{2^{2^{\alpha}}} - \delta\]

Similarly,
\[ \theta_{\text{erm}} = \frac{1}{2} (M_m - M_n) ,\]
which in the Weibull case converges to $\frac{a_m}{2}(Z' - Z)$; here $Z \sim \wb(\alpha, 1)$ and $Z' \sim \wb(\alpha, \frac{a_n}{a_m})$. Observe that as $U(\cdot)$ is an increasing function, $\frac{a_n}{a_m} = \frac{x_F - U(n)}{x_F - U(m)} \leq 1$. We can therefore apply Lemma~\ref{lem:lbweibull2} to conclude that for large enough $m$ and $n$,
\[ \Pr(|\theta_{\text{erm}}| \leq \frac{x_F - U(m)}{2} (\ln 2)^{1/\alpha}) \geq \frac{1}{4} - \delta \]
The theorem follows.
\end{proof}

\begin{lemma}\label{lem:lbweibull}
Let $Z \sim \wb(\alpha, 1)$ and $Z' \sim \wb(\alpha, 1)$ be independent Weibull variables. Then, 
\[ \Pr(|Z - Z'| \geq (\ln 2)^{1/\alpha} ) \geq \frac{1}{2^{2^{\alpha}}}\]
\end{lemma}

\begin{proof}
For any constants $a$ and $b$, 
\begin{eqnarray*}
    \Pr(|Z - Z'| \geq a) \geq 2 \Pr(Z \geq a + b, Z' \leq b) = 2 \Pr(Z \geq a + b) \Pr(Z' \leq b) = 2 e^{-(a + b)^{\alpha}} (1 - e^{-b^{\alpha}}),
\end{eqnarray*}
where the first step follows because $Z$ and $Z'$ come from the same distribution, the second step follows from the independence of $Z$ and $Z'$, and the third step from plugging in the CDF for the Weibull distribution. 
Next, we plug in $a = (\ln 2)^{1/k}$ and $b = (\ln 2)^{1/k}$ -- and bound the final quantity as:
\[ \geq 2 \cdot \frac{1}{2} \cdot \exp( - (2 (\ln 2)^{1/\alpha})^\alpha) \geq \frac{1}{2^{2^{\alpha}}},\]
which is a constant for constant $\alpha$.
\end{proof}

\begin{lemma}\label{lem:lbweibull2}
Let $\lambda < 1$ and let $Z \sim \wb(\alpha, 1)$ and $Z' \sim \wb(\alpha, \lambda)$ be independent Weibull variables. Then,
\[ \Pr(|Z - Z'| \leq \text{median}(Z)) \geq \frac{1}{4} \]
\end{lemma}

\begin{proof}
Since both $Z$ and $Z'$ are positive random variables, $\Pr(|Z - Z'| \leq a) \geq \Pr(Z \leq a, Z' \leq a) = \Pr(Z \leq a) \Pr(Z' \leq a)$. Now, $\lambda < 1$, we can establish a coupling between $Z$ and $Z'$ to show that for any $a$, $\Pr(Z' \leq a) \geq \Pr(Z = a)$. The lemma follows by plugging this in, and setting $a = \text{median}(Z)$. 
\end{proof}

Finally, the proof to Corollary~\ref{cor:uniform} follows by substituting the expression for $a_n$ and $b_n$ for the Uniform distribution in Theorem~\ref{thm:weibull1d}.
\subsection{Imbalanced Classes: Higher Dimensional Case}
\label{sec:imb_class_high_d_append}

Denote $\hat{v}$ as a unit vector in the direction of the vector $v$. In the lemmas below, we prove some facts that are necessary to prove the main Theorem \ref{thm: imb_class_highd}. $\hat{\mu}$ is a unit vector in the direction of $\mu$ (conditional mean of positive class) and $\hat{y}$ is a unit vector orthogonal to $\hat{\mu}$.

\begin{lemma}
\label{lemma: sph_symm_median}
    If $D(0)$ is spherically symmetric, then the median $\hat{y}^{\top}x $ conditioned on $\hat{\mu}^{\top} x $ is zero, i.e., $\text{Median}[\hat{y}^{\top}x | \hat{\mu}^{\top} x ]= 0$, where $\hat{\mu} \perp \hat{y}$. 
\end{lemma}
\begin{proof} 
To show the above lemma, we will first prove that the joint probability $\Pr(\hat{\mu}^{\top} x, \hat{y}^{\top}x ) = \Pr(\hat{\mu}^{\top} x, \hat{z}^{\top}x )$, where $\hat{y}$ and $\hat{z}$ are any two unit vectors perpendicular to $\hat{\mu}$. 
Consider two orthonormal basis $\mathcal{B}_1$ and $\mathcal{B}_2$ to express vectors $x\in \mathbb{R}^{d}$. We write the coordinates of $x$ in $\mathcal{B}_1$ as $\{\upsilon^{1}_1, \cdots, \upsilon^{1}_d\}$ and in $\mathcal{B}_2$ as $\{\upsilon^{2}_1, \cdots, \upsilon^{2}_d\}$. Since $x$ is drawn from a spherically symmetric distribution 
$\Pr(\upsilon^{1}_1, \cdots, \upsilon^{1}_d) =  \Pr(\upsilon^{2}_1, \cdots, \upsilon^{2}_d)$. As a result,  $\Pr(\upsilon^{1}_1, \upsilon^{1}_2) =  \Pr(\upsilon^{2}_1, \upsilon^{2}_2)$. Suppose the first two vectors in $\mathcal{B}_1$ are $\hat{\mu}$ and $\hat{y}$ and suppose the first two vectors in $\mathcal{B}_2$ are $\hat{\mu}$ and $\hat{z}$. Thus, $\upsilon_1^{1} = \hat{\mu}^{\top} x$, $\upsilon_2^{1} = \hat{y}^{\top} x$,  and  $\upsilon_1^{2} = \hat{\mu}^{\top} x$, $\upsilon_2^{2} = \hat{z}^{\top} x$. As a result, $\Pr(\hat{\mu}^{\top} x, \hat{y}^{\top}x ) = \Pr(\hat{\mu}^{\top} x, \hat{z}^{\top}x )$. This implies $$ \Pr( \hat{y}^{\top}x  \leq  a | \hat{\mu}^{\top} x) = \Pr(\hat{z}^{\top}x \leq a | \hat{\mu}^{\top}x) $$ 
Substitute $\hat{z} = -\hat{y}$ to get 
$$ \Pr( \hat{y}^{\top}x  \leq  a | \hat{\mu}^{\top} x) = \Pr(-\hat{y}^{\top}x \leq a | \hat{\mu}^{\top}x) $$ 
$$ \Pr( \hat{y}^{\top}x  \leq  a | \hat{\mu}^{\top} x) = \Pr(\hat{y}^{\top}x \geq -a | \hat{\mu}^{\top}x) $$ 
If $a=0$, then $\Pr( \hat{y}^{\top}x  \leq  0 | \hat{\mu}^{\top}x) = \Pr( \hat{y}^{\top}x  \geq  0| \hat{\mu}^{\top}x) $. This implies that the conditional median is zero. 
\end{proof}

\begin{lemma}
\label{lem:median_ord}
    If $D(0)$ is spherically symmetric, then the median $\hat{y}^{\top}x^{(i)} $ conditioned on $\hat{\mu}^{\top} x^{(i)} $ is zero, i.e., $\text{Median}[\hat{y}^{\top}x^{(i)} | \hat{\mu}^{\top} x^{(i)} ]= 0$, where $\hat{\mu} \perp \hat{y}$ and $x^{(i)}$ corresponds to the $x$ in $\{x_1, \cdots, x_n\}$ with $i^{th}$ largest projection on $\hat{\mu}$. 
\end{lemma}

\begin{proof}
For cleaner exposition, without loss of generality we will say $x_{j}$ takes the $j^{th}$ highest projection on $\hat{\mu}$ for all $j \in \{1, \cdots, n\}$.

We write the joint probability over $\hat{y}^{\top} x_{i}$, and all $\hat{\mu}^{\top} x_{j}$ as 
$$\Pr\bigg(\hat{y}^{\top} x_{i}, \hat{\mu}^{\top}x_{i}=u, \{\hat{\mu}^{\top}x_{j} \leq u, \forall j \leq i-1\},   \{\hat{\mu}^{\top}x_{k} \geq u, \forall k \geq i+1\}  \bigg)$$
\begin{equation}
=  \Pr(\hat{y}^{\top} x_{i}, \hat{\mu}^{\top}x_{i}=u) F_{\hat{\mu}}(u)^{i-1}(1- F_{\hat{\mu}}(u))^{n-i}
    \label{eqn1: joint_ord_med1}
\end{equation}
where $ F_{\hat{\mu}}$ is the CDF of $x$ projected on $\hat{\mu}$. 


The conditional probability simplifies as follows
\begin{equation}
      \Pr(\hat{y}^{\top} x^{(i)} | \hat{\mu}^{\top}x^{(i)} ) =  \frac{  \Pr(\hat{y}^{\top} x_{i}, \hat{\mu}^{\top}x_{i}=u) F_{\hat{\mu}}(u)^{i-1}(1- F_{\hat{\mu}}(u))^{n-i}}{ \Pr(\hat{\mu}^{\top}x_{i}=u) F_{\hat{\mu}}(u)^{i-1}(1- F_{\hat{\mu}}(u))^{n-i}} = \Pr(\hat{y}^{\top} x_{i} | \hat{\mu}^{\top}x_{i})
\end{equation}
The rest of the lemma now follows from the previous Lemma \ref{lemma: sph_symm_median}. 
\end{proof}

\begin{lemma}
\label{lem:orderinglemma}
Suppose $n$ iid samples $\{x_1, \cdots, x_n\}$ are sampled from a spherically symmetric $D(0)$.  Let $s = d \log (2N/\epsilon+1) + \log (1/\delta)$  where $N = \max_{i\in \{1, \cdots, n\}} \|x_i\|$. Then, with probability $\geq 1 - \delta$, for all directions $\hat{y}$ in $\mathbb{R}^d$ there exists an $i \in \{ 1, \ldots, s \}$ such that
\[ \hat{y}^{\top} x^{(i)} \geq -\epsilon \]
where $\{x^{(1)}, \cdots, x^{(n)}\}$ are in the decreasing order of their projection on $\hat{\mu}$, where $\hat{\mu} \perp \hat{y}$. 
\end{lemma}

\begin{proof}
We select an $\epsilon/N$ cover $C = \{ y_1, \ldots, y_M \}$ over the surface of the sphere; this is possible for $M = (\frac{2N}{\epsilon } + 1)^{d}$.\footnote{\url{https://people.eecs.berkeley.edu/~bartlett/courses/281b-sp08/19.pdf}} We will prove the lemma in two steps. First, we will show that with probability $\geq 1 - \delta$,  for all $y_j \in C$, there exists a $x_i$ in $x^{(1)}, \ldots, x^{(s)}$ for which $y_j^{\top} x_i \geq 0$. To show this, first let us fix a particular $y_j$, and consider $x^{(1)}, \ldots, x^{(s)}$. For any of these $x^{(i)}$'s, $\Pr(y_j^{\top} x^{(i)} \geq 0 | \hat{\mu}^{\top} x^{(i)}) = 1/2$ (from Lemma \ref{lem:median_ord}) and as a result $\Pr(y_j^{\top} x^{(i)} \geq 0) = 1/2$. This means that the probability that $y_j^{\top} x^{(i)} < 0$ for all $x^{(i)}$ in $x^{(1)}, \ldots, x^{(s)}$ is at most $1/2^s$. For $s = d \log (2N/\epsilon+1) + \log (1/\delta)$, this probability is at most $\delta / (\frac{2N}{\epsilon}+1)^d$. 
Now, let $\hat{y}$ be any vector on the surface of the sphere. Suppose $y_j$ is its closest vector in $C$, and $x_i$ is the corresponding $x$ such that $y_j^{\top} x_i \geq 0$. Since $C$ is an $\epsilon/N$-cover of the sphere, this means that:
\[ \hat{y}^{\top} x_i \geq y_j^{\top} x_i - \frac{\epsilon}{N} \|x_i\| \geq - \epsilon \]
The lemma follows. 
\end{proof}

Denote set of points in positive class as $B$ and the set of points in negative class as $A$. We simplify the optimal solution to SVM as follows. 

\begin{equation}
    \begin{split}
         w^{*} & = \argmax_{\|w\| = 1} \inf_{x \in B} w^{\top} x - \sup_{x \in A} w^{\top} x  \\ 
         & = \argmin_{\|w\| = 1} -\inf_{x \in B} w^{\top} x + \sup_{x \in A} w^{\top} x  \\ 
         & = \argmin_{\|w\| = 1} \sup_{x \in B} -w^{\top} x + \sup_{x \in A} w^{\top} x  \\ 
         & = \argmin_{\|w\| = 1} \sup_{x \in -B} w^{\top} x + \sup_{x \in A} w^{\top} x
    \end{split}
\end{equation}

Additionally,
\[ -b^* = \frac{1}{2} ( \sup_{x \in A} (\alpha \hat{\mu} + \beta \hat{y})^{\top})  x - \sup_{x \in -B} (\alpha \hat{\mu} + \beta \hat{y})^{\top}) x) \]

Define the set $A_{\mu} = \{ x+ \mu, \forall x \in A \}$, and the set $-B_{\mu} = \{ x+ \mu, \forall x \in -B \}$.

 With this, and some algebraic simplification the SVM optimization problem becomes:
\[  \argmin_{\alpha \in [-1,1], \hat{y}} \sup_{x \in A_{\mu}} (\alpha \hat{\mu} + \beta \hat{y})^{\top} (x - \mu) + \sup_{x \in -B_{\mu}} (\alpha \hat{\mu} +  \beta \hat{y})^{\top} (x - \mu) \]

\[\argmin_{\alpha \in [-1,1], \hat{y}} - 2 \alpha \|\mu\| + \sup_{x \in A_{\mu}} (\alpha \hat{\mu} + \beta \hat{y})^{\top} )x  + \sup_{x \in -B_{\mu}} (\alpha \hat{\mu} + \beta \hat{y})^{\top}) x \]
\begin{equation} \label{eqn:minform} w^* = \argmin_{\alpha, \hat{y}} - 2 \alpha \|\mu\| + \sup_{x \in A_{\mu}} (\alpha \hat{\mu} + \beta \hat{y})^{\top} )x  + \sup_{x \in -B_{\mu}} (\alpha \hat{\mu} + \beta \hat{y})^{\top}) x 
\end{equation}

Additionally,
\[ -b^* = \frac{1}{2} ( \sup_{x \in A_{\mu}} (\alpha \hat{\mu} + \beta \hat{y})^{\top})  x - \sup_{x \in -B_{\mu}} (\alpha \hat{\mu} + \beta \hat{y})^{\top}) x) \]

\imbclassd*


\begin{proof}

We start with expression for optimal SVM solution derived above 
\begin{equation} \label{eqn:minform} w^* = \argmin_{\alpha, \hat{y}} - 2 \alpha \|\mu\| + \sup_{x \in A_{\mu}} (\alpha \hat{\mu} + \beta \hat{y})^{\top} )x  + \sup_{x \in -B_{\mu}} (\alpha \hat{\mu} + \beta \hat{y})^{\top}) x 
\end{equation}
Additionally,
\[ -b^* = \frac{1}{2} ( \sup_{x \in A_{\mu}} (\alpha \hat{\mu} + \beta \hat{y})^{\top})  x - \sup_{x \in -B_{\mu}} (\alpha \hat{\mu} + \beta \hat{y})^{\top}) x) \]

Let us try to bound $-b^{*}$. From the concentration condition in Assumption \ref{assm: concentration_high_dim}, we know the first term above lies in $$\frac{1}{2}[X_{\max}(p,\delta,d)-c(p,\delta,d), X_{\max}(p,\delta,d)+C(p,\delta,d)]$$
The second term lies in $$\frac{1}{2}[X_{\max}(m,\delta,d)-c(m,\delta,d), X_{\max}(m,\delta,d)+C(m,\delta,d)]$$

As a result, the lower bound on $-b^{*}$ is 
\begin{equation}    
\frac{1}{2}\bigg(X_{\max}(p,\delta,d)-X_{\max}(m,\delta,d) - c(p,\delta,d) - C(m,\delta,d)\bigg)
\label{eqn: lb_b}
\end{equation}

The upper bound on $-b^{*}$ is 
\begin{equation}    
\frac{1}{2}\bigg(X_{\max}(p,\delta,d)-X_{\max}(m,\delta,d) + C(p,\delta,d) + c(m,\delta,d)\bigg)
\label{eqn: ub_b}
\end{equation}

Note $-b^{*}$ lies in the above interval with probability at least $1-2\delta$.

The expression for the error of a classifier $w^{\top}x + b$ is given as follows. We assume $\|w\|=1$. 

\begin{equation} 
\begin{split}
& \mathsf{Err}_{+} = \mathbb{P}(w^{\top}X + b\leq 0 | X \sim D(\mu)) \\  
\end{split}
\end{equation}

where $D(\mu)$ is the distribution of samples for the positive class centered at $\mu$. We assume $D$ is spherically symmetric about zero so we simplify the above expression as follows.

\begin{equation}
\begin{split}
&  \mathsf{Err}_{+} =\mathbb{P}(w^{\top}(\mu +\tilde{X}) + b\leq 0 | \tilde{X} \sim D(0)) \\  
\end{split}
\end{equation}

Since $\tilde{X}$ is sampled from $D(0)$ which is spherically symmetric, its projection on $w^T\tilde{X}$ will have a distribution that does not depend on the direction $w$. Let us denote $w^{\top}\tilde{X}= W$. The above expression becomes. Let us denote the CDF of $W$ as $F_W$.

\begin{equation}
\begin{split}
&  \mathsf{Err}_{+} =\mathbb{P}(W \leq -w^{\top}\mu -b ) = F_W(-w^{\top}\mu -b)
\end{split}
\end{equation}

 We now plug in expression for the max-margin solution.  In the analysis above, we showed that $-b^{*} \in [a_{\min}, a_{\max}]$. Therefore, the error for the positive class 
\begin{equation}
\begin{split}
& F_W(-w^{\top}\mu + a_{\min})  \leq   \mathsf{Err}_{+}   \leq F_W(-w^{\top}\mu + a_{\max}) \\ 
 & F_W(-\alpha \|\mu\| + a_{\min})  \leq   \mathsf{Err}_{+}   \leq F_W(-\alpha \|\mu\| + a_{\max}) \\
 & F_W(-\|\mu\| + \frac{1}{2}\big(X_{\max}(p,\delta,d) - X_{\max}(m,\delta,d) - c(p,\delta,d)-C(m,\delta,d)\big)) \leq \mathsf{Err}_{+} \leq \\
 & F_W(-\alpha\|\mu\| + \frac{1}{2}\big(X_{\max}(p,\delta,d) - X_{\max}(m,\delta,d) + C(p,\delta,d)+c(m,\delta,d)\big))
 \end{split}
\end{equation}
 In the upper bound, we invoked a condition that for the optimal $w = \alpha \hat{\mu} + \beta \hat{y}$, where $\beta = \sqrt{1-\alpha^2}$.  In the lower bound, we set $\alpha$ to one.  Consider the following two cases. 

 \begin{itemize}
 \item \textbf{Balanced class case:} $m=p$
 \begin{equation}
 F_W\bigg(-\|\mu\| - \frac{1}{2}c(p,\delta,d)-\frac{1}{2}C(m,\delta,d)\bigg) \leq \mathsf{Err}_{+} \leq  F_W\bigg(-\alpha \|\mu\| + \frac{1}{2}C(p,\delta,d)+\frac{1}{2}c(m,\delta,d)\bigg)
 \end{equation}
 \item \textbf{Imbalanced class case:} $p>>m$. In this case, the error at least grows as  $$F_W\big(-\|\mu\| + \frac{1}{2}\big(X_{\max}(p,\delta,d) - X_{\max}(m,\delta,d)\big)\big) $$
 \end{itemize}
 We need to show that the upper bound of the balanced case is better than the lower bound of the imbalanced case, which boils down to the following 
 \begin{equation}
 \begin{split}
&F_W\bigg(-\alpha\|\mu\| + \frac{1}{2}C(p,\delta,d)+\frac{1}{2}c(m,\delta,d)\bigg) \leq  \\ &F_W\bigg(-\|\mu\| + \frac{1}{2} \big( X_{\max}(p,\delta,d) - X_{\max}(m,\delta,d) - c(p,\delta,d)-C(m,\delta,d)\big)\bigg)       \\ 
\end{split}
 \end{equation}

  If $\alpha \geq  1 - \frac{X_{\max}(p,\delta,d) - X_{\max}(m,\delta,d) -\bar{c}(p,m, \delta,d)}{2\|\mu\|} $,  where $\bar{c}(p,m, \delta,d) = C(p,\delta,d) + c(p,\delta,d) + C(m,\delta,d)+ c(m,\delta,d) $ then the above inequality holds true.

 We now show that 
 $\alpha \geq  1 - \eta$, where  $ \eta = \frac{X_{\max}(p,\delta,d) - X_{\max}(m,\delta,d) -\bar{c}(p,m, \delta,d)}{2\|\mu\|} $.  Since $\|\mu\|>  X_{\max}(p,\delta,d)$, $\eta< \frac{1}{2}$.  To confirm that $1 - \frac{X_{\max}(p,\delta,d) - X_{\max}(m,\delta,d) -\bar{c}(p,m, \delta,d)}{2\|\mu\|} \leq 1$, we need to check that $X_{\max}(p,\delta,d) \geq X_{\max}(m,\delta,d) + \bar{c}(p,m, \delta,d) \geq 0$.  Observe that $\zeta(p, \delta, d, q)  > 4X_{\max}(m,\delta,d)$, which implies 
 $X_{\max}(p, \delta, d, q)  > 4X_{\max}(m,\delta,d)$.
 $X_{\max}(p,\delta,d) - \big(X_{\max}(m,\delta,d) + \bar{c}(p,m, \delta,d)\big)$, which is lower bounded $3X_{\max}(m,\delta,d)-\bar{c}(p,m, \delta,d)$. Note that the second term $\bar{c}(p,m, \delta,d)$ diminishes to zero for sufficiently large $m$ and $p$ while the first term is positive, which shows that $\alpha \leq 1$. 


Recall the SVM objective is 
\[- 2 \alpha \|\mu\| + \sup_{x \in A_{\mu}} (\alpha \hat{\mu} + \beta \hat{y})^{\top} x  + \sup_{x \in -B_{\mu}} (\alpha \hat{\mu} + \beta \hat{y})^{\top} x \]
where $x\in D(0)$.  For a fixed $\alpha$, let $\hat{y}(\alpha)$ denote the minimizer of the above. 

We compare the SVM objective when $\alpha=1$ to a lower bound on the optimal value achievable if $\alpha<1-\eta$. When $\alpha=1$ the objective becomes
\begin{equation}
-2\|\mu\| +  \sup_{x \in A_{\mu}} (\hat{\mu}^{\top} x) + \sup_{x \in -B_{\mu}} (\hat{\mu}^{\top} x)
\end{equation}

Recall $q = d \log(N/\epsilon + 1) + \log(1/\delta)$, where $N = \max_{i \in A} \|x_i\|$, where $\epsilon= \frac{1}{\log p}$. Similarly, define $\tilde{q} = d \log(\tilde{N}/\epsilon + 1) + \log(1/\delta)$, where $\tilde{N} = \max_{i \in B} \|x_i\|$. Consider the $q^{th}$ and $\tilde{q}^{th}$ highest value for $\hat{\mu}^{\top} x$ on set $A$ and set $B$ respectively. For a fixed $\alpha$, we obtain a lower bound for the SVM objective in terms of $q^{th}$ and $\tilde{q}^{th}$ highest values as follows. 

\[- 2 \alpha \|\mu\| + \alpha \hat{\mu}^{\top} x^{(i)}_{+}  + \alpha \hat{\mu}^{\top} x^{(j)}_{-}  + \beta\hat{y}(\alpha)^{\top} x^{(i)}_{+} + \beta \hat{y}(\alpha)^{\top} x^{(j)}_{-}  \]
where $x^{(j)}_{-}$ has one of the top $q$ projections on $\hat{\mu}$ among the positive samples, where $x^{(i)}_{+}$ has one of the top $\tilde{q}$ projections on $\hat{\mu}$ among the negative samples.  We use Lemma \ref{lem:orderinglemma} to arrive at a lower bound on the SVM objective. To use Lemma \ref{lem:orderinglemma}, we need $\hat{y}^{\top}x_{+}^{(i)}$ to have a median of zero conditional on $\hat{\mu}^{\top}x_{+}^{(i)}$.  We also need a similar condition for $\hat{y}^{\top}x_{-}^{(j)}$ conditional on $\hat{\mu}^{\top}x_{-}^{(j)}$. These conditions follow from  Lemma \ref{lem:median_ord}. 

With probability $1-2\delta$ the lower bound on the objective is
\[\alpha (- 2\|\mu\| + \hat{\mu}^{\top} x^{(i)}_{+}  + \hat{\mu}^{\top} x^{(j)}_{-})  -2\epsilon \]
 
We minimize this lower bound for $\alpha \in [-1, 1-\eta)$ and obtain the following 
\[(1-\eta)(- 2\|\mu\| + \hat{\mu}^{\top} x^{(i)}_{+}  + \hat{\mu}^{\top} x^{(j)}_{-})  -2\epsilon \]

where we use the following fact  $\|\mu\| >  X_{\max}(p,\delta,d) \geq X_{\max}(m,\delta, d) + \bar{c}(p,m,\delta,d)$. 
We will now show that the lower bound above has a very low probability to improve upon the objective value for $\alpha=1$. As a result, optimal $\alpha$ will have
to be more than $1-\eta$. Let us consider the event

\begin{equation}
\begin{split}
(1-\eta)(- 2\|\mu\| + \hat{\mu}^{\top} x^{(i)}_{+}  +  \hat{\mu}^{\top} x^{(j)}_{-})  -2\epsilon \leq  \\
-2\|\mu\| +  \sup_{x \in A_{\mu}} (\alpha \hat{\mu}^{\top} x) + \sup_{x \in -B_{\mu}} (\alpha \hat{\mu}^{\top} x)
\end{split}
\label{eqn: event_lb}
\end{equation}

After rearrangement we get 

\[ \eta\|\mu\|  + \frac{1}{2}(1-\eta) \bigg(\hat{\mu}^{\top} x^{(i)}_{+}  +  \hat{\mu}^{\top} x^{(j)}_{-})\bigg) \leq \frac{1}{2}\bigg(\sup_{x \in A_{\mu}} ( \hat{\mu}^{\top} x) + \sup_{x \in -B_{\mu}} ( \hat{\mu}^{\top} x) + 2\epsilon \bigg)\]


We substitute $\epsilon = 1/\log  p$ and use the expression for $\eta \|\mu\|$  to get
\begin{equation*}
    \begin{split}
        &\big(X_{\max}(p,\delta, d) - X_{\max}(m,\delta,d) -\bar{c}(p,m,\delta, d)\big) +  (1-\eta) \bigg(\hat{\mu}^{\top} x^{(i)}_{+}  +  \hat{\mu}^{\top} x^{(j)}_{-})\bigg)\leq  \\
        & \bigg(\sup_{x \in A_{\mu}} ( \hat{\mu}^{\top} x) + \sup_{x \in -B_{\mu}} ( \hat{\mu}^{\top} x) + \frac{2}{\log p} \bigg)
    \end{split}
\end{equation*}

After further rearrangement we get 

\begin{equation*}
    \begin{split}
        & (1-\eta) \bigg(\hat{\mu}^{\top} x^{(i)}_{+}  +  \hat{\mu}^{\top} x^{(j)}_{-})\bigg)\leq \\
        &  \bigg(\sup_{x \in A_{\mu}} ( \hat{\mu}^{\top} x) + \sup_{x \in -B_{\mu}} ( \hat{\mu}^{\top} x) -  X_{\max}(p,\delta, d) + X_{\max}(m,\delta,d) + \frac{2}{\log p} \bigg)  + \bar{c}(p,m,\delta, d)
    \end{split}
\end{equation*}

From the above we get

\begin{equation*}
\begin{split}
    & (1-\eta) \bigg(\hat{\mu}^{\top} x^{(i)}_{+}  +  \hat{\mu}^{\top} x^{(j)}_{-})\bigg)\leq  \\
    & \bigg(\sup_{x \in A_{\mu}} ( \hat{\mu}^{\top} x) + \sup_{x \in -B_{\mu}} ( \hat{\mu}^{\top} x) -  X_{\max}(p,\delta, d) + X_{\max}(m,\delta,d) + \frac{2}{\log p} \big)  + \bar{c}(p,m,\delta, d)
\end{split}
\end{equation*}


Since $\eta< \frac{1}{2}$  we can further simplify the LHS with a weaker lower bound

\[ \frac{1}{2} \bigg(\hat{\mu}^{\top} x^{(i)}_{+}  +  \hat{\mu}^{\top} x^{(j)}_{-})\bigg) \leq \bigg(\sup_{x \in A_{\mu}} ( \hat{\mu}^{\top} x) + \sup_{x \in -B_{\mu}} ( \hat{\mu}^{\top} x) -  X_{\max}(p,\delta, d) + X_{\max}(m,\delta,d) + \frac{2}{\log p} \bigg)  + \bar{c}(p,m,\delta, d) \]

We write an upper bound for RHS using the concentration condition. 

\[ \frac{1}{2} \bigg(\hat{\mu}^{\top} x^{(i)}_{+}  +  \hat{\mu}^{\top} x^{(j)}_{-})\bigg) \leq 2X_{\max}(m,\delta,d)  + \bar{c}(p,m,\delta, d) + C(m, \delta, d) + C(p, \delta, d) + \frac{2}{\log p} \]



Using the lower bound from the concentration condition in Assumption \ref{assm: concentration_qth_order_main}, we get the following lower bound

\[ \zeta(m,\delta, d, \tilde{q}) +  \zeta(p, \delta, d, q) \leq 4X_{\max}(m,\delta,d)  + 2\bar{c}(p,m,\delta, d) + 2C(m, \delta, d) + 2C(p, \delta, d) + \frac{4}{\log p} \] 

Since $\zeta(p, \delta, d, q)  > 4X_{\max}(m,\delta,d)  $ for a sufficiently large $m$ and $p$ we gather that $\zeta(p, \delta, d, q) + \zeta(m,\delta, d, \tilde{q}) \geq  4X_{\max}(m,\delta,d)  + 2\bar{c}(p,m,\delta, d) + 2C(m, \delta, d) + 2C(p, \delta, d) + \frac{4}{\log p}$. As a result, the above inequality does not hold. Therefore, the event in  \eqref{eqn: event_lb} occurs with a probability at most $2\delta$. As a result, we obtain that $\alpha \geq 1-\eta$ with a probability at least $1-4\delta$. We showed above that if $\alpha \geq 1-\eta$, then  with probability at least $1-4\delta$, worst class error improves under data balancing. The intersection of these two events occurs with a probability at least $1-8\delta$.

\end{proof}

\paragraph{Expression for $\zeta$}
In this section,  our goal is to derive a lower bound on the $q^{th}$ maximum projection of $\hat{\mu}$ across different data samples. We denote $V =\hat{\mu}^{\top}X$.

We first make some observations that we use subsequently. 
Consider the event $V^{(q)}>t$, where $V^{(q)}$ is $q^{th}$ highest value of $V$ among $p$ samples. Suppose the CDF of $V$ is $F_V$. Find a value $r$ such that $F_V(t)=1-r$. This denotes $r$ fraction of $V$ is greater than $t$. Define $U_i = I(V_i>t)$, where $I$ is the indicator function and $U_i$ is one when $V_i>r$ and zero otherwise. 
Consider the event 
$$\sum_{i=i}^{p}U_i>q$$
If the above event is true, then that implies there are at least $q$ values that are above $t$ and thus $V^{(q)}>t$. Also, if $V^{(q)}>t$, then there exist at least $q$ $U_i$'s that are one. Thus the above two events are equivalent. The expectation $\mathbb{E}[\sum_{i=1}^{p}U_i] = pr$. Let $q=\frac{pr}{2}$. 
We use Chernoff bound to arrive at the following bound
$$\Pr(\sum_{i=i}^{p}U_i<q) < e^{-\frac{pr}{8}}$$

We set $\frac{pr}{8} = d \log(\log p\max_{x_i\in A}\|x_i\|) + \log(1/\delta)$. 

Therefore, $r=8\frac{d \log p \log(\max_{x_i\in A}\|x_i\|/\epsilon) + \log(1/\delta)}{p}$. For the case of symmetric $d$ dimensional Gaussians centered at zero we get, $\frac{pr}{8} = d \log(\log p \sqrt{d\log p})) + \log(1/\delta) $. We simplify the bound $e^{-\frac{pr}{8}}$ as follows. 
$$e^{-\frac{pr}{8}} \leq  e^{-d \log(\log p \sqrt{d\log p}))} \leq \frac{1}{(\log p \sqrt{d \log p})^d}$$

For sufficiently large $p$, the probability falls below any $\delta$. 

We now derive a bound on $t$. 

Recall $F(t) = 1-r$, which simplifies for a Gaussian to $Q(t) = r$, where $Q$ is the $Q$ function. Since $Q(t) \leq e^{-t^2}$. We get $e^{-t^2} \geq r$, which implies $$t\leq \sqrt{2\log\frac{1}{r}} = \sqrt{2 \log \frac{p}{8(d \log(\log p \sqrt{d\log p}) + \log(1/\delta)) }} $$
Hence, we can use $\zeta(p, \delta, d) = \sqrt{2 \log \frac{p}{8(d \log(\log p \sqrt{d\log p}) + \log(1/\delta)) }} $ for symmetric Gaussian distributions.


\imbclasscor*

 

\begin{proof}
To prove this Corollary, we leverage Theorem \ref{thm: imb_class_highd} and its proof.  In \eqref{eqn: lb_b} and \eqref{eqn: ub_b} we derive the upper and the lower bounds for the bias. For a $d$ dimensional spherically symmteric Gaussian, the expressions for concentration condition are derived in Lemma \ref{lem:1dgaussianmax_new}. For a sufficiently large $m$, $p$, the bias is centered at $$ \sqrt{2\log \frac{p}{\delta}} - \sqrt{2\log \frac{m}{\delta}}$$ Observe that $X_{\max}(p, \delta, d) \approx \sqrt{2\log \frac{p}{\delta}}$ and $X_{\max}(m, \delta, d) \approx \sqrt{2\log \frac{m}{\delta}}$. If we subsample, then we are in the case, where  $X_{\max}(p, \delta, d) = X_{\max}(m, \delta, d)$ and as a result the bias term is centered at zero. 
If $m$ grows as $\log p$, then the upper bound on  $X_{\max}(m, \delta, d)$ is $\sqrt{2\log \log p}$. As a result, the condition that  $\zeta(p, \delta, d,q) > 4X_{\max}(m, \delta, d)$ is satisfied for sufficiently large $p$. Finally, if $\|\mu\|$ for the mean of the Gaussian is more than $\sqrt{2\log(p/\delta)} - \sqrt{2\log(m/\delta)}$,  then it follows from the previous theorem that subsampling improves the worst group error.   
For a $2$ dimensional symmetric uniform, the expressions for the concentration condition are derived in Lemma \ref{lem:1dunif_dir}. For a sufficiently large $m$, $p$, the bias term is centered at zero with the interval given as 
$$ \bigg[-\frac{\varrho}{p^{2/3}}-\frac{1}{p}, \frac{1}{m} + \frac{\varrho}{m^{2/3}}\bigg]$$
where $\varrho$ is a constant whose expression can be obtained from Lemma \ref{lem:1dunif_dir}.
\end{proof}

\section{Proofs for imbalanced groups}
\label{sec:proof_groups}
\subsection{Two-dimensional case}
\label{sec:proof_groups_2d}

\begin{lemma}\label{lem:1dgaussianmax}
Let $x_1, \ldots, x_n$ be $n$ i.i.d unit Gaussians and let $X_{\max} = \max(x_1, \ldots, x_n)$. Then for $n$ large enough, with probability $\geq 1 - 3\delta$, we have that: 
\[ X_{\max} \leq b_n + a_n \log (1/\delta), \quad X_{\max} \geq b_n - a_n \log \log (1/\delta) \]
where $a_n=\frac{1}{\sqrt{2\log(n)}}$ and $b_n=\sqrt{2\log(n)}- \frac{\log\log n + \log(4\pi)}{\sqrt{2\log n}}$ are the constants in the Fisher-Tippett-Gnedenko theorem when applied to Gaussians.
\end{lemma}

\begin{proof} From the Fisher-Tippett-Gnedenko theorem, when $n$ is large enough, we have that $ X_{\max} \xrightarrow{d} a_n Z + b_n$, where $\xrightarrow{d}$ stands for convergence in distribution, and $Z$ is a standard Gumbel random variable. If $n$ is sufficiently large, then for any $t \in \mathbb{R}$,  $|\Pr(X_{\max}  \leq t)  - \Pr(a_n Z + b_n \leq t)| \leq \delta/2$. 

For a standard Gumbel variable $Z$, we have that:
\[ \Pr(Z \leq \log (1/\delta)) = \exp(-\exp(-\log(1/\delta))) = \exp(-\delta) \geq 1-\delta \]
As a result, 
\[ \Pr(X_{\max} \leq b_n + a_n \log (1/\delta)  \geq 1-\frac{3\delta}{2} \]
Additionally, we have:
\[ \Pr(Z \geq - \log \log (1/\delta)) = 1-\exp(-\exp(\log\log(1/\delta))) = 1-\delta \]
As a result, 
\[ \Pr(X_{\max} \geq b_n-a_n\log\log (1/\delta))  \geq 1-\frac{3\delta}{2} \]

Finally, if we take a union bound on the complement of the above two events and then the lemma follows.  
\end{proof}

\begin{lemma}\label{lem:gaussiannorm}
Let $x_1, \ldots, x_n$ be vectors in $\mathbb{R}^2$ drawn i.i.d from $N(0, I_2)$. Then, with probability $\geq 1 - \delta$, 
\[ \max_i \|x_i\| \leq  \sqrt{2 \log (n/\delta)} \]
\end{lemma}

\begin{proof} $\|x_i\| $  follows a Rayleigh distribution and we use this observation to arrive at the above result. 
\begin{equation}
    \begin{split}
        & \Pr\bigg(\max_i \|x_i\| \leq  \sqrt{2 \log (n/\delta)}\bigg)  = 1- \Pr\bigg(\max_i \|x_i\| \geq  \sqrt{2 \log (n/\delta)}\bigg) \geq  \\  
        & 1 - n \Pr\bigg(\|x_i\|\geq  \sqrt{2\log(n/\delta)}\bigg) =  1- n e^{-\log(n/\delta)} = 1-\delta
    \end{split}
\end{equation}
\end{proof}


\begin{lemma}\label{lem:1dgaussianmax_dir}
Let $x_1, \ldots, x_n$ be $n$ i.i.d unit Gaussians with covariance $I_2$.  Then for $n$ large enough, with probability $\geq 1 -  \delta$, we have that for all directions $v\in \mathbb{R}^2$: 
\[  \max_{i\in\{1,\cdots,n\}}\{v^{\top}x_i\} \leq  b_n + a_n +a_n \log \bigg(\frac{6\sqrt{2\log (2n/\delta)}}{a_n\delta}\bigg),\] 
\[\max_{i\in\{1,\cdots,n\}}\{v^{\top}x_i\}  \geq b_n - a_n- a_n \log\log \bigg(\frac{6\sqrt{2\log (2n/\delta)}}{a_n\delta}\bigg) \]
where $a_n$ and $b_n$ are the constants in the Fisher-Tippett-Gnedenko theorem when applied to Gaussians.
\end{lemma}
\begin{proof} 
Suppose $f(v) = \max_{i} v^{\top} x_i$ where $v$ is a unit vector in $\mathbb{R}^2$. Then, 
\[ f(v) - f(u) = \max_i v^{\top} x_i - \max_i u^{\top} x_i \leq \max_i (v - u)^{\top} x_i \leq \| v - u \| \cdot \max_i \|x_i\| \]
where the first step follows from definition, the second step from subtracting a smaller quantity, and the last step from the Cauchy-Schwartz inequality. 

From Lemma~\ref{lem:gaussiannorm}, with probability $\geq 1 - \delta/2$, $\max_i \|x_i\| \leq  \sqrt{2\log (2n/\delta)}$, which gives us:
\[ f(v) - f(u) \leq  \sqrt{2\log (2n/\delta)} \cdot \| v - u \| \]

Now, we can build an $\epsilon$-cover $C(\epsilon)$ over unit vectors on the circle so that successive vectors $v_i$ and $v_{i+1}$ have the property that $\|v_i - v_{i+1}\| \leq \epsilon$. The size of such an $\epsilon$-cover is $N(\epsilon) = 1/\epsilon$; additionally, for any unit vector $v$ in $\mathbb{R}^2$, there exists some $v_i$ in the cover such that 
\[ f(v_i) -  \sqrt{2\log (2n/\delta)} \epsilon \leq f(v) \leq f(v_i) +  \sqrt{2\log (2n/\delta)} \epsilon \]

$f(v_i)$ is the maximum over $n$ i.i.d. standard Gaussians $N(0,1)$. From Lemma~\ref{lem:1dgaussianmax}, we know 

\[ b_n - a_n \log\log (1/\delta)\leq f(v_i) \leq b_n + a_n \log (1/\delta)\]

Now we can apply Lemma~\ref{lem:1dgaussianmax} with $\delta = \frac{\delta}{6N(\epsilon)}$ plus an union bound over the cover $C(\epsilon)$ to get that for all $v_i$ in the cover, 
\[ b_n - a_n \log\log (6/(\epsilon\delta))\leq f(v_i) \leq b_n + a_n \log (6/(\epsilon\delta))\]

For all directions $v\in \mathbb{R}^{2}$

\[ b_n - a_n \log\log (6/(\epsilon\delta)) -  \sqrt{2\log (2n/\delta)} \epsilon \leq f(v) \leq b_n + a_n \log (6/(\epsilon\delta)) +  \sqrt{2\log (2n/\delta)} \epsilon \]
Plugging in $\epsilon = \frac{a_n}{ \sqrt{2\log (2n/\delta)}}$ in the above expression we get.

\[ b_n - a_n- a_n \log\log \bigg(\frac{6\sqrt{2\log (2n/\delta)}}{a_n\delta}\bigg)  \leq f(v) \leq b_n + a_n +a_n \log \bigg(\frac{6\sqrt{2\log (2n/\delta)}}{a_n\delta}\bigg)    \]

\end{proof}
\begin{lemma}
	Consider the density: $f(t) = \frac{2}{\pi}\sqrt{1 - t^2}$ for $t \in [0, 1]$ and $f(t) = 0$ otherwise. Let $F$ be the corresponding CDF and let $U(t) = F^{-1}(1 - 1/t)$. Then, the following facts hold:
	\begin{enumerate}
		\item \[ \lim_{h \rightarrow 0} \frac{1 - F(1 - xh)}{1 - F(1 - h)} = x^{3/2} \]
		\item $1 - U(n) \geq \left( \frac{3 \pi}{4 \sqrt{2} n} \right)^{2/3}$. 
	\end{enumerate}
\label{lem:boundsuniform}
\end{lemma}

\begin{proof} The first part follows by integration by substitution and Taylor expansion of $\theta - \frac{\sin 2 \theta}{2}$ around $\theta = 0$. 

To see the first part, observe that:
	\[ 1 - F(1 - h) = \int_{1 - h}^{1} \frac{2}{\pi} \cdot \sqrt{1 - t^2} dt \]
We now calculate this integral by substitution. Let $t = \cos \theta$, then $dt = -\sin \theta d\theta$, and the limits of the integral become $\cos^{-1}(1 - h)$ to $0$. The integral becomes:
	\begin{eqnarray*}
		\int_{0}^{\cos^{-1}(1 - h)} \frac{2}{\pi} \cdot \sin^2 \theta d\theta =  \int_{0}^{\cos^{-1}(1 - h)} \frac{1}{\pi} \cdot \left(1 - \cos 2 \theta\right) d\theta = \frac{1}{\pi} \cdot \left( \theta - \frac{\sin 2 \theta}{2} \right) \Big{|}_{0}^{\cos^{-1}(1 - h)} 
	\end{eqnarray*}
A Taylor series expansion of $\sin 2 \theta$ shows that $\sin 2 \theta = 2 \theta - \frac{8 \theta^3}{3!} + o(\theta^3)$; therefore 
	\[ \theta - \frac{\sin 2 \theta}{2} = \frac{2 \theta^3}{3} + o(\theta^3), \]
	which brings the result of the integral to $\frac{4 (\cos^{-1}(1 - h))^3}{ 3 \pi} + o((\cos^{-1}(1 - h))^3)$. Observe through a Taylor expansion that 
	\[ \cos^{-1}(1 - h) = \sin^{-1}(\sqrt{h(2 - h)}) = \sqrt{h(2 - h)} - o(h(2 - h)),\]
and hence 
	\begin{equation} 
 \label{eqn:asympotic} 
 \lim_{h \rightarrow 0} \frac{1}{\pi} \cdot \left( \theta - \frac{\sin 2 \theta}{2} \right) \Big{|}_{0}^{\cos^{-1}(1 - h)} = \frac{1}{\pi} \cdot \frac{2 (2h)^{3/2}}{3}, 
 \end{equation}
from which the first part of the lemma follows. 
	For the second part, we observe that from the definition of $U(n)$, we have that $1 - U(n) = h$, where:
	\[ \int_{1-h}^{1} \frac{2}{\pi}\cdot \sqrt{1 - t^2} dt = \frac{1}{n} \]
	From \eqref{eqn:asympotic}, observe that for small enough $h$ (which corresponds to large enough $n$), the left hand side is at most $\frac{4 \sqrt{2}}{3 \pi} h^{3/2}$. This implies that $h = 1 - U(n) \geq \left(\frac{3 \pi}{4 \sqrt{2} n} \right)^{2/3}$ and the lemma follows.  
 \end{proof}
\begin{lemma}
\label{lem:1dunif}
	Consider the density: $f(t) = \frac{2}{\pi}\sqrt{1 - t^2}$ for $t \in [0, 1]$ and $f(t) = 0$ otherwise. Let $x_1, \ldots, x_n$ be $n$ drawn i.i.d from $f$ and let $X_{\max} = \max(x_1, \ldots, x_n)$. Then for $n$ large enough, with probability $\geq 1 - \delta$, we have that: 
\[ X_{\max} \leq 1, \quad X_{\max} \geq 1 - \left( \frac{3 \pi\log (2/\delta)}{4 \sqrt{2} n} \right)^{2/3} \]
 \end{lemma}
\begin{proof} 
	Observe that for this distribution, $x_F = 1$. From this, and the first part of Lemma~\ref{lem:boundsuniform}, it follows that this distribution is of the Weibull type with $\alpha = 3/2$. From the Fisher-Tippett-Gnedenko Theorem, this means that the maximum of $n$ points converges to $a_n Z + b_n$ in distribution, where $a_n = 1 - U(n)$, $b_n = 1$, and $Z$ is a reverse Weibull distributed variable with $\alpha = 3/2$. Setting $X_{\max}(n, \delta) = 1$, we get that $C(n, \delta) = 0$. 

	To calculate $c(n, \delta)$, we observe that from the second part of Lemma~\ref{lem:boundsuniform}, $a_n \geq \left( \frac{3 \pi}{4 \sqrt{2} n} \right)^{2/3}$. Additionally, if $Z$ is a reverse Weibull variable with parameter $\alpha = 3/2$, then,
	\[ \Pr(Z \leq - (\log (2/\delta)^{2/3})) = \exp(-(\log (2/\delta))^{2/3})^{3/2} = \exp(-(\log (2/\delta)) = \delta/2 \]
	Therefore, $\Pr\Big(a_n Z + b_n \leq 1 - \left( \frac{3 \pi\log (2/\delta)}{4 \sqrt{2} n} \right)^{2/3}\Big) \leq \delta/2$. We get another $\delta/2$ from the distributional convergence of the maximum of $n$ random variables to the limit for large enough $n$. 
\end{proof}

 \begin{lemma}
 \label{lem:1dunif_dir}
 Let $x_1, \ldots, x_n$ be $n$ drawn i.i.d from symmetric uniform distribution centered at zero. Then for $n$ large enough, with probability $\geq 1 -  \delta$, we have that for all directions $v\in \mathbb{R}^2$: 
\[  \max_{i\in\{1,\cdots,n\}}\{v^{\top}x_i\} \leq 1\]
\[\max_{i\in\{1,\cdots,n\}}\{v^{\top}x_i\}  \geq 1 - \left( \frac{3 \pi\log (2n/\delta)}{4 \sqrt{2} n} \right)^{2/3}-\frac{1}{n}\]
 \end{lemma}

 \begin{proof}
 Suppose $f(v) = \max_{i} v^{\top} x_i$ where $v$ is a unit vector in $\mathbb{R}^2$. Then, 
\[ f(v) - f(u) = \max_i v^{\top} x_i - \max_i u^{\top} x_i \leq \max_i (v - u)^{\top} x_i \leq \| v - u \| \cdot \max_i \|x_i\| \]
where the first step follows from definition, the second step from subtracting a smaller quantity, and the last step from the Cauchy-Schwartz inequality. 

Note that  $\max_i \|x_i\| \leq  1$, which gives us:
\[ f(v) - f(u) \leq   \cdot \| v - u \| \]

Now, we can build an $\epsilon$-cover $C(\epsilon)$ over unit vectors on the circle so that successive vectors $v_i$ and $v_{i+1}$ have the property that $\|v_i - v_{i+1}\| \leq \epsilon$. The size of such an $\epsilon$-cover is $N(\epsilon) = 1/\epsilon$; additionally, for any unit vector $v$ in $\mathbb{R}^2$, there exists some $v_i$ in the cover such that 
\[ f(v_i) -  \epsilon \leq f(v) \leq f(v_i) +   \epsilon \]

Observe that $f(v_i)$ is a maximum over $n$ i.i.d. random variables drawn from a distribution $f(t) = \frac{2}{\pi}\sqrt{1 - t^2}$ for $t \in [0, 1]$ and $f(t) = 0$ otherwise. 
Now we can apply Lemma~\ref{lem:1dunif} with $\delta = \frac{\delta}{N(\epsilon)}$ plus an union bound over the cover $C(\epsilon)$ to get that for all $v_i$ in the cover, 
\[1 - \left( \frac{3 \pi\log (2N(\epsilon)/\delta)}{4 \sqrt{2} n} \right)^{2/3}\leq f(v_i) \leq 1\]

For all directions $v\in \mathbb{R}^{2}$
\[1 - \left( \frac{3 \pi\log (2N(\epsilon)/\delta)}{4 \sqrt{2} n} \right)^{2/3} -\epsilon \leq f(v) \leq 1\]

Plugging in $\epsilon = \frac{1}{n}$ in the above expression we get.
\[1 - \left( \frac{3 \pi\log (2n/\delta)}{4 \sqrt{2} n} \right)^{2/3} -\frac{1}{n}\leq f(v) \leq 1\]
 \end{proof}

\begin{lemma}[Approximate Maximization Lemma - I]
Let $F(\alpha) = f(\alpha) + g(\alpha)$ where $g(\alpha) = \alpha u + \sqrt{1 - \alpha^2} v$, $u, v > 0$, and $f(\alpha)$ is an arbitrary function of $\alpha$ that lies in the interval $[-L, U]$. Let $\alpha_F$ be the value of $\alpha$ that maximizes $F(\alpha)$, and let $\alpha_g = \frac{u}{\sqrt{u^2 + v^2}}$ be the value of $\alpha$ that maximizes $g(\alpha)$. 

	Then, the angle between $(\alpha_F, \sqrt{1 - \alpha_F^2})$ and $(\alpha_g, \sqrt{1 - \alpha_g^2})$ is at most $\cos^{-1}\left( 1 - \frac{L + U}{\sqrt{u^2 + v^2}}\right)$. Additionally, the maximum value of $F(\alpha)$ is at least $\sqrt{u^2 + v^2} - L$. 
\label{lem:approxmax1}
\end{lemma}

\begin{proof}
	For convenience, we can do a quick change of variables -- we let $\alpha = \cos \theta$. Then $g(\theta) = u \cos \theta + v \sin \theta$, and is maximized at $\theta_g = \cos^{-1}\left(\frac{u}{\sqrt{u^2 + v^2}}\right)$. This means we can re-write $g$ as follows:
\begin{eqnarray*}
	g(\theta) & = & \sqrt{u^2 + v^2} \cdot (\cos \theta_g \cos \theta + \sin \theta_g \sin \theta) \\
	& = & \sqrt{u^2 + v^2} \cdot \cos(\theta_g - \theta) 
\end{eqnarray*}

Similarly, we can do a change of variables on $F$ and $f$ as well. Suppose the value of $\theta$ that maximizes $F$ is $\theta_F$. Then we have that:
	\[ f(\theta_g) + \sqrt{u^2 + v^2} \leq f(\theta_F) + \sqrt{u^2 + v^2} \cos (\theta_g - \theta_F) \]
Since $f(\theta_g) \geq -L$ and $f(\theta_F) \leq U$, this gives us:
	\[ -L + \sqrt{u^2 + v^2} \leq U + \sqrt{u^2 + v^2} \cos (\theta_g - \theta_F) \]
The lemma follows from simple algebra.
\end{proof}

\begin{lemma}[Approximate Maximixation Lemma - II]
	Let $F(\alpha) = f(\alpha) + g(\alpha)$ where $g(\alpha) = \alpha u - \sqrt{1 - \alpha^2} v$, $u, v > 0$, and $f(\alpha)$ is an arbitrary function of $\alpha$ that lies in the interval $[-L, U]$. Let $\alpha_F$ be the value of $\alpha$ that maximizes $F(\alpha)$, and let $\alpha_g = 1$ be the value of $\alpha$ that maximizes $g(\alpha)$. Then, $\alpha_F \geq 1 - \frac{U + L}{u + v}$.
\label{lem:approxmax2}
\end{lemma}

\begin{proof}
To show the lemma, we observe that since $f(\alpha) \in [-L, U]$,
	\[ -L + u \leq U + \alpha_F u - \sqrt{1 - \alpha_F^2} v \]
	which implies $u (1 - \alpha_F) + v \sqrt{1 - \alpha_F^2} \leq L + U$. This will hold when $1 - \alpha_F \leq \frac{U + L}{u + v}$. The lemma follows. 
\end{proof}

Denote set of points in positive class as $B$ and the set of points in negative class as $A$. We simplify the optimal solution to SVM as follows. 

\begin{equation}
    \begin{split}
         w & = \argmax_{\|w\| = 1} \inf_{x \in B} w^{\top} x - \sup_{x \in A} w^{\top} x  \\ 
         & = \argmin_{\|w\| = 1} -\inf_{x \in B} w^{\top} x + \sup_{x \in A} w^{\top} x  \\ 
         & = \argmin_{\|w\| = 1} \sup_{x \in B} -w^{\top} x + \sup_{x \in A} w^{\top} x  \\ 
         & = \argmin_{\|w\| = 1} \sup_{x \in -B} w^{\top} x + \sup_{x \in A} w^{\top} x
    \end{split}
\end{equation}

We write the classifier as $w = \alpha \hat{\mu} + \sigma \beta \hat{\psi}$, where $\hat{\mu}$ is a unit vector in the direction $\mu$, $\hat{\psi}$ is a unit vector in the direction $\psi$, $\alpha \in [-1,1]$, $\beta =  \sqrt{1 - \alpha^2}$ and $\sigma$ is either $+1$ or $-1$.  

Define the set $A_{\mu} = \{ x+ \mu, \forall x \in A \}$, and the set $-B_{\mu} = \{ x+ \mu, \forall x \in -B \}$.

 With this, and some algebraic simplification the SVM optimization problem becomes:
\[ \alpha^* = \argmin_{\alpha \in [-1, 1], \sigma \in \{-1, 1\}} \sup_{x \in A_{\mu}} (\alpha \hat{\mu} + \sigma \beta \hat{\psi})^{\top} (x - \mu) + \sup_{x \in -B_{\mu}} (\alpha \hat{\mu} + \sigma \beta \hat{\psi})^{\top} (x - \mu) \]
\weibulgroup*


\begin{proof}

We write $w^* = \alpha^* \hat{\mu} + \sigma \beta^* \hat{\psi}$. Recall that $\alpha^*$ is a solution to:
\[ \alpha^* = \argmin_{\alpha \in [-1, 1], \sigma \in \{-1, 1\}} \sup_{x \in A_{\mu}} (\alpha \hat{\mu} + \sigma \beta \hat{\psi})^{\top} (x - \mu) + \sup_{x \in -B_{\mu}} (\alpha \hat{\mu} + \sigma \beta \hat{\psi})^{\top} (x - \mu) \]
where $\beta = \sqrt{1 - \alpha^2}$. We next consider a further split of the positive class into the majority and minority groups -- $A^M_{\mu}$ and $A^m_{\mu}$. 

\[ \sup_{x \in A_{\mu}} v^{\top} x = \max \left( \sup_{x \in A^M_{\mu}} v^{\top} x, \sup_{x \in A^m_{\mu}} v^{\top} x \right), \]

Define sets $A_{\mu, \psi}^{M} = -\psi + A^M_{\mu}$  and $A_{\mu, \psi}^{m} = \psi + A^m_{\mu}$. We can write 
\begin{eqnarray*}
	&&  \sup_{x \in A_{\mu}} (\alpha \hat{\mu} + \sigma \beta \hat{\psi})^{\top} (x - \mu) \\
	& = & \max \left(  \sup_{x \in A^M_{\mu}} (\alpha \hat{\mu} + \sigma \beta \hat{\psi})^{\top} (x - \mu), \sup_{x \in A^m_{\mu}} (\alpha \hat{\mu} + \sigma \beta \hat{\psi})^{\top} (x - \mu) \right) \\
& = & \max \left( \sup_{x \in A^M_{\mu, \psi}} (\alpha \hat{\mu} + \sigma \beta \hat{\psi})^{\top} x - \alpha \|\mu\| + \sigma \beta \|\psi\|, \sup_{x \in A^m_{\mu, \psi}} (\alpha \hat{\mu} + \sigma \beta \hat{\psi})^{\top} x - \alpha \|\mu\| - \sigma \beta \|\psi\| \right),
\end{eqnarray*}
A similar expression will hold for $B_{\mu}$. Define sets $B_{\mu, \psi}^{M} = \psi + B^M_{\mu}$  and $B_{\mu, \psi}^{m} = \psi + B^m_{\mu}$. 
We can write 
\[\sup_{x \in -B_{\mu}} (\alpha \hat{\mu} + \sigma \beta \hat{\psi})^{\top} (x - \mu) \]
\[\max \left( \sup_{x \in -B^M_{\mu, \psi}} (\alpha \hat{\mu} + \sigma \beta \hat{\psi})^{\top} x - \alpha \|\mu\| + \sigma \beta \|\psi\|, \sup_{x \in -B^m_{\mu, \psi}} (\alpha \hat{\mu} + \sigma \beta \hat{\psi})^{\top} x - \alpha \|\mu\| - \sigma \beta \|\psi\| \right)\]
Define 
\[ f_1(\alpha)  = \sup_{x \in A^M_{\mu, \psi}} (\alpha \hat{\mu} + \sigma \beta \hat{\psi})^{\top} x, \quad f_2(\alpha)  = \sup_{x \in A^m_{\mu, \psi}} (\alpha \hat{\mu} + \sigma \beta \hat{\psi})^{\top} x \]
\[ f_3(\alpha)  = \sup_{x \in -B^M_{\mu, \psi}} (\alpha \hat{\mu} + \sigma \beta \hat{\psi})^{\top} x, \quad f_4(\alpha)  = \sup_{x \in -B^m_{\mu, \psi}} (\alpha \hat{\mu} + \sigma \beta \hat{\psi})^{\top} x \]

We split up the SVM objective, and begin with two cases:

\paragraph{Case 1: $\sigma = 1$.}  Here, the SVM objective becomes: 

\begin{equation*}
    \begin{split}
      &  F(\alpha) = \min_{\alpha}\bigg\{\max( f_1(\alpha) - \alpha\|\mu\| + \beta \|\psi\|, f_2(\alpha) - \alpha \|\mu\| - \beta \|\psi\|) + \\
      &   + \max( f_3(\alpha) - \alpha \|\mu\| + \beta \|\psi\|, f_4(\alpha) - \alpha \|\mu\| - \beta\| \psi\|) \bigg\}
    \end{split}
\end{equation*}

Recall the concentration condition holds for $D(0)$. Since the size of the majority group and minority group both $\geq n_0$, we obtain 
\begin{equation}
f_1(\alpha), f_3(\alpha) \in  [X_{\max}(p,\delta,2) - c(p, \delta, 2),   X_{\max}(p,\delta,2) + C(p, \delta,2)], 
\end{equation}
and also:
\begin{equation}
f_2(\alpha), f_4(\alpha) \in [X_{\max}(m, \delta, 2) - c(m, \delta, 2),   X_{\max}(m, \delta, 2) + C(m, \delta, 2)]
\end{equation}

We now look at two possible cases for $\alpha$ to determine what the inside maximum will look like. The first case is for large $\beta$ -- where $$\beta \geq \frac{C(m, \delta, 2) - c(p, \delta, 2) - (X_{max}(p,\delta,2) - X_{\max}(m, \delta, 2))}{2\|\psi\|}$$ and the objective simplifies to. 


\[ F(\alpha) = \min_{\alpha} f_1(\alpha) + f_3(\alpha) - 2 \alpha \|\mu\| + 2 \beta \|\psi\|, \]

	The solution to this, from Lemma~\ref{lem:approxmax2} is $\alpha^* \geq 1 - \frac{C(p,\delta,2) + c(p,\delta,2)}{\|\mu\| + \|\psi\|}$, which converges to the relevant solution as $p \rightarrow \infty$.
	The other case is where $\beta$ is small -- namely, $\beta \leq \frac{C(m, \delta, 2) - c(p,\delta,2)}{2\|\psi\|}$ (here we use the fact that $X_{\max}(p,\delta,2)\geq X_{\max}(m, \delta, 2)$. Since $\alpha^2 + \beta^2 = 1$, this implies that $\alpha$ is now close to $1$ -- specifically, $\alpha \geq \sqrt{1 - \left(\frac{C(m, \delta, 2) - c(p,\delta,2)}{2\|\psi\|})\right)^2} \rightarrow 1$ as $p \rightarrow \infty$ by the conditions of the theorem. This means that in both cases, the inner maximum is achieved when $\alpha$ is close to $1$. 

\paragraph{Case 2: $\sigma = -1$.} In this case, the SVM objective becomes:
\begin{equation*}
    \begin{split}
       &  F(\alpha) =   \min_{\alpha} \bigg\{ \max( f_1(\alpha) - \alpha \|\mu\| - \beta \|\psi\|, f_2(\alpha) - \alpha \|\mu\| + \beta \|\psi\|) +  \\
       &\max( f_3(\alpha) - \alpha \|\mu\| - \beta \|\psi\|, f_4(\alpha) - \alpha \|\mu\| + \beta \|\psi\|) \bigg\} 
    \end{split}
\end{equation*}

	We again do a case by case analysis. We say that $\beta$ is large if $$\beta \geq \beta_{\mathsf{th}} = \frac{X_{\max}(p,\delta,2) - X_{\max}(m, \delta, 2) + C(p,\delta,2) + c(m, \delta, 2)}{2\|\psi\|}$$  In this case, the SVM objective becomes:
	\[ F(\alpha) = f_2(\alpha) + f_4(\alpha) - 2 \|\mu\| \alpha + 2 \beta \|\psi\| \]
Since $\beta\geq \beta_{\mathsf{th}}$, $-\sqrt{1-\beta_{\mathsf{th}}^2}\leq \alpha \leq \sqrt{1-\beta_{\mathsf{th}}^2}$.
From Lemma~\ref{lem:approxmax2_n}, the solution to this is $\alpha \geq \sqrt{1-\beta_{\mathsf{th}}^2} - \frac{C(m, \delta, 2) + c(m, \delta, 2)}{\|\mu\| + \|\psi\|} \rightarrow 1$ as $m \rightarrow \infty$.
	The other case is for small $\beta$, where $$\beta \leq \frac{X_{\max}(p,\delta,2) - X_{\max}(m, \delta, 2) + C(p,\delta,2) + c(m, \delta, 2)}{2\|\psi\|}$$ 
 
 Since $\alpha^2 + \beta^2 = 1$, here $\alpha$ by definition satisfies $$\alpha \geq \sqrt{1 -  \left(\frac{X_{\max}(p,\delta,2) - X_{\max}(m, \delta, 2) + C(p,\delta,2) + c(m, \delta, 2)}{4\|\psi\|}\right)^2} \rightarrow 1$$ 
 
 As $p, m \rightarrow \infty$ from the condition in the theorem $\alpha \rightarrow 1$.  This means that in all four cases, the optimum is achieved when $\alpha$ is close to $1$. The theorem follows.
\end{proof}

\spugumbgroup* 



\begin{proof} 
Recall $w^* = \alpha^* \hat{\mu} + \sigma \beta^* \hat{\psi}$, where $\alpha^*$ is a solution to:
\[ \alpha^* = \argmin_{\alpha \in [-1, 1], \sigma \in \{-1, 1\}} \sup_{x \in A_{\mu}} (\alpha \hat{\mu} + \sigma \beta \hat{\psi})^{\top} (x - \mu) + \sup_{x \in -B_{\mu}} (\alpha \hat{\mu} + \sigma \beta \hat{\psi})^{\top} (x - \mu) \]
where $\beta = \sqrt{1 - \alpha^2}$.  We follow a similar strategy as the previous proof of Theorem \ref{thm: invariant_weibull} and look at two cases -- $\sigma = 1$ and $-1$.

\paragraph{Case 1: $\sigma = 1$.}  Here, the SVM objective becomes: 
\begin{equation*}
    \begin{split}
       &  F(\alpha) = \min_{\alpha} \bigg\{ \max( f_1(\alpha) - \alpha \|\mu\| + \beta \|\psi\|, f_2(\alpha) - \alpha \|\mu\| - \beta\|\psi\|) +  \\ 
        &  + \max( f_3(\alpha) - \alpha \|\mu\| + \beta \|\psi\|, f_4(\alpha) - \alpha \|\mu\| - \beta\|\psi\|)\bigg\}
    \end{split}
\end{equation*}

	where $f_1(\alpha) =  \sup_{x \in A^M_{\mu, \psi}} (\alpha \hat{\mu} + \sigma \beta \hat{\psi})^{\top} x$, $f_2(\alpha) =   \sup_{x \in A^m_{\mu, \psi}} (\alpha \hat{\mu} + \sigma \beta \hat{\psi})^{\top} x$, $f_3(\alpha) = \sup_{x \in -B^M_{\mu, \psi}} (\alpha \hat{\mu} + \sigma \beta \hat{\psi})^{\top} x$, $f_2(\alpha) =   \sup_{x \in -B^m_{\mu, \psi}} (\alpha \hat{\mu} + \sigma \beta \hat{\psi})^{\top} x$. From conditions on the majority and the minority class, and the Concentration Condition, with probability $1-4\delta$
\begin{equation}
f_1(\alpha), f_3(\alpha) \in  [X_{\max}(p,\delta,2) - c(p,\delta,2),   X_{\max}(p,\delta,2) + C(p,\delta,2)], 
\end{equation}
and also:
\begin{equation}
f_2(\alpha), f_4(\alpha) \in [X_{\max}(m, \delta, 2) - c(m, \delta, 2),   X_{\max}(m, \delta, 2) + C(m, \delta, 2)]
\end{equation}

	Observe that from the conditions of the theorem, the first terms will dominate for all values of $\alpha$, and hence the SVM objective will become:
	\[ F(\alpha) = \min_{\alpha} f_1(\alpha) + f_3(\alpha) - 2 \alpha \|\mu \|+ 2 \beta \|\psi\|, \]
	From Lemma~\ref{lem:approxmax2}, the optimal solution $\alpha^* \geq 1 - \frac{C(p,\delta,2) + c(p,\delta,2)}{ \|\mu\| + \|\psi\| }$, with a lower bound on the optimal value $2 X_{\max}(p,\delta,2) - 2 c(p,\delta,2) - 2 \|\mu\|$.

\paragraph{Case 2: $\sigma = -1$.} Here, the SVM objective becomes:
\begin{equation*}
    \begin{split}
      &  F(\alpha) = \min_{\alpha} \bigg\{ \max( f_1(\alpha) - \alpha \|\mu\| - \beta \|\psi\|, f_2(\alpha) - \alpha \|\mu\| + \beta\|\psi\|) + \\
      & \max( f_3(\alpha) - \alpha \|\mu\| - \beta\|\psi\|, f_4(\alpha) - \alpha \|\mu\| + \beta \|\psi\|) \bigg\}
    \end{split}
\end{equation*}

	This time, from the conditions of the theorem, the first terms will dominate the maximum for all values of $\alpha$, and hence the objective will become:
	\[ F(\alpha) = \min_{\alpha} f_1(\alpha) + f_3(\alpha) - 2 \alpha \|\mu\| - 2 \beta \|\psi\| \]

	From Lemma~\ref{lem:approxmax1}, the optimal solution vector $(\alpha, \sqrt{1 - \alpha^2})$ will be close to the spurious solution vector $( \frac{
 |\mu\|}{\sqrt{\|\mu\|^2 + \|\psi\|^2}}, \frac{\|a\|}{\sqrt{\|\mu\|^2 + \|\psi\|^2}})$, with the angle being at most $\cos^{-1}(1 - \frac{C(p,\delta,2) + c(p,\delta,2)}{\sqrt{\|\mu\|^2 + \|\psi\|^2}})$. The optimal solution value will be at most $2 X_{\max}(p,\delta,2) + 2 C(p,\delta,2) - 2 \sqrt{\|\mu\|^2 + \|\psi\|^2}$. From the conditions of the theorem, this value is lower than the lower bound on the optimal solution for $\sigma = 1$, and hence the optimal SVM solution will be achieved at this value. Thus the result follows, from the additional condition that $C(p,\delta,2) + c(p,\delta,2) \rightarrow 0$ as $p \rightarrow \infty$.  
 The comparison of the worst group erros is carried out in Lemma \ref{lem:wge_comp_appendix}.



\end{proof}

\paragraph{Illustrating Theorem \ref{thm: spurious_gumbel} using Gaussians.}

\begin{itemize}
\item For sufficiently large $p$, $\big(X_{\max}(p,\delta,2)-X_{\max}(p^{\tau},\delta,2)\big)^2$ gets arbitarily close  to  $(\sqrt{2\log \frac{p}{\delta})} - \sqrt{2\tau \log \frac{p}{\delta}})^2$, which when simplified gives 
$$(\sqrt{2\log \frac{p}{\delta}} - \sqrt{2\tau \log \frac{p}{\delta}})^2 = 2\log \frac{p}{\delta}(1+\tau-2\sqrt{\tau}) $$
For sufficiently large $p$, $(2\|\psi\| + c(p,\delta,2) + C(p^{\tau},\delta,2))^2$ gets arbitrarily close to $ \log (\frac{p}{\delta})^{\kappa}$. Now if $\kappa < 2(1+\tau-2\sqrt{\tau})$ the condition $X_{\max}(p,\delta,2)-X_{\max}(p^{\tau},\delta,2)\big) \geq 2\|\psi\| + c(p,\delta,2) + C(p^{\tau},\delta,2) $ is satisfied. 
\item The objective value $F(\alpha)$ when $\alpha=1$ is at most $2 X_{\max}(p,\delta,2) + 2 C(p,\delta,2) - 2 \|\mu\|$. This expression simplifies to $ 2\big((\sqrt{2}-\sqrt{3})\log p +  C(p,\delta,2)\big)$. For a sufficiently large $p$, the objective is negative. This implies that the data is perfectly separable in the invariant feature. 
\item We also need to check $\sqrt{\|\mu\|^2 + \|\psi\|^2} - \|\mu\| >  C(p,\delta,2) + c(p,\delta,2)$. The expression in the LHS simplifies $\sqrt{\|\mu\|^2 + \|\psi\|^2} - \|\mu\| = \sqrt{\log \frac{p}{\delta}}(\sqrt{3+\frac{\kappa}{4}} - \sqrt{3})$. The expression in the LHS is an increasing function of $p$ and grows to infinity and the RHS decreases to zero. For sufficiently large $p$, the condition has to be satisfied.  Finally, the ratio spurious feature to invariant feature is $\sqrt{\kappa/12}$.
\end{itemize}

\begin{lemma}[Data balancing helps improve worst group error under heavy tails] \label{lem:wge_comp_appendix}
Consider the same set of assumptions as in Theorem \ref{thm: spurious_gumbel}. With probability at least $ 1 - 12 \delta$, $\wge(\theta^{*}_{\text{ss}})<\wge(\theta^{*}_{\text{erm}})$.
\end{lemma}

\begin{proof}
We start with analyzing the worst group error for the standard SVM solution, i.e., without any data balancing. Recall 
 \[ -b^* = \frac{1}{2} ( \sup_{x \in A} (\alpha \hat{\mu} + \sigma \beta \hat{\psi})^{\top})  x - \sup_{x \in -B} (\alpha \hat{\mu} +\sigma \beta \hat{\psi})^{\top}) x) \]
Let us try to bound $-b^{*}$. From the concentration condition and the fact that $p\geq n_0$, with probability at least $1-\delta$, the first term above $\sup_{x \in A} (\alpha \hat{\mu} + \sigma\beta \hat{\psi})^{\top}x)$ lies in $$[X_{\max}(p+m,\delta,2)-c(p+m,\delta,2), X_{\max}(p+m,\delta)+C(p+m,\delta,2)]$$
The second term also lies in $$[X_{\max}(p+m,\delta,2)-c(p+m,\delta,2), X_{\max}(p+m,\delta,2)+C(p+m,\delta,2)]$$
As a result, with probability $1-2\delta$, $-b^{*}$ is in
$$\bigg[\frac{- c(p+m, \delta,2) - C(p+m, \delta,2))}{2}, \frac{c(p+m,\delta,2) + C(p+m,\delta,2)}{2}\bigg]$$
We denote $a_{\min} = \frac{- c(p+m, \delta,2) - C(p+m, \delta,2))}{2}$ and $a_{\max} = \frac{ c(p+m, \delta,2) + C(p+m, \delta,2))}{2}$ 

Consider a classifier $w^{\top} x +b $. We write the error for different groups. $\mathsf{Err}_{y,a}$ is the error for the group $g=(y,a)$. 

\begin{equation} 
\begin{split}
 \mathsf{Err}_{1,-1} &= \mathbb{P}(w^{\top}X + b\leq 0 | X \sim D(\mu-\psi)) \\  
  \mathsf{Err}_{1,-1} &=\mathbb{P}(w^{\top}(\mu -\psi + \tilde{X}) - b\leq 0 | \tilde{X} \sim D(0)) \\  
& =\mathbb{P}(w^{\top}X \leq w^{\top} (\psi-\mu)  - b| \tilde{X} \sim D(0))\\
& =F_W( w^{\top} (\psi-\mu)  - b) 
\end{split}
\end{equation}
Denote $w^{\top}X=W$, $F_W$ is the CDF of W. Also, observe that since $\tilde{X}$ is spherically symmetric, the distribution $w^{\top}X$ is the same as distribution of another $w^{',\top}\tilde{X}$, where $\|w\|=\|w^{'}\|=1$. We now plug in the value of $-b^{*}$ for the max-margin classifier to arrive at the bounds for 
the error for each of the groups.  We write 
\begin{equation}
\begin{split}
& F_W( w^{\top} (\psi-\mu)  + a_{\min})  \leq   \mathsf{Err}_{1,-1} \leq F_W( w^{\top} (\psi-\mu)  + a_{\max})  \\
& F_W( -\alpha \|\mu\| +\sigma \beta \|\psi\|  + a_{\min}) \leq \mathsf{Err}_{1,-1} \leq F_W( -\alpha \|\mu\| +\sigma \beta \|\psi\|  + a_{\max})
 \end{split}
\end{equation}

Similarly, we write
\begin{equation}
\begin{split}
& F_W( -\alpha \|\mu\| -\sigma \beta \|\psi\|  + a_{\min}) \leq \mathsf{Err}_{1,1} \leq F_W( -\alpha \|\mu\| -\sigma \beta \|\psi\|  + a_{\max})
 \end{split}
\end{equation}

Observe that as $p$ grows, $a_{\max}$ and $a_{\min}$ converge to zero (from Assumption \ref{assm: concentration interval}). Also as $p$ grows, from Theorem \ref{thm: spurious_gumbel}, we know that the optimal $\alpha$ approaches $\frac{\|\mu\|}{\sqrt{\|\mu\|^2 + \|\psi\|^2}}$ and $\sigma=-1$. 

As  a result, we can say that with probability at least $1-6\delta$, $\mathsf{Err}_{1,-1}$ and $\mathsf{Err}_{1,-1}$ approach the following quantities.

$$\mathsf{Err}_{1,-1} \rightarrow F_W\bigg( (-\|\mu\|^2 - \|\psi\|^2)\frac{1}{\sqrt{\|\mu\|^2 + \|\psi\|^2}} \bigg) $$

$$\mathsf{Err}_{1,1} \rightarrow F_W\bigg( (-\|\mu\|^2 + \|\psi\|^2)\frac{1}{\sqrt{\|\mu\|^2 + \|\psi\|^2}} \bigg) $$

We now turn our attention to the optimal SVM solution achieved after balancing the data. In this case, we throw the data out so all groups have same size $m$. 
In this case, the optimal $b^{*}$ lies in the interval 
$a_{\min} = \frac{- c(2m, \delta, 2) - C(2m, \delta, 2))}{2}$ and $a_{\max} = \frac{ c(2m, \delta, 2) + C(2m, \delta,2))}{2}$ 

Observe that as $m$ grows, $a_{\max}$ and $a_{\min}$ converge to zero (from Assumption \ref{assm: concentration interval}). Also as $m$ grows, from Theorem \ref{thm: invariant_weibull}, we know that the optimal $\alpha$ approaches $1$. We denote the error for a group $y,a$ under balancing as $\mathsf{Err}_{y,a}^{\mathsf{bal}}$.

As  a result, we can say that with probability at least $1-6\delta$, $\mathsf{Err}_{1,-1}^{\mathsf{bal}}$ and $\mathsf{Err}_{1,-1}^{\mathsf{bal}}$ approach the following quantities.

$$\mathsf{Err}_{1,-1}^{\mathsf{bal}} \rightarrow F_W(-\|\mu\|) $$

$$\mathsf{Err}_{1,1}^{\mathsf{bal}} \rightarrow F_W(-\|\mu\|) $$

We compare the error achieved by the two approaches. With probability $1-12\delta$ (We need to account for the joint probability that for imbalanced case the optimal solution is the spurious one and under the balanced case the optimal solution is the invariant one. From union bound it follows that at least one of them does not occur with probability at most $12\delta$).

$$\mathsf{Err}_{1,1}^{\mathsf{bal}} \rightarrow F_W(-\|\mu\|) $$ and 
$$\mathsf{Err}_{1,1} \rightarrow F_W(-\|\mu\|) $$

We want to show 
\begin{equation}
\begin{split}
  & \mathsf{Err}_{1,1}^{\mathsf{bal}}< \mathsf{Err}_{1,1} \\
\end{split}
\end{equation}
To show the above, is equivalent to showing 

\[(-\|\mu\|^2 + \|\psi\|^2)\frac{1}{\sqrt{\|\mu\|^2 + \|\psi\|^2}} > -\|\mu\|\]
Suppose $\|\psi\|\geq \|\mu\|$, then the LHS is non negative and RHS is non positive. Thus the claim is true in that case. 

Suppose $\|\psi\| < \|\mu\|$, then both the LHS and RHS are negative. As a result, we want to show that 
\begin{equation}
\begin{split}
 &  \frac{ (-\|\mu\|^2 + \|\psi\|^2)^2}{\|\mu\|^2 + \|\psi\|^2} < \|\mu\|^2 
\end{split}
\end{equation}
Further simplification yields
\begin{equation}
\begin{split}
 &  \frac{ (-\|\mu\|^2 + \|\psi\|^2)^2}{\|\mu\|^2 + \|\psi\|^2} < \|\mu\|^2 \iff \|\psi\|^2(\|\psi\|^2 - 3\|\mu\|^2) <0 
\end{split}
\end{equation}

Since  $\|\psi\| < \|\mu\|$, the above condition is satisfied. 
\end{proof}

\subsection{Higher-dimensional case}
\label{sec:proof_groups_high_d}
\begin{lemma}\label{lem:gaussiannorm_new}
Let $x_1, \ldots, x_n$ be vectors in $\mathbb{R}^d$ drawn i.i.d from $N(0, I_d)$. Then, with probability $\geq 1 - \delta$, 
\[ \max_i \|x_i\| \leq \sqrt{d + 2\sqrt{d \log (n/\delta)} + 2  \log (n/\delta)} \]
\end{lemma}
\begin{proof}
Observe that 
\begin{equation}
    \begin{split}
    &    \|x_i\| \leq \sqrt{d + 2\sqrt{d \log (n/\delta)} + 2  \log (n/\delta)}, \forall i \in \{1, \cdots, n\} \\ 
    &    \|x_i\|^2 \leq d + 2\sqrt{d \log (n/\delta)} + 2  \log (n/\delta), \forall i \in \{1, \cdots, n\}
    \end{split}
\end{equation}
We bound the probability of the above 
\begin{equation}
    \begin{split}
        & \mathbb{P}(\max_i \|x_i\|^2 \leq d + 2\sqrt{d \log (n/\delta)} + 2  \log (n/\delta) ) = \\ & 1- \mathbb{P}(\max_i \|x_i\|^2 \geq   d + 2\sqrt{d \log (n/\delta)} + 2  \log (n/\delta)) \geq 1 - n \mathbb{P}(\|x_i\|^2    \geq \\
        &  d + 2\sqrt{d \log (n/\delta)} + 2  \log (n/\delta)) = 1- n e^{-\log(n/\delta)} = 1-\delta
    \end{split}
\end{equation}
For the last step in the above, we leverage the fact that $\|x_i\|^2 $ follows the Chi-square $\mathcal{X}^{2}(d)$ distribution and use the tail bound in Lemma 1 from \cite{laurent2000adaptive}.
\end{proof}

Define $Q(n,\delta,d) = \sqrt{d + 2\sqrt{d \log (n/\delta)} + 2  \log (n/\delta)} $

\begin{lemma}\label{lem:1dgaussianmax_new}
Let $x_1, \ldots, x_n$ be $n$ i.i.d unit Gaussians with covariance $I_d$.  Then for $n$ large enough, with probability $\geq 1 -  \delta$, we have that for all directions $v\in \mathbb{R}^d$: 
\[  \max_{i\in\{1,\cdots,n\}}\{v^{\top}x_i\} \leq  b_n + a_n +a_n d\log \bigg(\frac{\frac{12 Q(n,\delta/2, d)}{a_n}+6}{\delta}\bigg) ,\] 
\[\max_{i\in\{1,\cdots,n\}}\{v^{\top}x_i\}  \geq b_n - a_n- a_n \log\bigg(d\log \bigg(\frac{\frac{12 Q(n,\delta/2, d)}{a_n}+6}{\delta}\bigg)\bigg)  \]
where $a_n$ and $b_n$ are the constants in the Fisher-Tippett-Gnedenko theorem when applied to Gaussians.
\end{lemma}
\begin{proof} 
Suppose $f(v) = \max_{i} v^{\top} x_i$ where $v$ is a unit vector in $\mathbb{R}^d$. Then, 
\[ f(v) - f(u) = \max_i v^{\top} x_i - \max_i u^{\top} x_i \leq \max_i (v - u)^{\top} x_i \leq \| v - u \| \cdot \max_i \|x_i\| \]
where the first step follows from definition, the second step from subtracting a smaller quantity, and the last step from the Cauchy-Schwartz inequality.

From Lemma~\ref{lem:gaussiannorm_new}, with probability $\geq 1 - \delta/2$, $\max_i \|x_i\| \leq  Q(n,\delta/2,d) $, which gives us:
\[ f(v) - f(u) \leq  Q(n,\delta/2,d) \cdot \| v - u \| \]

Now, we can build an $\epsilon$-cover $C(\epsilon)$ over unit vectors on the circle so that successive vectors $v_i$ and $v_{i+1}$ have the property that $\|v_i - v_{i+1}\| \leq \epsilon$. The size of such an $\epsilon$-cover is $N(\epsilon) \leq  (\frac{2}{\epsilon}+1)^d$ \footnote{\url{https://www.stat.berkeley.edu/~bartlett/courses/2013spring-stat210b/notes/12notes.pdf}}; additionally, for any unit vector $v$ in $\mathbb{R}^d$, there exists some $v_i$ in the cover such that 
\[ f(v_i) -  Q(n,\delta/2,d) \epsilon \leq f(v) \leq f(v_i) +  Q(n,\delta/2,d)\epsilon \]

$f(v_i)$ is the maximum over $n$ i.i.d. standard Gaussians $N(0,1)$. From Lemma~\ref{lem:1dgaussianmax}, we know 

\[ b_n - a_n \log\log (1/\delta)\leq f(v_i) \leq b_n + a_n \log (1/\delta)\]

Now we can apply Lemma~\ref{lem:1dgaussianmax} with $\delta = \frac{\delta}{6N(\epsilon)}$ plus an union bound over the cover $C(\epsilon)$ to get that for all $v_i$ in the cover, 
\[ b_n - a_n \log\log (6N(\epsilon)/(\delta))\leq f(v_i) \leq b_n + a_n \log (6N(\epsilon)/(\delta))\]

\[ b_n - a_n \log\bigg(d\log \bigg(\frac{(\frac{12}{\epsilon}+6)}{\delta}\bigg)\bigg)\leq f(v_i) \leq b_n + a_n d\log \bigg(\frac{(\frac{12}{\epsilon}+6)}{\delta}\bigg)\]

For all directions $v\in \mathbb{R}^{d}$

\[ b_n - a_n \log\bigg(d\log \bigg(\frac{(\frac{12}{\epsilon}+6)}{\delta}\bigg)\bigg) -  Q(n,\delta/2,d)\epsilon \leq f(v) \leq b_n + a_n d\log \bigg(\frac{(\frac{12}{\epsilon}+6)}{\delta}\bigg)+  Q(n,\delta/2,d)\epsilon\]
Plugging in $\epsilon = \frac{a_n}{ Q(n,\delta/2, d)}$ in the above expression we get.

\[ b_n - a_n- a_n \log\bigg(d\log \bigg(\frac{\frac{12 Q(n,\delta/2, d)}{a_n}+6}{\delta}\bigg)\bigg)  \leq f(v) \leq b_n + a_n +a_n d\log \bigg(\frac{\frac{12 Q(n,\delta/2, d)}{a_n}+6}{\delta}\bigg)  \]

\end{proof}

\begin{lemma}
	Consider the density: $f(t) = \frac{3}{2}(1 - t^2)$ for $t \in [0, 1]$ and $f(t) = 0$ otherwise. Let $F$ be the corresponding CDF and let $U(t) = F^{-1}(1 - 1/t)$. Then, the following facts hold:
	\begin{enumerate}
		\item \[ \lim_{h \rightarrow 0} \frac{1 - F(1 - xh)}{1 - F(1 - h)} = x \]
		\item $1 - U(n) \geq \frac{2}{3n}$. 
	\end{enumerate}
\label{lem:boundsuniform_new}
\end{lemma}

\begin{proof}


To see the first part, observe that:
	\[ 1 - F(1 - h) = \int_{1 - h}^{1} \frac{3}{2}(1 - t^2) = \frac{3}{2} (h - \frac{h^3-3h^2 + 3h}{3}  \]
\[ \lim_{h \rightarrow 0} \frac{1 - F(1 - xh)}{1 - F(1 - h)} = \frac{xh - \frac{(xh)^3-3(xh)^2 + 3xh}{3} }{h - \frac{h^3-3h^2 + 3h}{3} } = x \]

For the second part, we observe that from the definition of $U(n)$, we have that $1 - U(n) = h$, where:
	\[ \int_{1-h}^{1} \frac{3}{2}(1 - t^2) = \frac{1}{n} \]
	Observe that for small enough $h$ (which corresponds to large enough $n$), the left hand side is at most $\frac{3}{2}h$. This implies that $h = 1 - U(n) \geq \frac{2}{3n}$ and the lemma follows.  
 \end{proof}
 \begin{lemma}
\label{lem:1dunif_new}
	Consider the density: $f(t) = \frac{3}{2}(1 - t^2)$ for $t \in [0, 1]$ and $f(t) = 0$ otherwise. Let $x_1, \ldots, x_n$ be $n$ drawn i.i.d from $f$ and let $X_{\max} = \max(x_1, \ldots, x_n)$. Then for $n$ large enough, with probability $\geq 1 - \delta$, we have that: 
\[ X_{\max} \leq 1, \quad X_{\max} \geq 1 - \left( \frac{2\log (2/\delta)}{3n} \right) \]
 \end{lemma}
\begin{proof} 
	Observe that for this distribution, $x_F = 1$. From this, and the first part of Lemma~\ref{lem:boundsuniform}, it follows that this distribution is of the Weibull type with $\alpha = 1$. From the Fisher-Tippett-Gnedenko Theorem, this means that the maximum of $n$ points converges to $a_n Z + b_n$ in distribution, where $a_n = 1 - U(n)$, $b_n = 1$, and $Z$ is a reverse Weibull distributed variable with $\alpha = 1$. Setting $X_{\max}(n, \delta) = 1$, we get that $C(n, \delta) = 0$. 

	To calculate $c(n, \delta)$, we observe that from the second part of Lemma~\ref{lem:boundsuniform_new}, $a_n \geq \frac{2}{3n}$. Additionally, if $Z$ is a reverse Weibull variable with parameter $\alpha = 1$, then,
	\[ \Pr(Z \leq - (\log (2/\delta))) = \exp(-(\log (2/\delta))) =\delta/2 \]
	Therefore, $\Pr\Big(a_n Z + b_n \leq 1 - \left( \frac{2\log (2/\delta)}{3n} \right)\Big) \leq \delta/2$. We get another $\delta/2$ from the distributional convergence of the maximum of $n$ random variables to the limit for large enough $n$. 
\end{proof}

 \begin{lemma}
 Let $x_1, \ldots, x_n$ be $n$ drawn i.i.d from symmetric uniform distribution centered at zero in $\mathbb{R}^3$. Then for $n$ large enough, with probability $\geq 1 -  \delta$, we have that for all directions $v\in \mathbb{R}^2$: 
\[  \max_{i\in\{1,\cdots,n\}}\{v^{\top}x_i\} \leq 1\]
\[\max_{i\in\{1,\cdots,n\}}\{v^{\top}x_i\}  \geq 1 - \frac{6\log (4n+2)/\delta)}{3n} -\frac{1}{n}\]
 \end{lemma}

 \begin{proof}
 Suppose $f(v) = \max_{i} v^{\top} x_i$ where $v$ is a unit vector in $\mathbb{R}^2$. Then, 
\[ f(v) - f(u) = \max_i v^{\top} x_i - \max_i u^{\top} x_i \leq \max_i (v - u)^{\top} x_i \leq \| v - u \| \cdot \max_i \|x_i\| \]
where the first step follows from definition, the second step from subtracting a smaller quantity, and the last step from the Cauchy-Schwartz inequality. 

Note that  $\max_i \|x_i\| \leq  1$, which gives us:
\[ f(v) - f(u) \leq   \cdot \| v - u \| \]

Now, we can build an $\epsilon$-cover $C(\epsilon)$ over unit vectors on the circle so that successive vectors $v_i$ and $v_{i+1}$ have the property that $\|v_i - v_{i+1}\| \leq \epsilon$. The size of such an $\epsilon$-cover is $N(\epsilon) = 1/\epsilon$; additionally, for any unit vector $v$ in $\mathbb{R}^2$, there exists some $v_i$ in the cover such that 
\[ f(v_i) -  \epsilon \leq f(v) \leq f(v_i) +   \epsilon \]

Observe that $f(v_i)$ is a maximum over $n$ i.i.d. random variables drawn from a distribution $f(t) = \frac{3}{2}(1 - t^2)$ for $t \in [0, 1]$ and $f(t) = 0$ otherwise. 
Now we can apply Lemma~\ref{lem:1dunif_new} with $\delta = \frac{\delta}{N(\epsilon)}$ plus an union bound over the cover $C(\epsilon)$ to get that for all $v_i$ in the cover, 
\[1 -\frac{2\log (2N(\epsilon)/\delta)}{3n}\leq f(v_i) \leq 1\]

For all directions $v\in \mathbb{R}^{3}$
\[1 - \frac{2\log (2N(\epsilon)/\delta)}{3n} -\epsilon \leq f(v) \leq 1\]

Plugging in $\epsilon = \frac{1}{n}$ in the above expression we get.
\[1 - \frac{6\log (4n+2)/\delta)}{3n} -\frac{1}{n}\leq f(v) \leq 1\]
 \end{proof}

\begin{lemma}[Approximate Maximization Lemma - I]
Let $F(\alpha) = f(\alpha) + g(\alpha)$ where $g(\alpha) = \alpha u + \sqrt{\eta^2 - \alpha^2} v$, $u, v > 0$, and $f(\alpha)$ is an arbitrary function of $\alpha$ that lies in the interval $[-L, U]$. Let $\alpha_F$ be the value of $\alpha$ that maximizes $F(\alpha)$, and let $\alpha_g = \eta\frac{u}{\sqrt{u^2 + v^2}}$ be the value of $\alpha$ that maximizes $g(\alpha)$. Then, the angle between $(\alpha_F, \sqrt{\eta^2 - \alpha_F^2})$ and $(\alpha_g, \sqrt{1 - \alpha_g^2})$ is at most $\cos^{-1}\left( 1 - \frac{L + U}{\sqrt{u^2 + v^2}}\right)$. Additionally, the maximum value of $F(\alpha)$ is at least $\eta\sqrt{u^2 + v^2} - L$. 
\label{lem:approxmax1_n}
\end{lemma}
\begin{proof}
	For convenience, we can do a quick change of variables -- we let $\alpha = \eta\cos \theta$. Then $g(\theta) = \eta (u \cos \theta + v \sin \theta)$, and is maximized at $\theta_g = \cos^{-1}\left(\frac{u}{\sqrt{u^2 + v^2}}\right)$. This means we can re-write $g$ as follows:
\begin{eqnarray*}
	g(\theta) & = &  \eta \sqrt{u^2 + v^2} \cdot (\cos \theta_g \cos \theta + \sin \theta_g \sin \theta) \\
	& = & \eta\sqrt{u^2 + v^2} \cdot \cos(\theta_g - \theta) 
\end{eqnarray*}
Similarly, we can do a change of variables on $F$ and $f$ as well. Suppose the value of $\theta$ that maximizes $F$ is $\theta_F$. Then we have that:
	\[ f(\theta_g) + \eta\sqrt{u^2 + v^2} \leq f(\theta_F) + \eta \sqrt{u^2 + v^2} \cos (\theta_g - \theta_F) \]
Since $f(\theta_g) \geq -L$ and $f(\theta_F) \leq U$, this gives us:
	\[ -L + \eta \sqrt{u^2 + v^2} \leq U + \eta \sqrt{u^2 + v^2} \cos (\theta_g - \theta_F) \]
The lemma follows from simple algebra.
\end{proof}

\begin{lemma}[Approximate Maximixation Lemma - II]
	Let $F(\alpha) = f(\alpha) + g(\alpha)$ where $g(\alpha) = \alpha u - \sqrt{\eta^2 - \alpha^2} v$, $u, v > 0$, and $f(\alpha)$ is an arbitrary function of $\alpha$ that lies in the interval $[-L, U]$. Let $\alpha_F$ be the value of $\alpha \in [-\eta, \eta]$ that maximizes $F(\alpha)$, and let $\alpha_g = \eta$ be the value of $\alpha$ that maximizes $g(\alpha)$. Then, $\alpha_F \geq \eta - \frac{U + L}{u + v}$.
\label{lem:approxmax2_n}
\end{lemma}
\begin{proof}
To show the lemma, we observe that since $f(\alpha) \in [-L, U]$,
	\[ -L + \eta u \leq U + \alpha_F u - \sqrt{\eta^2 - \alpha_F^2} v \]
	which implies $u (\eta - \alpha_F) + v \sqrt{\eta^2 - \alpha_F^2} \leq L + U$. This will hold when $\eta - \alpha_F \leq \frac{U + L}{u + v}$. The lemma follows. 
\end{proof}

Recall that the SVM solution is stated as 
\[ w^{*} = \argmax_{\|w\| = 1} \inf_{x \in B} w^{\top} x - \sup_{x \in A} w^{\top} x   \]
We rewrite $w = \alpha \hat{\mu} +  \beta \hat{\psi} + \gamma^{\top}\hat{\Gamma}$, where $\hat{\mu}$, $\hat{\psi}$ denote unit vectors along $\mu$, $\psi$ respectively. $\hat{\Gamma} \in \mathbb{R}^{d\times d-2}$ is a matrix of $d-2$ vectors that span the subspace orthogonal to the subspace spanned $\mu$ and $\psi$. 
We introduce an additional parameter $\eta \in [0,1]$ and define a set as follows $\mathcal{S}_{\eta} = \{(\alpha, \beta, \gamma),\; \alpha^2+\beta^2 = \eta^2, \|\gamma\|^2= 1-\eta^2\}$.  Note that $\mathcal{S} = \cup_{n\in [0,1]} \mathcal{S}_{\eta}$ is the set of all vectors of norm $1$. We divide the standard SVM optimization into an optimization over the set $\mathcal{S}_{\eta}$ and then choosing the best $\eta$. 
\begin{equation}
    \begin{split}
        \alpha^*(\eta) &= \argmin_{\alpha \in [-\eta, \eta], \sigma \in \{-1, 1\}, \|\gamma\||=\sqrt{1-\eta^2}} \sup_{x \in A_{\mu}} \Big\{(\alpha \hat{\mu} +  \sigma \beta \hat{\psi} + \gamma^{\top}\hat{\Gamma})^{\top}  (x - \mu)\Big\}\\ 
        &  +   \sup_{x \in -B_{\mu}} \Big\{(\alpha \hat{\mu} + \sigma \beta \hat{\psi} + \gamma^{\top}\hat{\Gamma}) ^{\top} (x - \mu) \Big\},
    \end{split}
\end{equation}
where $\beta = \sqrt{1-\alpha^2}$. We compare the SVM objective for all $\eta$ and pick the set of $\alpha^*(\eta)$ that lead to the optimal value. 

\gumbelgroupd*
\begin{proof} 
We fix an $\eta$ and  write the optimal solution for the $\eta$ as 
\begin{equation}
     \begin{split}
        \alpha^*(\eta) &= \argmin_{\alpha \in [-\eta, \eta], \sigma \in \{-1, 1\}, \|\gamma\||=\sqrt{1-\eta^2}} \sup_{x \in A_{\mu}} \Big\{(\alpha \hat{\mu} +  \sigma \beta \hat{\psi} + \gamma^{\top}\hat{\Gamma})^{\top}  (x - \mu)\Big\}\\ 
        &  +   \sup_{x \in -B_{\mu}} \Big\{(\alpha \hat{\mu} + \sigma \beta \hat{\psi} + \gamma^{\top}\hat{\Gamma}) ^{\top} (x - \mu) \Big\},
    \end{split}
\end{equation}
where $\beta = \sqrt{1 - \alpha^2}$. We next consider a further split of the positive class into the majority and minority groups -- $A^M_{\mu}$ and $A^m_{\mu}$.  This means that:  
\[ \sup_{x \in A_{\mu}} \boldsymbol{v}^{\top} x = \max \left( \sup_{x \in A^M_{\mu}} \boldsymbol{v}^{\top} x, \sup_{x \in A^m_{\mu}} \boldsymbol{v}^{\top} x \right), \]
and hence, we can write:
\begin{equation*}
\begin{split}
	&  \sup_{x \in A_{\mu}} ((\alpha \hat{\mu} + \sigma \beta \hat{\psi} + \gamma^{\top}\hat{\Gamma})^{\top}  (x - \mu)\\
	& =  \max \left(  \sup_{x \in A^M_{\mu}} (\alpha \hat{\mu} + \sigma \beta \hat{\psi} + \gamma^{\top}\hat{\Gamma})^{\top}  (x - \mu),  \sup_{x \in A^m_{\mu}} ((\alpha \hat{\mu} + \sigma \beta \hat{\psi} + \gamma^{\top}\hat{\Gamma})^{\top}  (x - \mu) \right) \\
= & \max \Bigg( \sup_{x \in A^M_{\mu, \psi}} (\alpha \hat{\mu} + \sigma \beta \hat{\psi} + \gamma^{\top}\hat{\Gamma})^{\top}  x - \alpha \|\mu\| + \sigma \beta \|\psi\|, \\ 
&  \sup_{x \in A^m_{\mu, \psi}} (\alpha \hat{\mu} + \sigma \beta \hat{\psi} + \gamma^{\top}\hat{\Gamma})^{\top}  x - \alpha \|\mu\| - \sigma \beta \|\psi\| \Bigg),
\end{split}
\end{equation*}
and a similar expression will hold for $B_{\mu}$. 
We look at two cases -- $\sigma = 1$ and $-1$.
\paragraph{Case 1: $\sigma = 1$.}  Here, the SVM objective becomes: 
\begin{equation*}
\begin{split}
&    \min_{\alpha, \gamma}\bigg\{ \max( f_1(\alpha, \gamma) - \alpha \|\mu\| + \beta \|\psi\|, f_2(\alpha, \gamma) - \alpha \|\mu\| - \beta \|\psi\|) +  \\ 
 & \max( f_3(\alpha, \gamma) - \alpha \|\mu\| + \beta \|\psi\|, f_4(\alpha, \gamma) - \alpha \|\mu\| - \beta\|\psi\|) \bigg\}
\end{split}
\end{equation*}
	where $f_1(\alpha, \gamma) =  \sup_{x \in A^M_{\mu, \psi}} (\alpha \hat{\mu} + \sigma \beta \hat{\psi}  + \gamma^{\top}\hat{\Gamma})^{\top} x$, $f_2(\alpha, \gamma) =   \sup_{x \in A^m_{\mu, \psi}} (\alpha \hat{\mu} + \sigma \beta \hat{\psi}+ + \gamma^{\top}\hat{\Gamma})^{\top} x$, $f_3(\alpha, \gamma) = \sup_{x \in -B^M_{\mu, \psi}} (\alpha \hat{\mu} + \sigma \beta \hat{\psi} + \gamma^{\top}\hat{\Gamma})^{\top} x$, $f_4(\alpha, \gamma) =   \sup_{x \in -B^m_{\mu, \psi}} (\alpha \hat{\mu} + \sigma \beta \hat{\psi} + \gamma^{\top}\hat{\Gamma})^{\top} x$. From conditions on the majority and the minority class, and the Concentration Condition with probability $\geq 1-4\delta$, 
\begin{equation}
f_1(\alpha, \gamma), f_3(\alpha, \gamma) \in  [X_{\max}(p, \delta, d) - c(p, \delta,d),   X_{\max}(p, \delta, d) + C(p, \delta, d)], 
\end{equation}
and also:
\begin{equation} 
f_2(\alpha, \gamma), f_4(\alpha, \gamma) \in [X_{\max}(m, \delta, d) - c(m, \delta, d),   X_{\max}(m, \delta, d) + C(m, \delta, d)]
\end{equation}
	Observe that from the conditions of the theorem, the first terms will dominate for all values of $\alpha$, and hence the SVM objective will become:
	\begin{equation}
	    \begin{split}& \min_{\alpha \in [-\eta, \eta], \|\gamma\|=\sqrt{1-\eta^2}} f_1(\alpha, \gamma) + f_3(\alpha, \gamma) - 2 \alpha \|\mu\| + 2 \beta \|\psi\|=  \\
	      &\min_{\alpha \in [-\eta, \eta]} h(\alpha, \eta) - 2 \alpha \|\mu\| + 2 \beta \|\psi\|
	    \end{split}
	\end{equation}
	where $h(\alpha,\eta) = \min_{\|\gamma\|=\sqrt{1-\eta^2}} f_1(\alpha, \gamma) + f_3(\alpha, \gamma)$. 
	Observe that  $h(\alpha, \eta) \in [2X_{\max}(p, \delta, d) - 2c(p, \delta,d),   2X_{\max}(p, \delta, d) + 2C(p, \delta, d)]$.
	From Lemma~\ref{lem:approxmax2_n}, the optimal solution $\alpha^* \geq \eta - \frac{C(p, \delta, d) + c(p, \delta,d)}{ \|\mu\| + \|\psi\| }$, with an optimal value is lower bounded by $2 X_{\max}(p, \delta, d) - 2 c(p, \delta,d) - 2 \eta\|\mu\|$. 
\paragraph{Case 2: $\sigma = -1$.} Here, the SVM objective becomes: 
\begin{equation*}
    \begin{split}
&        \min_{\alpha\in [-\eta, \eta], \|\gamma\|=\sqrt{1-\eta^2}} \bigg\{ \max( f_1(\alpha, \gamma) - \alpha \|\mu\| - \beta \|\psi\|, f_2(\alpha, \gamma) - \alpha \mu + \beta \psi ) +  \\ 
&         \max( f_3(\alpha,\gamma) - \alpha \|\mu\| - \beta \|\psi\|, f_4(\alpha, \gamma) - \alpha \|\mu\| + \beta \|\psi\|) \bigg\}
    \end{split}
\end{equation*}

	This time, from the conditions of the theorem, the first terms will dominate the maximum for all values of $\alpha$, and hence the objective will become:
	\begin{equation}
	\begin{split}
	  & \min_{\alpha\in [-\eta,\eta], \|\gamma\|=\sqrt{1-\eta^2}} f_1(\alpha,\gamma) + f_3(\alpha,\gamma) - 2 \alpha \|\mu\| - 2 \beta\|\psi\| =\\
	  & \min_{\alpha\in [-\eta,\eta]} h(\alpha, \eta) - 2 \alpha \|\mu\| - 2 \beta \|\psi\|
	 \end{split}
	\end{equation}
Observe that  $h(\alpha, \eta) \in [2X_{\max}(p, \delta, d) - 2c(p, \delta,d),   2X_{\max}(p, \delta, d) + 2C(p, \delta, d)]$. From Lemma~\ref{lem:approxmax1_n}, the optimal solution  $(\alpha, \sqrt{1 - \alpha^2})$ will be close to the spurious solution vector $( \eta \frac{\|\mu\|}{\sqrt{\|\mu\|^2 + \|\psi\|^2}}, \eta\frac{\|\psi\|}{\sqrt{\|\mu\|^2 + \|\psi\|^2}})$, with the angle being at most $\cos^{-1}(1 - \frac{C(p, \delta,d) + c(p, \delta, d)}{\sqrt{\|\mu\|^2 + \|\psi\|^2}})$. The optimal solution value will be at most $2 X_{\max}(p, \delta, d) + 2 C(p, \delta, d) - 2 \eta \sqrt{\|\mu\|^2 + \|\psi\|^2}$. From the conditions of the theorem, this value is lower than the lower bound on the optimal solution for $\sigma = 1$, and hence the optimal SVM solution for a fixed $\eta$ will be achieved at this value. 
We now compare the lower bound for the optimal $\eta$ with the upper bound of the optimal value at $\eta=1$. 
\begin{equation}
\begin{split}
 & 2 X_{\max}(p, \delta, d) - 2 C (p, \delta, d) - 2 \eta \sqrt{\|\mu\|^2 + \|\psi\|^2} \leq  2 X_{\max}(p, \delta, d) + 2 C(p, \delta, d) - 2 \sqrt{\|\mu\|^2 + \|\psi\|^2}  \\ 
 & \eta \geq 1 - \frac{C(p,\delta,d)}{\sqrt{\|\mu\|^2+\|\psi\|^2}}
\end{split}
\end{equation}
The result follows from the additional condition that $C(p, \delta,d) + c(p, \delta, d) \rightarrow 0$ as $p \rightarrow \infty$.  The comparison of worst group errors is carried out in Lemma \ref{lem:wge_comparison_high_dim}
\end{proof}
\begin{lemma}[Data balancing helps improve worst group error under heavy tails]\label{lem:wge_comparison_high_dim}
Consider the same set of assumptions as Theorem \ref{thm: spurious_d}. With probability at least $ 1 - 12 \delta$, $\wge(\theta^{*}_{\text{ss}})<\wge(\theta^{*}_{\text{erm}})$.
\end{lemma}

\begin{proof}
We start with analyzing the worst group error for the standard SVM solution, i.e., without any data balancing. Recall 
 \[ -b^* = \frac{1}{2} ( \sup_{x \in A} (\alpha \hat{\mu} + \sigma \beta \hat{\psi} + \gamma^{\top}\hat{\Gamma})^{\top}  x - \sup_{x \in -B} (\alpha \hat{\mu} + \sigma \beta \hat{\psi} + \gamma^{\top}\hat{\Gamma}) x) \]
Let us try to bound $-b^{*}$. From the concentration condition and the fact that $p\geq n_0$, with probability at least $1-\delta$, the first term above $\sup_{x \in A} (\alpha \hat{\mu} + \sigma\beta \hat{\psi} + \gamma^{\top}\hat{\Gamma})^{\top}x)$ lies in $$[X_{\max}(p+m,\delta,d)-c(p+m,\delta,d), X_{\max}(p+m,\delta,d)+C(p+m,\delta,d)]$$
The second term also lies in $$[X_{\max}(p+m,\delta,d)-c(p+m,\delta,d), X_{\max}(p+m,\delta,d)+C(p+m,\delta,d)]$$
As a result, with probability $1-2\delta$, $-b^{*}$ is in
$$\bigg[\frac{- c(p+m, \delta,d) - C(p+m, \delta,d))}{2}, \frac{c(p+m,\delta,d) + C(p+m,\delta,d)}{2}\bigg]$$
We denote $a_{\min} = \frac{- c(p+m, \delta, d) - C(p+m, \delta, d))}{2}$ and $a_{\max} = \frac{ c(p+m, \delta, d) + C(p+m, \delta, d))}{2}$ 

Consider a classifier $w^{\top} x +b $. We write the error for different groups. $\mathsf{Err}_{y,a}$ is the error for the group $g=(y,a)$. 

\begin{equation} 
\begin{split}
 \mathsf{Err}_{1,-1} &= \mathbb{P}(w^{\top}X + b\leq 0 | X \sim D(\mu-\psi)) \\  
  \mathsf{Err}_{1,-1} &=\mathbb{P}(w^{\top}(\mu -\psi + \tilde{X}) + b\leq 0 | \tilde{X} \sim D(0)) \\  
& =\mathbb{P}(w^{\top}X \leq w^{\top} (\psi-\mu)  - b| \tilde{X} \sim D(0))\\
& =F_W( w^{\top} (\psi-\mu)  + b) 
\end{split}
\end{equation}
Denote $w^{\top}X=W$, $F_W$ is the CDF of W. Also, observe that since $X$ is spherically symmetric, the distribution $w^{\top}X$ is the same as distribution of another $w^{',\top}X$, where $\|w\|=\|w^{'}\|=1$. We now plug in the value of $b^{*}$ for the max-margin classifier to arrive at the bounds for 
the error for each of the groups.  We write 
\begin{equation}
\begin{split}
& F_W( w^{\top} (\psi-\mu)  + a_{\min})  \leq   \mathsf{Err}_{1,-1} \leq F_W( w^{\top} (\psi-\mu)  + a_{\max})  \\
& F_W( -\alpha \|\mu\| +\sigma \beta \|\psi\|  + a_{\min}) \leq \mathsf{Err}_{1,-1} \leq F_W( -\alpha \|\mu\| +\sigma \beta \|\psi\|  + a_{\max})
 \end{split}
\end{equation}

Similarly, we write
\begin{equation}
\begin{split}
& F_W( -\alpha \|\mu\| -\sigma \beta \|\psi\|  + a_{\min}) \leq \mathsf{Err}_{1,1} \leq F_W( -\alpha \|\mu\| -\sigma \beta \|\psi\|  + a_{\max})
 \end{split}
\end{equation}

Observe that as $p$ grows, $a_{\max}$ and $a_{\min}$ converge to zero (from Assumption \ref{assm: concentration interval}). Also as $p$ grows, from Theorem \ref{thm: spurious_d}, we know that the optimal $\alpha$ approaches $\frac{\|\mu\|}{\sqrt{\|\mu\|^2 + \|\psi\|^2}}$ and $\sigma=-1$ and $\gamma=0$. 

As  a result, we can say that with probability at least $1-6\delta$, $\mathsf{Err}_{1,-1}$ and $\mathsf{Err}_{1,-1}$ approach the following quantities.

$$\mathsf{Err}_{1,-1} \rightarrow F_W\bigg( (-\|\mu\|^2 - \|\psi\|^2)\frac{1}{\sqrt{\|\mu\|^2 + \|\psi\|^2}} \bigg) $$

$$\mathsf{Err}_{1,1} \rightarrow F_W\bigg( (-\|\mu\|^2 + \|\psi\|^2)\frac{1}{\sqrt{\|\mu\|^2 + \|\psi\|^2}} \bigg) $$

We now turn our attention to the optimal SVM solution achieved after balancing the data. In this case, we throw the data out so all groups have same size $m$. 
In this case, the optimal $b^{*}$ lies in the interval 
$a_{\min} = \frac{- c(2m, \delta, d) - C(2m, \delta, d))}{2}$ and $a_{\max} = \frac{ c(2m, \delta, d) + C(2m, \delta, d))}{2}$ 

Observe that as $m$ grows, $a_{\max}$ and $a_{\min}$ converge to zero (from Assumption \ref{assm: concentration interval}). Also as $m$ grows, from Theorem \ref{thm: invariant_d}, we know that the optimal $\alpha$ approaches $1$. We denote the error for a group $y,a$ under balancing as $\mathsf{Err}_{y,a}^{\mathsf{bal}}$.

As  a result, we can say that with probability at least $1-6\delta$, $\mathsf{Err}_{1,-1}^{\mathsf{bal}}$ and $\mathsf{Err}_{1,-1}^{\mathsf{bal}}$ approach the following quantities.

$$\mathsf{Err}_{1,-1}^{\mathsf{bal}} \rightarrow F_W(-\|\mu\|) $$

$$\mathsf{Err}_{1,1}^{\mathsf{bal}} \rightarrow F_W(-\|\mu\|) $$

We compare the error achieved by the two approaches. With probability $1-12\delta$ (We need to account for the joint probability that for imbalanced case the optimal solution is the spurious one and under the balanced case the optimal solution is the invariant one. From union bound it follows that at least one of them does not occur with probability at most $12\delta$).

$$\mathsf{Err}_{1,1}^{\mathsf{bal}} \rightarrow F_W(-\|\mu\|) $$ and 
$$\mathsf{Err}_{1,1} \rightarrow F_W(-\|\mu\|) $$

We want to show 
\begin{equation}
\begin{split}
  & \mathsf{Err}_{1,1}^{\mathsf{bal}}< \mathsf{Err}_{1,1} \\
\end{split}
\end{equation}
To show the above, is equivalent to showing 

\[(-\|\mu\|^2 + \|\psi\|^2)\frac{1}{\sqrt{\|\mu\|^2 + \|\psi\|^2}} > -\|\mu\|\]
Suppose $\|\psi\|\geq \|\mu\|$, then the LHS is non negative and RHS is negative. Thus the claim is true in that case. 

Suppose $\|\psi\| < \|\mu\|$, then both the LHS and RHS are negative. As a result, we want to show that 
\begin{equation}
\begin{split}
 &  \frac{ (-\|\mu\|^2 + \|\psi\|^2)^2}{\|\mu\|^2 + \|\psi\|^2} < \|\mu\|^2 
\end{split}
\end{equation}
Further simplification yields
\begin{equation}
\begin{split}
 &  \frac{ (-\|\mu\|^2 + \|\psi\|^2)^2}{\|\mu\|^2 + \|\psi\|^2} < \|\mu\|^2 \iff \|\psi\|^2(\|\psi\|^2 - 3\|\mu\|^2) <0 
\end{split}
\end{equation}

Since  $\|\psi\| < \|\mu\|$, the above condition is satisfied. 
\end{proof}

\weibullgroupd*
\begin{proof}
We fix an $\eta$ and  write the optimal solution for the $\eta$ as 
\begin{equation}
    \begin{split}
      &  \alpha^*(\eta) = \\ &\argmin_{\alpha \in [-\eta, \eta], \sigma \in \{-1, 1\}, \|\gamma\||=\sqrt{1-\eta^2}} \sup_{x \in A_{\mu}} \Big\{(\alpha \hat{\mu} + \sigma \beta \hat{\psi} + \gamma^{\top}\hat{\Gamma})^{\top}  (x - \mu) + \\ & \sup_{x \in -B_{\mu}} (\alpha \hat{\mu} + \sigma \beta \hat{\psi} + \gamma^{\top}\hat{\Gamma})^{\top}  (x - \mu) \Big\}
    \end{split}
\end{equation}
where $\beta = \sqrt{1 - \alpha^2}$. We next consider a further split of the positive class into the majority and minority groups -- $A^M_{\mu}$ and $A^m_{\mu}$.  This means that:  
\[ \sup_{x \in A_{\mu}} \boldsymbol{v}^{\top} x = \max \left( \sup_{x \in A^M_{\mu}} \boldsymbol{v}^{\top} x, \sup_{x \in A^m_{\mu}} \boldsymbol{v}^{\top} x \right), \]
and hence, we can write:
\begin{equation*}
\begin{split}
    &  \sup_{x \in A_{\mu}} ((\alpha \hat{\mu} + \sigma \beta \hat{\psi} + \gamma^{\top}\hat{\Gamma})^{\top}  (x - \mu)\\
    = & \max \left(  \sup_{x \in A^M_{\mu}} ((\alpha \hat{\mu} + \sigma \beta \hat{\psi} + \gamma^{\top}\hat{\Gamma})^{\top}  (x - \mu),  \sup_{x \in A^m_{\mu}} ((\alpha \hat{\mu} + \sigma \beta \hat{\psi} + \gamma^{\top}\hat{\Gamma})^{\top}  (x - \mu) \right) \\
    & \max \Bigg( \sup_{x \in A^M_{\mu, \psi}} ((\alpha \hat{\mu} + \sigma \beta \hat{\psi} + \gamma^{\top}\hat{\Gamma})^{\top}  x - \alpha \|\mu\| + \\ 
    & \sigma \beta \|\psi\|, \sup_{x \in A^m_{\mu, \psi}} ((\alpha \hat{\mu} + \sigma \beta \hat{\psi} + \gamma^{\top}\hat{\Gamma})^{\top}  x - \alpha \|\mu\| - \sigma \beta \|\psi\| \Bigg)
\end{split}
\end{equation*}
and a similar expression will hold for $B_{\mu}$. 
 We look at two cases -- $\sigma = 1$ and $-1$.
\paragraph{Case 1: $\sigma = 1$.}  Here, the SVM objective becomes: 
\begin{equation*}
\begin{split}
&    \min_{\alpha, \gamma} \bigg\{ \max( f_1(\alpha, \gamma) - \alpha \|\mu\| + \beta \|\psi\|, f_2(\alpha, \gamma) - \alpha \|\mu\| - \beta \|\psi\|) + \\ 
&    \max( f_3(\alpha, \gamma) - \alpha \|\mu\| + \beta\|\psi\|, f_4(\alpha, \gamma) - \alpha \|\mu\| - \beta\|\psi\|)\bigg\}
\end{split}    
\end{equation*}
	where $f_1(\alpha, \gamma) =  \sup_{x \in A^M_{\mu, \psi}} (\alpha \hat{\mu} + \sigma \beta \hat{\psi}  + \gamma^{\top}\hat{\Gamma})^{\top} x$, $f_2(\alpha, \gamma) =   \sup_{x \in A^m_{\mu, \psi}} (\alpha \hat{\mu} + \sigma \beta \hat{\psi}+ + \gamma^{\top}\hat{\Gamma})^{\top} x$, $f_3(\alpha, \gamma) = \sup_{x \in -B^M_{\mu, \psi}} (\alpha \hat{\mu} + \sigma \beta \hat{\psi} + \gamma^{\top}\hat{\Gamma})^{\top} x$, $f_4(\alpha, \gamma) =   \sup_{x \in -B^m_{\mu, \psi}} (\alpha \hat{\mu} + \sigma \beta \hat{\psi} + \gamma^{\top}\hat{\Gamma})^{\top} x$. From conditions on the majority and the minority class, and the Concentration Condition with probability $\geq 1-4\delta$, 
\begin{equation}
f_1(\alpha, \gamma), f_3(\alpha, \gamma) \in  [X_{\max}(p, \delta, d) - c(p, \delta,d),   X_{\max}(p, \delta, d) + C(p, \delta, d)], 
\end{equation}
and also:
\begin{equation}
f_2(\alpha, \gamma), f_4(\alpha, \gamma) \in [X_{\max}(m, \delta, d) - c(m, \delta, d),   X_{\max}(m, \delta, d) + C(m, \delta, d)]
\end{equation}

We now look at two possible cases for $\alpha$ to determine what the inside maximum will look like. The first case is for large $\beta$ -- where $$\beta \geq \frac{C(m, \delta, d) - c(n, \delta,d) - (X_{max}(p, \delta, d) - X_{\max}(m, \delta,d))}{2\|\psi\|}$$ and the objective simplifies to. 


\[ F(\alpha) = \min_{\alpha} f_1(\alpha, \gamma) + f_3(\alpha, \gamma) - 2 \alpha \|\mu\| + 2 \beta \|\psi\|, \]

\begin{equation}
	    \begin{split}& \min_{\alpha \in [-\eta, \eta], \|\gamma\|=\sqrt{1-\eta^2}} f_1(\alpha, \gamma) + f_3(\alpha, \gamma) - 2 \alpha \|\mu\| + 2 \beta \|\psi\|=  \\
	      &\min_{\alpha \in [-\eta, \eta]} h(\alpha, \eta) - 2 \alpha \|\mu\| + 2 \beta \|\psi\|
	    \end{split}
	\end{equation}
	where $h(\alpha,\eta) = \min_{\|\gamma\|=\sqrt{1-\eta^2}} f_1(\alpha, \gamma) + f_3(\alpha, \gamma)$. 
	Observe that  $h(\alpha, \eta) \in [2X_{\max}(p, \delta, d) - 2c(p, \delta,d),   2X_{\max}(p, \delta, d) + 2C(p, \delta, d)]$.
	From Lemma~\ref{lem:approxmax2_n}, the optimal solution $\alpha^* \geq \eta - \frac{C(p, \delta, d) + c(p, \delta,d)}{ \|\mu\| + \|\psi\| }$, with an optimal value is lower bounded by $2 X_{\max}(p, \delta, d) - 2 c(p, \delta,d) - 2 \eta\|\mu\|$. 
 
	The other case is where $\beta$ is small -- namely, $\beta \leq \frac{C(m, \delta, d) - c(p, \delta,d)}{2\|\psi\|}$ (here we use the fact that $X_{\max}(p,\delta,d)\geq X_{\max}(m,\delta,d)$. Since $\alpha^2 + \beta^2 = \eta^2$, this implies that $\alpha$ is now close to $1$ -- specifically, $\alpha \geq \sqrt{\eta^2 - \left(\frac{C(m, \delta,d) - c(p, \delta,d)}{2\|\psi\|})\right)^2} \rightarrow 1$ as $p \rightarrow \infty$ by the conditions of the theorem. This means that in both cases, the inner maximum is achieved when $\alpha$ is close to $\eta$. 

\paragraph{Case 2: $\sigma = -1$.} In this case, the SVM objective becomes:

\begin{equation*}
    \begin{split}
       &  F(\alpha) = \min_{\alpha, \gamma} \bigg\{ \max( f_1(\alpha,\gamma) - \alpha \|\mu\| - \beta \|\psi\|, f_2(\alpha, \gamma) - \alpha \|\mu\| + \beta \|\psi\|) + \\ 
       &  \max( f_3(\alpha, \gamma) - \alpha \|\mu\| - \beta \|\psi\|, f_4(\alpha, \gamma) - \alpha \|\mu\| + \beta \|\psi\|)\bigg\}
    \end{split}
\end{equation*}


We again do a case by case analysis. We say that $\beta$ is large if $$\beta \geq \beta_{\mathsf{th}}= \frac{X_{\max}(p, \delta, d) - X_{\max}(m, \delta, d) + C(p, \delta, d) + c(m, \delta, d)}{2\|\psi\|}$$  In this case, the SVM objective becomes:
	\[ F(\alpha) = f_2(\alpha, \gamma) + f_4(\alpha, \gamma) - 2 \|\mu\| \alpha + 2 \beta \|\psi\| \]
 As a result of the above, $- \sqrt{1-\beta_{\mathsf{th}}^2}\leq \alpha \leq \sqrt{1-\beta_{\mathsf{th}}^2}$.  We divide the analysis into two cases. 
 Case 1. $\eta<\sqrt{1-\beta_{\mathsf{th}}^2}$ From Lemma~\ref{lem:approxmax2_n}, the solution to this is $\alpha \geq \eta - \frac{C(m, \delta, d) + c(m, \delta, d)}{\|\mu\| + \|\psi\|} \rightarrow \eta$ as $m \rightarrow \infty$.
 Case 2. $\eta\geq \sqrt{1-\beta_{\mathsf{th}}^2}$ From Lemma~\ref{lem:approxmax2_n}, the solution to this is $\alpha \geq \sqrt{1-\beta_{\mathsf{th}}^2} - \frac{C(m, \delta, d) + c(m, \delta, d)}{\|\mu\| + \|\psi\|} \rightarrow 1$ as $m \rightarrow \infty$.

 Now we are left with analyzing the setting when $\beta$ is small, where $$\beta \leq \frac{X_{\max}(p, \delta, d) - X_{\max}(m, \delta , d) + C(p, \delta, d) + c(m, \delta, d)}{2\|\psi\|}$$ 
 
 Since $\alpha^2 + \beta^2 = \eta^2$, here $\alpha$ by definition satisfies $$\alpha \geq \sqrt{\eta^2 -  \left(\frac{X_{\max}(p, \delta,d) - X_{\max}(m, \delta,d) + C(p, \delta,d) + c(m, \delta,d)}{4\|\psi\|}\right)^2} \rightarrow \eta$$ 
  This means that in all four cases, the inner maximum is achieved when $\alpha$ is close to $\eta$. We now compare the lower bound for optimal value achieved by $\eta$ with the upper bound on the objective for $\alpha=\sqrt{1-\beta_{\mathsf{th}}^2}$ to show that $\eta$ approaches $1$ as $m\rightarrow\infty$. 

For a fixed $\eta$, $\alpha$ takes value arbitrarily close to $\eta$. A lower bound on the SVM objective when $\alpha\in [-\eta,\eta]$ and $\eta \leq \sqrt{1-\beta_{\mathsf{th}}^2}$ is 
\begin{equation*}
    \begin{split}
&        2X_{\max}(m,\delta, d)-2c(m,\delta,d) -2\eta\|\mu\| - \beta\|\psi\| =  \\ 
&        2X_{\max}(m,\delta, d)-2c(m,\delta,d) -2\eta\|\mu\| - \sqrt{2\eta \frac{C(m, \delta, d) + c(m, \delta, d)}{\|\mu\| + \|\psi\|} }\|\psi\|\leq \\
& 2X_{\max}(m,\delta, d)-2c(m,\delta,d) -2\eta\|\mu\| - \sqrt{2C(m, \delta, d) + c(m, \delta, d)\|\psi\|}
    \end{split}
\end{equation*}

  When $\alpha=\sqrt{1-\beta_{\mathsf{th}}^2}$ the SVM objective can be at most 
    $$2X_{\max}(m,\delta, d)+2C(m,\delta,d) -2\sqrt{1-\beta_{\mathsf{th}}^2}\|\mu\| + 2\beta_{\mathsf{th}}\|\psi\|$$

Comparing the above to the lower bound on the SVM objective we get
    \begin{equation}
    \begin{split}
&   2X_{\max}(m,\delta, d)-2c(m,\delta,d) -2\eta\|\mu\| - \sqrt{2 C(m, \delta, d) + c(m, \delta, d)\|\psi\|}< \\
&2X_{\max}(m,\delta, d)+2C(m,\delta,d) -2\sqrt{1-\beta_{\mathsf{th}}^2}\|\mu\| + 2\beta_{\mathsf{th}}\|\psi\|  \\ 
 &  \eta \geq  \sqrt{1-\beta_{\mathsf{th}}^2} - \frac{C(m,\delta,d) + c(m,\delta,d) +   \beta_{\mathsf{th}}\|\psi\|+  \frac{1}{2}\sqrt{2 C(m, \delta, d) + c(m, \delta, d)\|\psi\|}}{\|\mu\|}
    \end{split}
    \end{equation}
    Owing to the conditions in the theorem, as $m\rightarrow \infty$, $\eta\rightarrow 1$. 
\end{proof}

\begin{theorem}\label{thm:wge}
The ideal invariant classifier $\hat{\mu}$ achieves the minimum worst group error.
\end{theorem}
\begin{proof}
    We write down the error expressions for the four groups as follows. We consider a general classifier $w^{\top}x +b$, where $\|w\|=1$. 
\begin{equation}
\begin{split}
&      \mathsf{Err}_{1,-1} = \mathbb{P}(w^{\top}X + b\leq 0 | X \sim D(\mu-\psi)) \\ 
&      \mathsf{Err}_{1,-1} = F_W(w^{\top} (\psi-\mu)  - b)
\end{split}
\end{equation}

\begin{equation}
\begin{split}
&      \mathsf{Err}_{1,1} = \mathbb{P}(w^{\top}X + b\leq 0 | X \sim D(\mu+\psi)) \\ 
&      \mathsf{Err}_{1,1} = F_W(w^{\top} (-\psi-\mu)  - b)
\end{split}
\end{equation}

\begin{equation}
\begin{split}
&      \mathsf{Err}_{-1,1} = \mathbb{P}(w^{\top}X + b\geq 0 | X \sim D(-\mu+\psi)) \\ 
& =\mathbb{P}(w^{\top}(\tilde{X} -\mu + \psi) + b\geq 0 | \tilde{X} \sim D(0)) \\ 
& =\mathbb{P}(w^{\top}\tilde{X} \geq w^{\top}(\mu - \psi) - b | \tilde{X} \sim D(0)) \\ 
& =F_W( w^{\top}( \psi -\mu) + b ) \\ 
\end{split}
\end{equation}
In the above simplification, we exploit the fact that $\tilde{X}$ is symmetric and as a result the distribution of $\tilde{X}$ is same as $-\tilde{X}$.
\begin{equation}
\begin{split}
&      \mathsf{Err}_{-1,-1} = \mathbb{P}(w^{\top}X + b\geq 0 | X \sim D(-\mu-\psi)) \\  
& =F_W( w^{\top}( -\psi -\mu) + b ) \\ 
\end{split}
\end{equation}
\begin{itemize}
\item Case 1. $w^{\top}\psi \geq 0$, $b\geq 0$. In this case, observe that $ \mathsf{Err}_{-1,1}$ achieves the worst group error. Observe that $F_W$ is monotonic in $b$ so $b=0$ is optimal. Now we want to minimize $F_W( w^{\top}( \psi -\mu)$ subject to $\|w\|=1$ and $w^{\top}\psi \geq 0$. The first term takes smallest value when $w^{\top}( \psi)=0$ and second term takes smallest value when $-w^{\top}\mu =-\|\mu\|$. If $w= \hat{\mu}$, then both constraints are simultaneously satisfied as $\mu \perp \psi$. The error achieved as a result is $F_W(-\|\mu\|)$
\item Case 2. $w^{\top}\psi \leq 0$, $b\geq 0$. In this case, observe that $ \mathsf{Err}_{-1,-1}$ achieves the worst group error. Observe that $F_W$ is monotonic in $b$ so $b=0$ is optimal. Now we want to minimize $F_W( w^{\top}( -\psi -\mu)$ subject to $\|w\|=1$ and $w^{\top}\psi \leq 0$. The first term takes smallest value when $w^{\top}( \psi)=0$ and second term takes smallest value when $-w^{\top}\mu =-\|\mu\|$. If $w= \hat{\mu}$, then both constraints are simultaneously satisfied as $\mu \perp \psi$. The error achieved as a result is $F_W(-\|\mu\|)$
\item Case 3. $w^{\top}\psi \geq 0$, $b\leq 0$. In this case, observe that $ \mathsf{Err}_{1,-1}$ achieves the worst group error. Observe that $F_W$ is monotonic in $b$ so $b=0$ is optimal. Now we want to minimize $F_W( w^{\top}( \psi -\mu)$ subject to $\|w\|=1$ and $w^{\top}\psi \geq 0$. The first term takes smallest value when $w^{\top}( \psi)=0$ and second term takes smallest value when $-w^{\top}\mu =-\|\mu\|$. If $w= \hat{\mu}$, then both constraints are simultaneously satisfied as $\mu \perp \psi$. The error achieved as a result is $F_W(-\|\mu\|)$
\item Case 4. $w^{\top}\psi \leq 0$, $b\leq 0$. In this case, observe that $ \mathsf{Err}_{1,1}$ achieves the worst group error. Observe that $F_W$ is monotonic in $b$ so $b=0$ is optimal. Now we want to minimize $F_W( w^{\top}( -\psi -\mu)$ subject to $\|w\|=1$ and $w^{\top}\psi \leq 0$. The first term takes smallest value when $w^{\top}( \psi)=0$ and second term takes smallest value when $-w^{\top}\mu =-\|\mu\|$. If $w= \hat{\mu}$, then both constraints are simultaneously satisfied as $\mu \perp \psi$. The error achieved as a result is $F_W(-\|\mu\|)$
\end{itemize}
Therefore, $F_{W}(-\|\mu\|)$ is the lowest value for the error and is achieved by  $w= \hat{\mu}$. In fact, if the cdf of $F_W$ is strictly increasing, then $w= \hat{\mu}$ is the unique optimal solution.
\end{proof}

\section{Supplementary Materials for Empirical Findings}
\label{sec:supp_exp}

\subsection{Training details for the experiments}
The training procedure consists of two steps. We use the training strategy very similar to that in \citep{kirichenko2022last}. We process CelebA and Waterbirds dataset using the procedure used in \citep{idrissi2021simple}. We train in Pytorch using the same environment from \citep{idrissi2021simple} provided at \url{https://github.com/facebookresearch/BalancingGroups}. 

We first explain training of ERM and SS.  
\begin{itemize}
\item \textbf{Feature Learning} We take a pretrained ResNet-50 and fine tune a fresh linear layer on the target data (Waterbirds or CelebA). We use Adam optimizer with a learning rate of $10^{-4}$ and a weight decay of $10^{-3}$ and train for $10$ epochs with a batch size of $128$.  
\item \textbf{Linear Layer Learning} In this step, we train a fresh linear layer. The only difference between ERM and SS is that SS is trained on a balanced dataset obtained by subsampling. We use Adam optimizer with a learning rate of $10^{-2}$ and train for $100$ epochs with a batch size of $128$. 
\end{itemize}

In ERM-PCA and SS-PCA the first step is exactly the same. Before the second step of linear layer learning, we carry out PCA on the representations input to the last linear layer and retain the first four components as they explain $99$ percent of variance in the data. After this we carry out the second step with same parameters as above.

\subsection{Supplementary figures}

\begin{figure}[!h]
\centering
\includegraphics[width=3.5in]{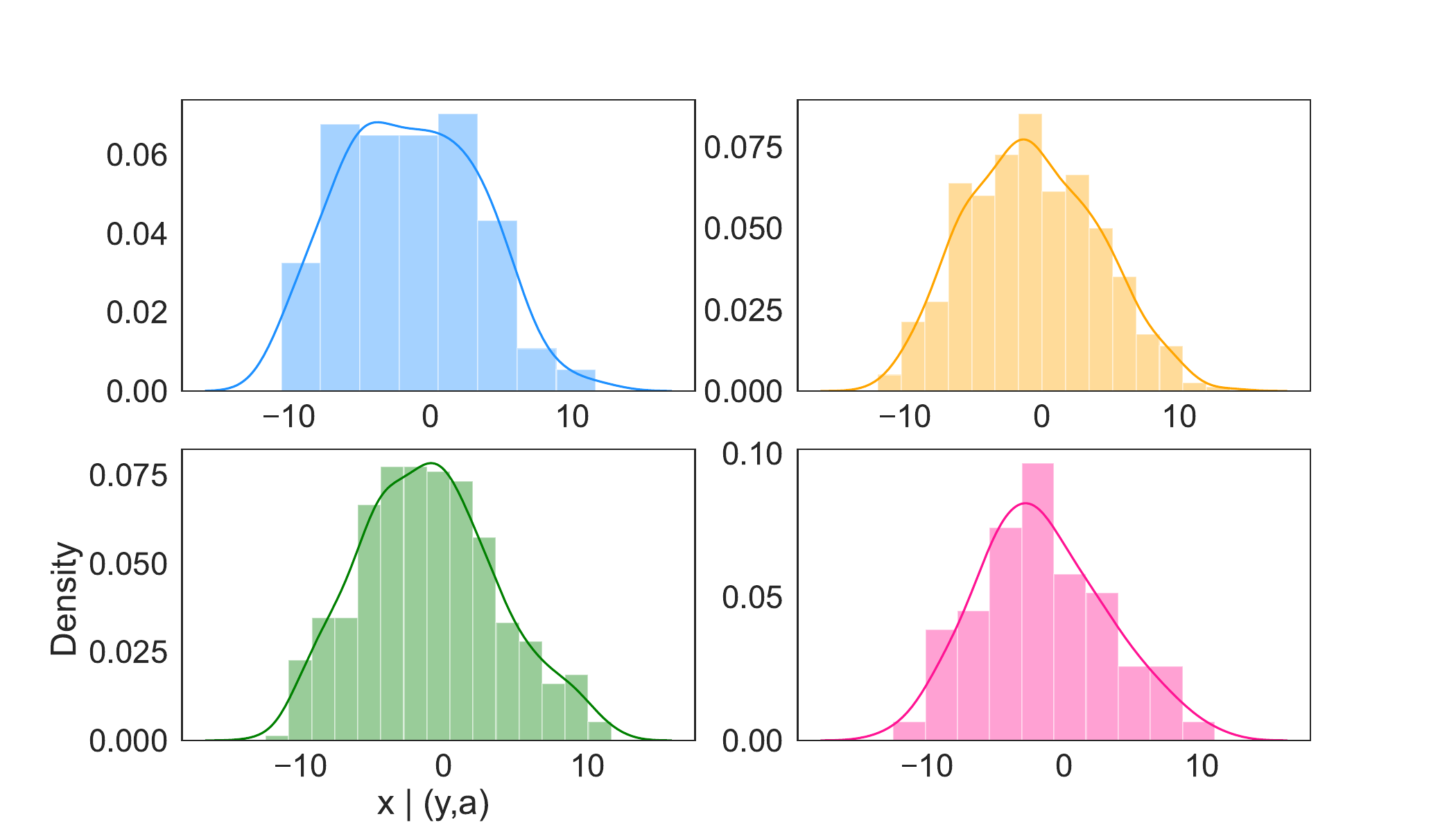}
          \caption{Waterbirds: Distribution of the second highest PCA feature.}
          \label{fig:tails_wb2}
\end{figure}

\begin{figure}[!h]
\centering
\includegraphics[width=3.5in]{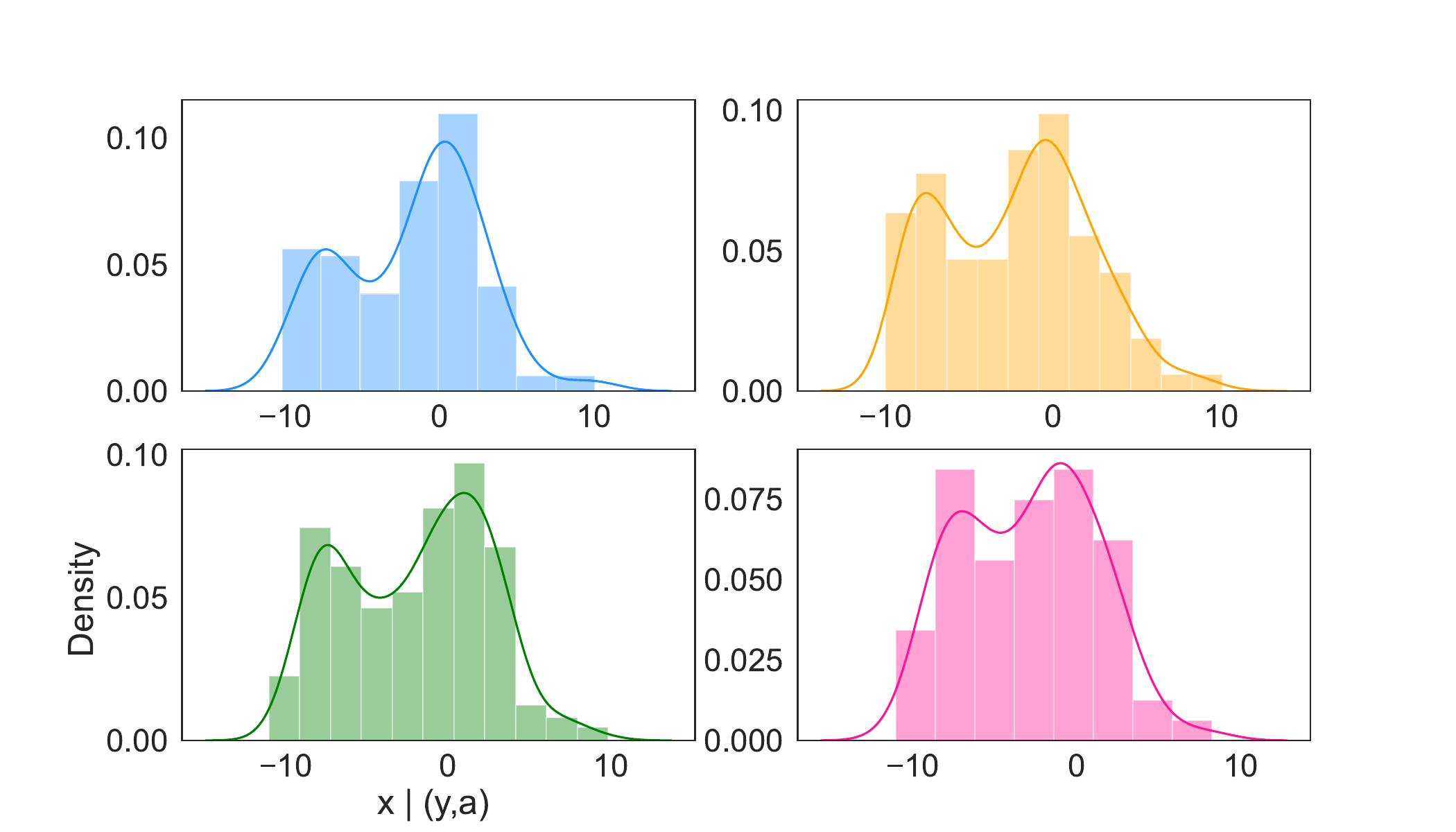}
          \caption{Waterbirds: Distribution of the third highest PCA feature.}
          \label{fig:tails_wb3}
\end{figure}

\begin{figure}[!h]
\centering
\includegraphics[width=3.5in]{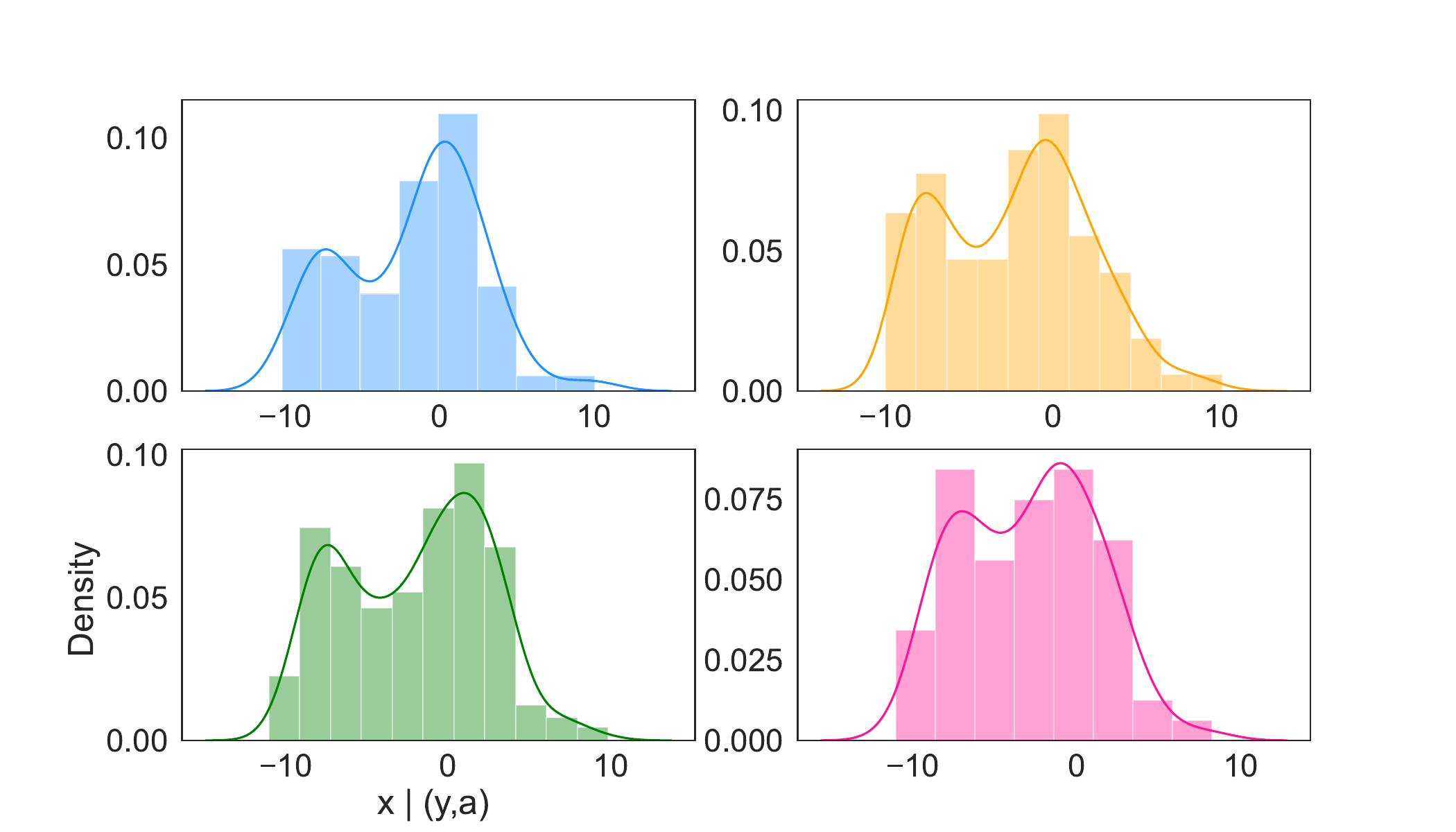}
          \caption{Waterbirds: Distribution of the fourth highest PCA feature}
          \label{fig:tails_wb4}
\end{figure}

\begin{figure}[!h]
\centering
\includegraphics[width=3.5in]{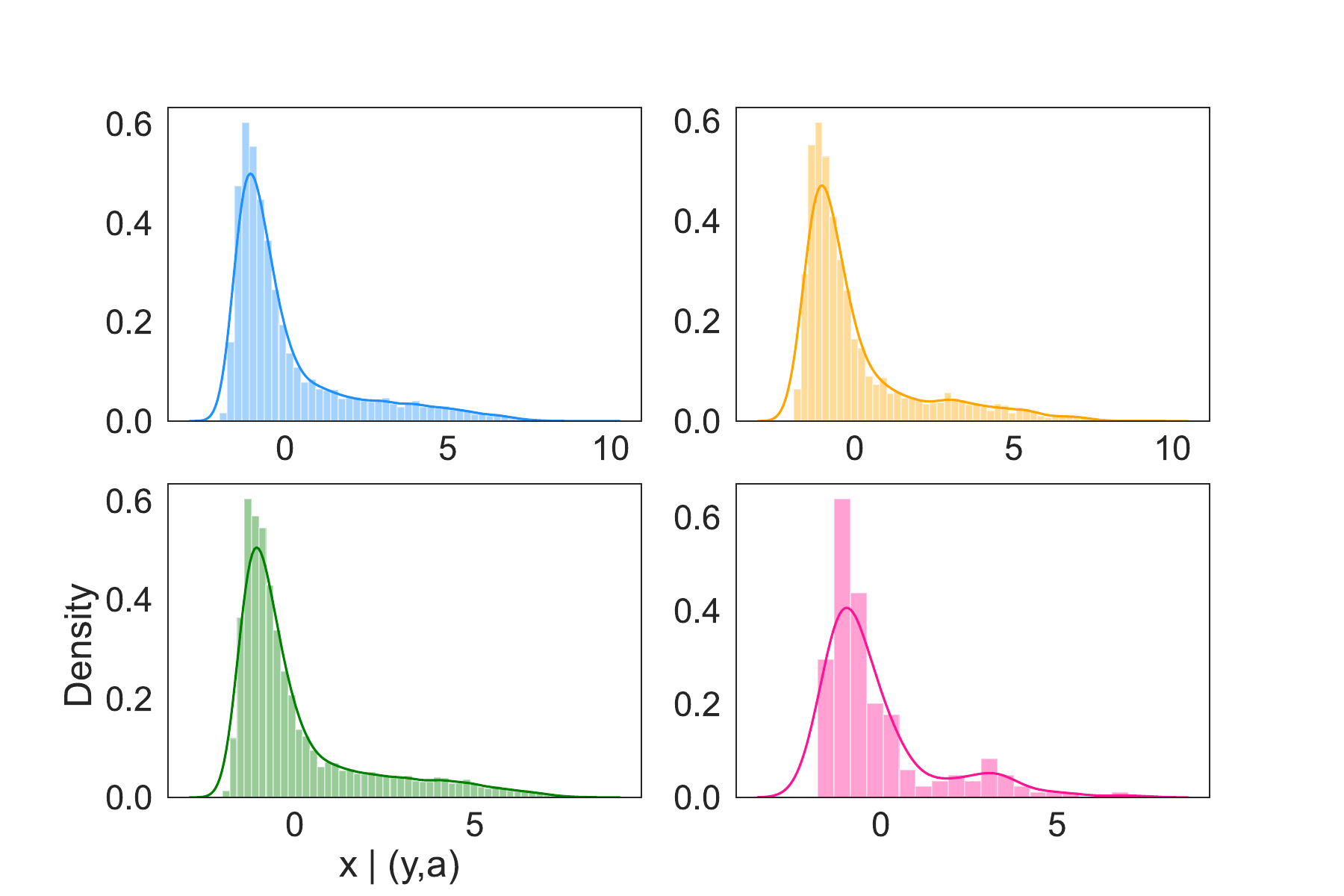}
          \caption{CelebA: Distribution of the highest PCA feature.}
          \label{fig:tails_cb1}
\end{figure}

\begin{figure}[!h]
\centering
\includegraphics[width=3.5in]{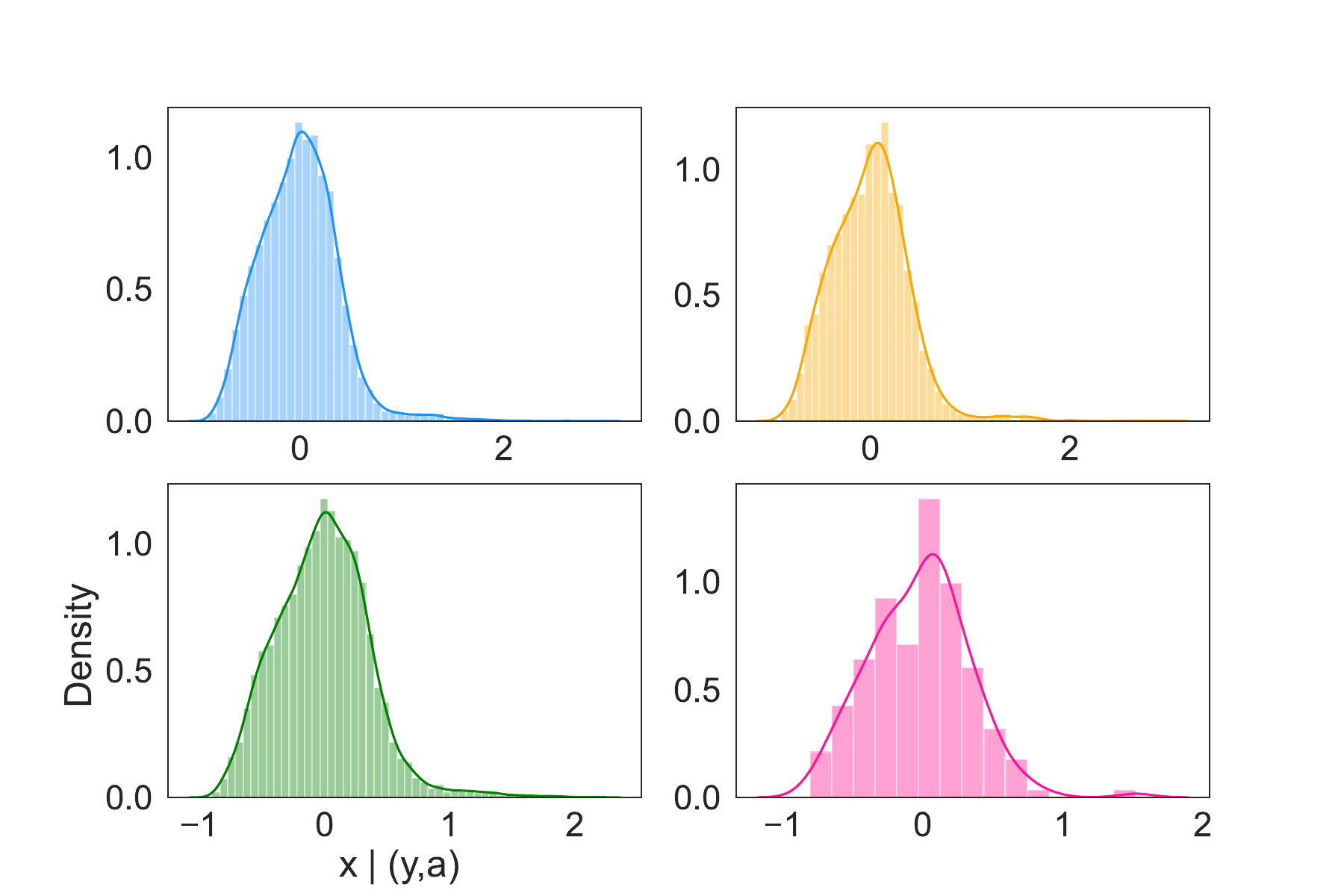}
          \caption{CelebA: Distribution of the second highest PCA feature.}
          \label{fig:tails_cb2}
\end{figure}

\begin{figure}[!h]
\centering
\includegraphics[width=3.5in]{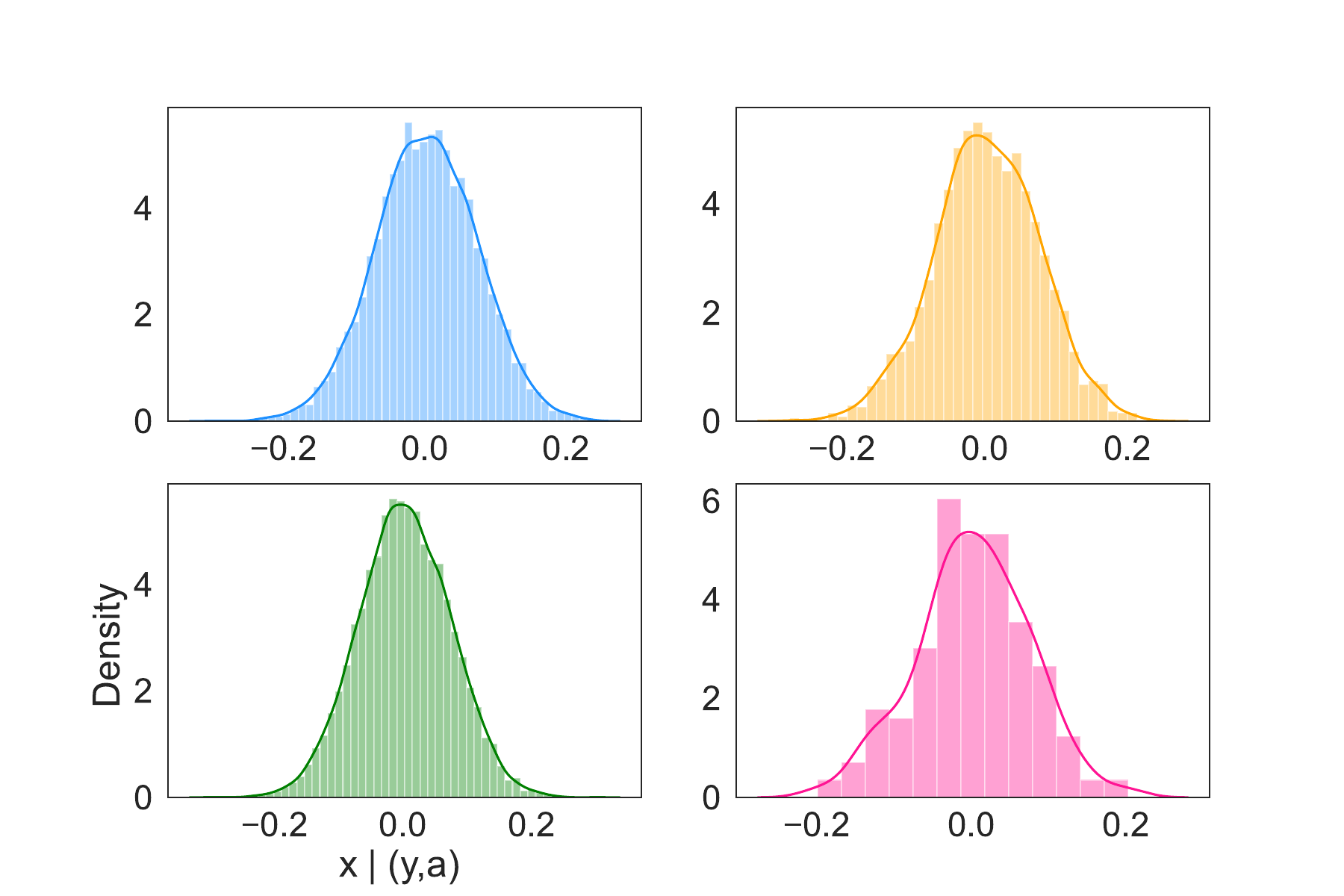}
          \caption{CelebA: Distribution of the third highest PCA feature.}
          \label{fig:tails_cb3}
\end{figure}

\begin{figure}[!h]
\centering
\includegraphics[width=3.5in]{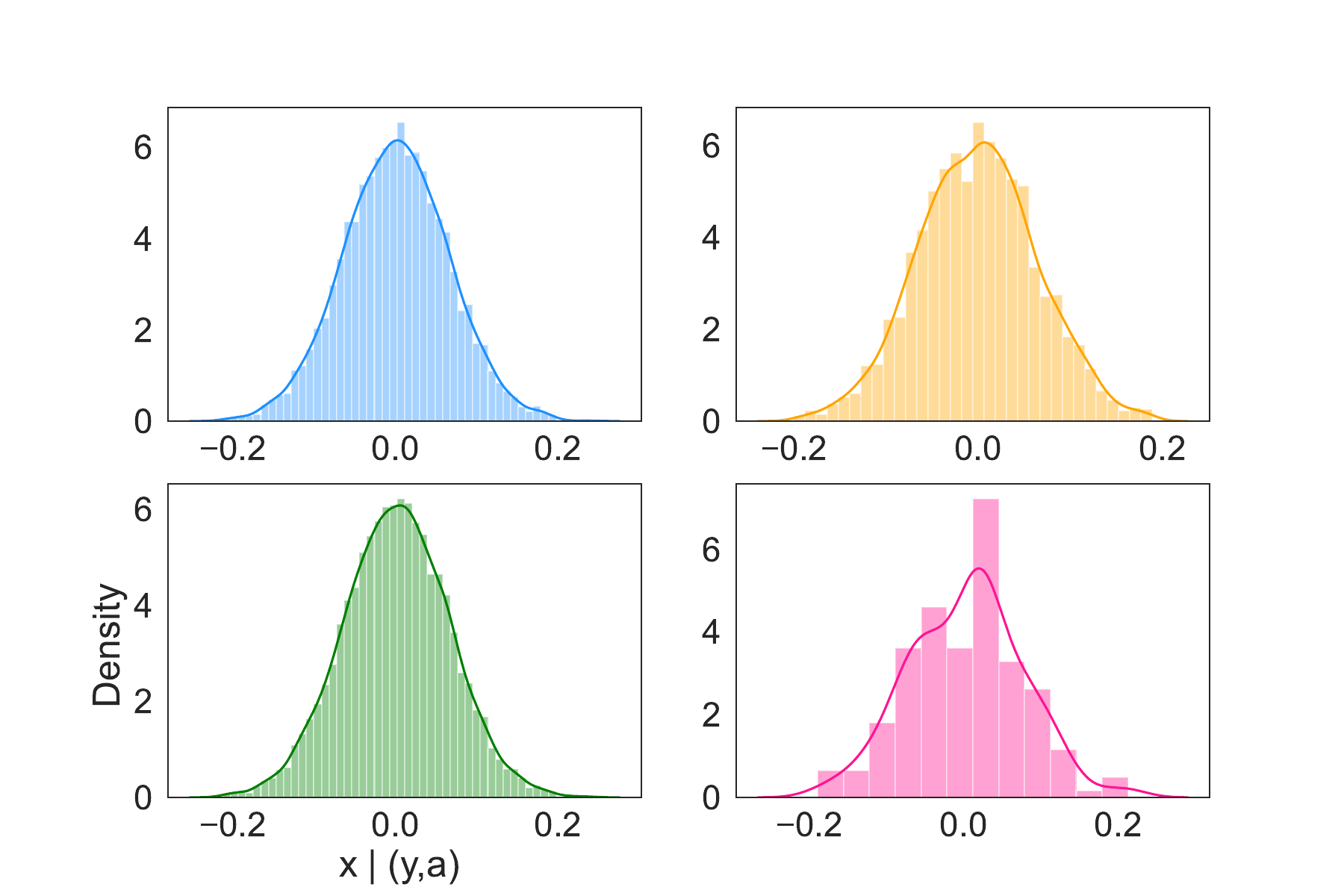}
          \caption{CelebA: Distribution of the fourth highest PCA feature.}
          \label{fig:tails_cb4}
\end{figure}

\end{document}